\newcommand*{\diffdchar}{\mathrm{d}}    
\DeclareRobustCommand\onedot{}
\def\ie{\emph{i.e}\onedot}
\def\wrt{\emph{w.r.t }\onedot}
\def\ow{\emph{o.w}\onedot}
\newcommand{\st}{\mathrm{s.t\onedot}}
\def\eqref#1{(\ref{#1})}
\def\1{\bm{1}}
\def\veps{{\varepsilon}}
\def\ra{{\textnormal{a}}}
\def\rva{{\mathrm{a}}}
\def\rvi{{\mathrm{i}}}
\def\rvj{{\mathrm{j}}}
\def\rvx{{\mathrm{x}}}
\def\rvy{{\mathrm{y}}}
\DeclareMathAlphabet{\mathsfit}{\encodingdefault}{\sfdefault}{m}{sl}
\SetMathAlphabet{\mathsfit}{bold}{\encodingdefault}{\sfdefault}{bx}{n}
\def\cA{{\mathcal{A}}}
\def\cF{{\mathcal{F}}}
\def\cN{{\mathcal{N}}}
\def\cP{{\mathcal{P}}}
\def\cX{{\mathcal{X}}}
\def\cY{{\mathcal{Y}}}
\def\bE{{\mathbb{E}}}
\def\bP{{\mathbb{P}}}
\def\bR{{\mathbb{R}}}
\newcommand{\tdf}{\tilde{f}}
\newcommand{\Ber}{\operatorname{Bern}}
\newcommand{\haty}{\hat{\rvy}}
\DeclareMathOperator{\sign}{sign}
\definecolor{mycolor}{HTML}{226CE0}
\newtheorem{theorem}{Theorem}
\newtheorem{example}{Example}
\newtheorem{corollary}[theorem]{Corollary}
\newtheorem{lemma}[theorem]{Lemma}
\newtheorem{assumption}{Assumption}
\newtheorem{proposition}[theorem]{Proposition}
\newtheorem{remark}{Remark}
\newtheorem{case}{Case}
\newcommand{\norm}[1]{\left\lVert#1\right\rVert}
\newcolumntype{C}[1]{>{\Centering\arraybackslash}p{#1\linewidth}}
\newcolumntype{L}[1]{>{\RaggedRight\arraybackslash}p{#1\linewidth}}
\DeclareRobustCommand\onedot{}
\def\ie{\emph{i.e.}\onedot}
\def\wrt{\emph{w.r.t }\onedot}
\newcommand{\cmark}{\ding{51}}%
\newcommand{\xmark}{\ding{55}}%
\newcommand{\fflfedavg}{LFT+\textsc{FedAvg}}
\newcommand{\ufl}{LFT+Ensemble}
\newcommand{\fedfb}{\textsc{FedFB}}
\newcommand{\fb}{\textsc{FB}}
\newcommand{\fairbatch}{\textsc{FairBatch}}
\newcommand{\ditto}{\textsc{Ditto}}
\newcommand{\qffl}{\textsc{q-FFL}}
\newcommand{\gifair}{\textsc{GIFAIR}}
\newcommand{\fedavg}{\textsc{FedAvg}}
\newcommand{\fairfed}{\textsc{FairFed}}
\newcommand{\agnosticfair}{\textsc{AgnosticFair}}
\newcommand{\fuflm}{f_{\boldsymbol{\veps}}^{\text{LFT+Ensemble}}}
\newcommand{\ffflm}{f_{\boldsymbol{\veps}}^{\text{LFT+FedAvg}}}
\newcommand{\fufl}{f_{\veps_0, \veps_1}^{\text{LFT+Ensemble}}}
\newcommand{\fffl}{f_{\veps_0, \veps_1}^{\text{LFT+FedAvg}}}
\newcommand{\fcfl}{f_{\veps}^{\text{CFL}}}
\newcommand{\lbdcfl}{\lambda_{\veps}^{\text{CFL}}}
\newcommand{\lbdufl}{\lambda_{\veps_i}^{\text{LFT+Ensemble}_i}}
\newcommand{\lbduflo}{\lambda_{\veps_0}^{\text{LFT+Ensemble}_0}}
\newcommand{\lbdufll}{\lambda_{\veps_1}^{\text{LFT+Ensemble}_1}}
\newcommand{\md}{\text{MD}}
\newcommand{\disp}{\text{DP Disp}}
\newcommand{\set}[1]{\left\{#1\right\}}
\newcommand{\indicator}[1]{\llbracket#1\rrbracket}
\newcommand{\fuflp}{\Tilde{f}_{\Tilde{\veps}_0, \Tilde{\veps}_1}^{\text{LFT+Ensemble}}}
\newcommand{\lbduflop}{\Tilde{\lambda}_{\Tilde{\veps}_0}^{\text{LFT+Ensemble}_0}}
\newcommand{\lbdufllp}{\Tilde{\lambda}_{\Tilde{\veps}_1}^{\text{LFT+Ensemble}_1}}
\newcommand{\tg}{\Tilde{g}}
\newcommand{\tpsi}{\Tilde{\psi}}
\newcommand{\tveps}{\Tilde{\veps}}
\newcommand{\tdelta}{\Tilde{\delta}}
\title{Improving Fairness via Federated Learning}
\date{} 					
\author{Yuchen~Zeng, ~~Hongxu~Chen,~~Kangwook~Lee \\
University of Wisconsin-Madison\\
}
\begin{document}
\maketitle

\begin{abstract}
Recently, lots of algorithms have been proposed for learning a fair classifier from decentralized data. However, many theoretical and algorithmic questions remain open.
First, is federated learning necessary, i.e., can we simply train locally fair classifiers and aggregate them? 
In this work, we first propose a new theoretical framework, with which we demonstrate that federated learning can strictly boost model fairness compared with such non-federated algorithms. 
We then theoretically and empirically show that the performance tradeoff of \fedavg{}-based fair learning algorithms is strictly worse than that of a fair classifier trained on centralized data. 
To bridge this gap, we propose \fedfb{}, a private fair learning algorithm on decentralized data. 
The key idea is to modify the \fedavg{} protocol so that it can effectively mimic the centralized fair learning.
Our experimental results show that \fedfb{} significantly outperforms existing approaches, sometimes matching the performance of the centrally trained model.
\end{abstract}


\section{Introduction}
Fair learning has recently received increasing attention. 
Various fairness notions have been introduced in the past few years~\citep{dwork2012individual,Hardt2016Equality,zafar2019preferece,Zafar2017Mistreatment,kearns2018audit,friedler2016impossibility}. 
Among various fairness notions, \textit{group fairness} \citep{Hardt2016Equality,Zafar2017Mistreatment} is the most studied one; it requires the classifier to treat different groups similarly, where groups are defined with respect to sensitive attributes such as gender and race.
One of the most commonly used group fairness notions is \textit{demographic parity}, which requires that different groups are equally likely to receive desirable outcomes.

There has been a large amount of work in training fair classifiers~\citep{Zafar2017FC,Hardt2016Equality,roh2021fairbatch}, and almost all of these studies assume that the learner has access to the entire training data.
Unfortunately, this is \emph{not} the case in many critical applications. 
For example, the government may want to train a single classifier that is fair and accurate using datasets owned by different stakeholders across the country, but the stakeholders may not be able to share their raw data with the government due to the privacy act.
This precisely sets the core question that we aim to answer in this paper -- \emph{how can we privately train a fair classifier on decentralized data?}
To answer this, consider the following approaches: Local Fair Training + Ensemble and Local Fair Training + \fedavg{}.

\begin{figure}[t]
    \centering
\begin{overpic}[width=\textwidth]{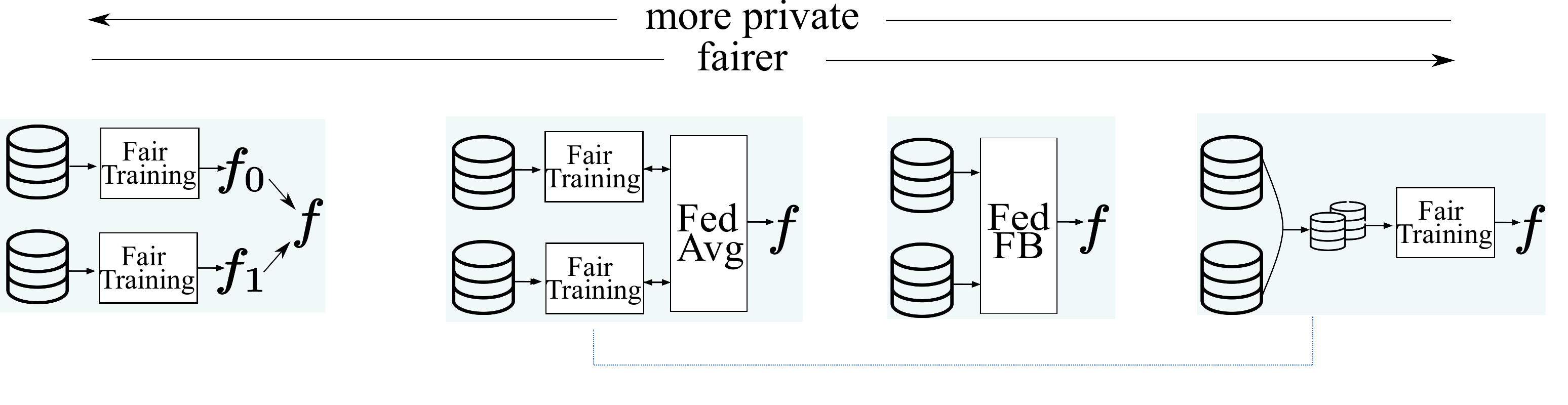}
 \put (112,60) {\large$\displaystyle \textcolor{mycolor}{<}$}
 \put (250,50) {\large$\displaystyle \textcolor{mycolor}{<}$}
 \put (280,15) {\large$\displaystyle \textcolor{mycolor}{<}$}
 
 \put (340,50) {\large$\displaystyle \textcolor{mycolor}{\approx}$}
 \put (20,90) {\normalsize \ufl{}}
 \put (160,90) {\normalsize{\fflfedavg{}}}
  \put (280,90) {\normalsize{FedFB (Ours)}}
 \put (375,90) {\normalsize{CFL (Upper Bound)}}
\put (100,50) {\normalsize\textcolor{mycolor}{Sec.~\ref{sec:necessity}}}
\put (98,40) {\normalsize\textcolor{mycolor}{(Thm.~\ref{thm:uflvsffl})}}

\put (240,40) {\normalsize\textcolor{mycolor}{Sec.~\ref{sec:experiments}}}
\put (335,40) {\normalsize\textcolor{mycolor}{Sec.~\ref{sec:experiments}}}

 \put (250,0) {\normalsize\textcolor{mycolor}{Sec.~\ref{sec:4.1} (Lemma~\ref{lemma:ffl_limit})}}
\end{overpic}
  \caption{\textbf{A high-level illustration of various approaches to fair learning on decentralized data and our contributions.} Assuming two data owners, from left to right, we show \ufl{} (Local Fair Training + Ensemble), \fflfedavg{} (Local Fair Training +  \fedavg{}), \fedfb{} (ours), and CFL (Centralized Fair Learning). In \ufl{}, each data owner trains a locally fair model on its own data and we assemble the local models to obtain a global model. 
  \fflfedavg{} applies \fedavg{} together with off-the-shelf fair training algorithms for local training. 
  Our proposed solution FedFB consists of a modified \fedavg{} protocol with a custom-designed fair learning algorithm. 
  CFL is the setting where a fair model is trained on the pooled data. 
  In this work, we theoretically characterize the strict ordering between the existing approaches and empirically demonstrate the superior performance of FedFB.}\label{fig:fldemo}
\end{figure}

\paragraph{Local Fair Training + Ensemble (\ufl{})} 
One na\"ive yet the simplest way to learn fair classifiers from decentralized data without violating privacy policy is training locally fair classifiers and assembling them. 
We call this strategy \ufl{}. 
This is more private than any data sharing schemes or federated learning approaches as only the fully trained models are shared just once.


\paragraph{Local Fair Training +  \fedavg{} (\fflfedavg{})}
\fflfedavg{} applies \emph{federated learning}~\citep{konevcny2016federated} together with existing fair learning algorithms. 
Federated learning is a distributed learning framework, using which many data owners can collaboratively train a model under the orchestration of a central server while keeping their data decentralized.
For instance, under \fedavg{}, the de facto aggregation protocol for federated learning, the central server periodically computes a weighted average of the locally trained model parameters.
If each data owner runs a fair learning algorithm on its own data and these locally trained models are aggregated via \fedavg{}, then we call this approach \emph{Local Fair Training +  \fedavg{}} (\fflfedavg{}). 

\paragraph{Goal and Main Contributions}
As \fflfedavg{} makes more than one round of communications between the clients and the centralized aggregator, it is reasonable to expect a better performance (fairness and accuracy) with \fflfedavg{} over \ufl{}.
However, rigorous performance analysis has not been made yet in the literature.
Inspired by this, we first study the following question:
\begin{center}
    \emph{Is federated learning needed for decentralized fair learning?}
\end{center}
 \begin{wrapfigure}{r}{0.35\textwidth}
  \begin{center}
    \includegraphics[width=0.35\textwidth]{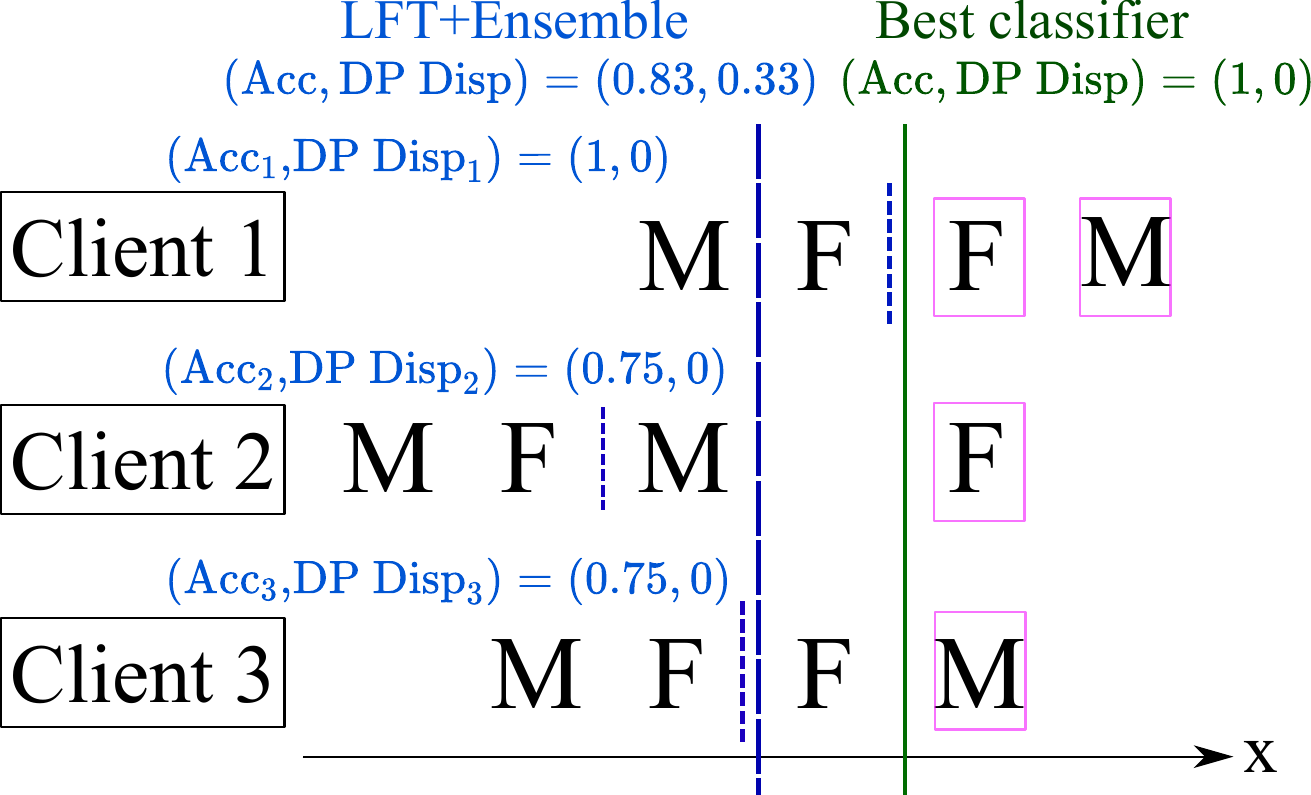}
  \end{center}
\caption{
An example illustrating the failure of \ufl{} (Local Fair Training + Ensemble).
Consider three clients with four samples of one-dimensional feature $x$. 
F and M denote female and male, respectively, and pink boxes denote positive labels.
Under \ufl{}, every client first finds a locally fair classifier on their data, which are depicted in short blue dotted lines. 
The voting ensemble model is visualized in a long blue line.
Note that this is strictly worse than the green line that is more fair and accurate. 
    }
        \label{fig:ufl}
        \vspace{-.2in}
\end{wrapfigure}
In other words, it is not clear whether there is any strict performance (fairness and accuracy) improvement by using \fflfedavg{} over \ufl{}.
If not, there is no reason to use \fflfedavg{} as \ufl{} is a more private scheme.
Our first contribution is to theoretically compare their performances and show that federated learning is indeed necessary. 
Intuitively, if the data is highly heterogeneous, an ensemble model of locally trained models will not be fair even if the consisting models are locally fair.
See Fig.~\ref{fig:ufl} for a toy example illustrating this phenomenon.
On the other hand, \fflfedavg{} might find a globally fairer model by making frequent communications, of course at the cost of privacy loss.
Another natural question follows:

    \emph{How good is \fflfedavg{} compared to the case where the whole data is centralized?}

That is, can \fflfedavg{} achieve similar performance with \emph{Centralized Fair Learning (CFL)}?
If not, can we develop a better federated learning approach?
Our second contribution is to identify the performance gap between \fflfedavg{} and CFL and to develop \fedfb{}, a new federated learning algorithm that bridges the gap.
Our major contributions are summarized as follows (see Fig.~\ref{fig:fldemo}):
\begin{itemize}[itemsep=1pt,topsep=0pt,parsep=0pt,partopsep=0pt,leftmargin=10pt]
    \item We develop a theoretical framework for analyzing approaches for decentralized fair learning.
Using this, we show the strict ordering between the existing approaches, \ie, under certain conditions, \ufl{} $<$ \fflfedavg{} $<$ CFL, w.r.t. their fairness-accuracy tradeoffs. 
To the best of our knowledge, it is the first rigorous comparison of these approaches. 
Our finding implies the necessity of federated learning, but at the same time, implies the limitation of the nai\"ve application of FedAvg together with local fair training.
    \item Leveraging the state-of-the-art algorithm for (centralized) fair learning~\citep{roh2021fairbatch}, we design \fedfb{}, a novel fairness-aware federated learning algorithm. 
    \item Via extensive experiments, we show that (1) our theoretical findings hold under more general settings, and (2) \fedfb{} significantly outperforms the existing approaches and achieves similar performance as CFL.
\end{itemize}


\section{Related work}
\paragraph{Model Fairness} Several works have studied the fundamental tradeoff between fairness and accuracy ~\citep{menon2018cost,wick2019unlock,zhao2019representation}. 
Our analysis is highly inspired by the one by \citet{menon2018cost}, which characterized the tradeoff between accuracy and fairness under the centralized setting.
Among various fair training algorithms~\citep{zemel2013learning,jiang2020biascorrecting, Zafar2017FC,Zafar2017Mistreatment,Hardt2016Equality}, our work is based on the current state of the art — FairBatch~\citep{roh2021fairbatch}, which we will detail in Sec.~\ref{sec:fedfb}

\paragraph{Federated Learning} Unlike traditional, centralized machine learning approaches, federated learning keeps the data decentralized throughout training, reducing the privacy risks involved in traditional
approaches~\citep{konevcny2016federated,McMahan17FedAvg}. 
\fedavg{}~\citep{McMahan17FedAvg} is the first and most widely used federated learning algorithm. 
The idea is to iteratively compute a weighted average of the local model parameters, with the weights proportional to the local datasets' sizes. 
Prior work~\citep{li2020on} has shown that \fedavg{} provably converges under some mild conditions.
The design of our proposed algorithm \fedfb{} is also based on that of \fedavg{}.

\paragraph{Federated Fair Learning for Group Fairness} A few very recent studies~\citep{ezzeldin2021fairfed,rodriguez2021enforcing,chu2021fedfair,du2020fairfl,cui2021ffl}, conducted concurrently with our work, aim at achieving group fairness under federated learning.
In particular, \citet{du2020fairfl}, \citet{rodriguez2021enforcing} and \citet{chu2021fedfair} mimic the centralized fair learning setting by very frequently exchanging information for each local update, not for each round of local training. 
In contrast, \fedfb{} requires much less frequent communications (one per round), resulting in higher privacy and lower communication costs.
Moreover, \citet{rodriguez2021enforcing} and \citet{chu2021fedfair} are designed specifically for certain group fairness notions (such as equal opportunity and accuracy parity) and cannot be applied for other definitions such as demographic parity.  
On the other hand, \fedfb{} is applicable for all popular group fairness notions including demographic parity, equalized odds, and equal opportunity.
Similar to \fedfb{}, \citet{ezzeldin2021fairfed} employs \fedavg{} and a reweighting mechanism, but it achieves a worse performance.

 \begin{wrapfigure}{r}{0.5\textwidth}
  \begin{center}
    \includegraphics[width=.5\textwidth]{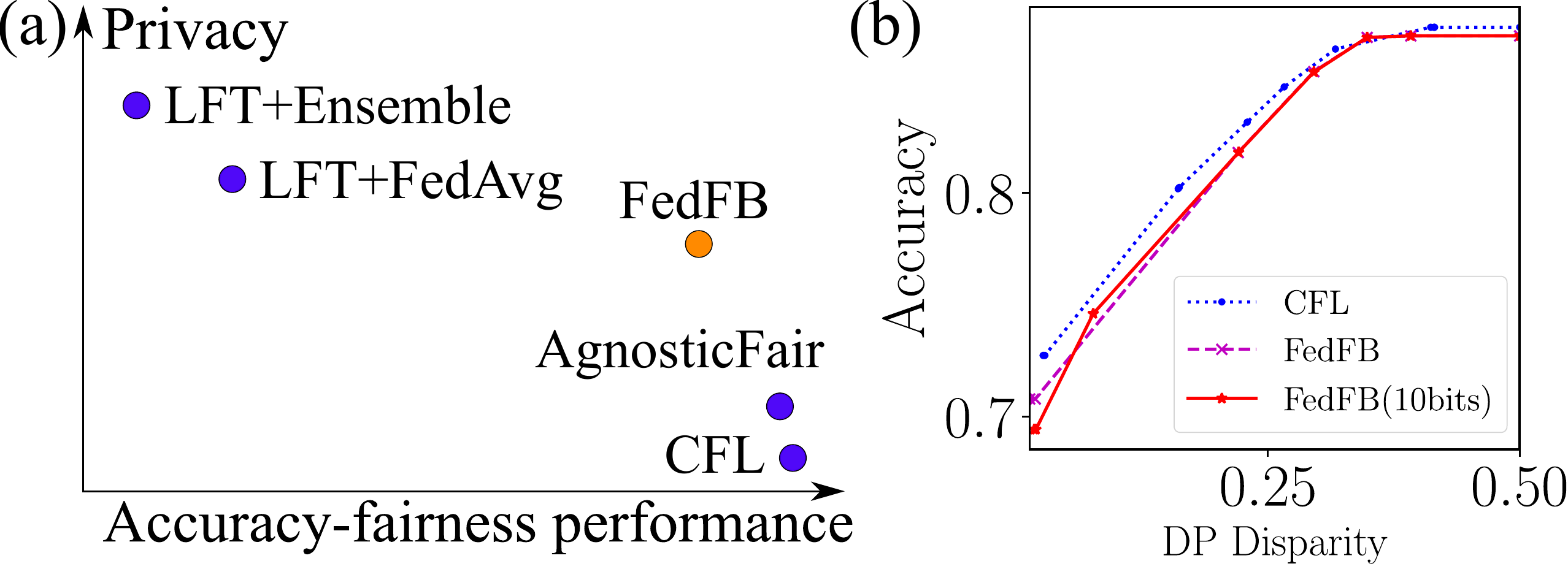}
  \end{center}
  \caption{(a) A high-level comparison of various fair learning methods. (b) Accuracy-fairness tradeoff curves on the synthetic dataset. \fedfb{} nearly matches the performance of CFL.
}\label{fig:comparison}
        \vspace{-.2in}
\end{wrapfigure}
We visualize a high-level comparison of \ufl{}, \fflfedavg{}, CFL, \fedfb{} and \agnosticfair{}~\citep{du2020fairfl} in terms of performance and privacy in Fig.~\ref{fig:comparison}(a).
Most of the work in the literature and our work focus on satisfying a single fairness requirement on the ``global'' data distribution, but there is also a recent work that aims at finding a classifier that simultaneously satisfies multiple fairness requirements, each defined with respect to each client's ``local'' data~\citep{cui2021ffl}.

\paragraph{Federated Fair Learning for Client Fairness}
The definition of fairness used in most of the existing federated learning work is slightly different from the standard notion of group fairness.
One popular definition of fairness in the federated setting is \emph{client parity (CP)}, which requires that all clients (\emph{i.e.} data owners) achieve similar accuracies (or loss values).
Several algorithms have been proposed to achieve this goal~\citep{li2021ditto,li2020fair,mohri2019agnostic,yue2021gifair,zhang2020fairfl}. 
Even though their goal is different from ours, we show that an adapted version of \fedfb{} can also achieve comparable client parity, compared with the existing algorithms proposed specifically for CP.

\section{Analysis of LFT{\small +}Ensemble \& LFT{\small +}FedAvg} \label{sec:thy}

In this section, we first show the necessity of federated learning by proving that \fflfedavg{} achieves strictly higher fairness than \ufl{}. 
We then prove the limitation of \fflfedavg{} by comparing its performance with an oracle bound of CFL (non-private). 
These two results together imply that federated learning is necessary, but there exists a limit on what can be achieved by \fedavg{}-based approaches. 
We will present informal theoretical statements, deferring the formal versions and proofs to Sec.~\ref{appendix:UFL_FFL_CFL}. 

\paragraph{Problem Setting} Denote $[N]:=\{0,1,\dots,N-1\}$ for any $N\in \mathbb{Z}^+$. We assume $I$ clients, which have the same amount of data.
We further assume a simple binary classification setting with a binary sensitive attribute, \emph{i.e.}, $\rvx\in\cX = \bR$ is the input feature, $\rvy \in \cY = [1]$ is the outcome and $\rva \in \cA = [A] = [1]$ is the binary sensitive attribute.
Assume that $\rvx$ is a continuous variable. 
The algorithm we will develop later is applicable to general settings.

We now introduce parameters for describing the data distribution.
Let $\rvy\mid \rvx \sim \Ber(\eta(\rvx))$ for all client $i$, where $\eta(\cdot): \cX \rightarrow [0,1]$ is a strictly monotone increasing function. 
Assume $\rvx \mid \rva = a , \rvi = i \sim \cP_a^{(i)}$, $\rva \mid \rvi = i \sim \Ber(q_i)$, where $\rvi$ is the index of the client, $\cP_a^{(i)}$ is a distribution, and $q_i \in [0,1]$ for $a=0,1,i\in [I]$. Let $\cF = \{f: \cX \times \cA \rightarrow [0,1]\}$. 
Given $f\in\cF$ and data sample $(\rvx, \rva)$, we consider the following randomized classifier: $\haty \mid \rvx, \rva \sim \Ber(f(\rvx, \rva))$. 
Using these definitions, we now define \emph{demographic parity}, specialized for a binary case as 
$$\textrm{(Demographic parity)}:~\bP(\haty = 1\mid \rva = 0) = \bP(\haty = 1\mid \rva = 1).$$
To measure how \emph{unfair} a classifier is with respect to demographic parity (DP), we define \emph{DP disparity} as:
$$\text{DP Disp}(f) = |\bP(\haty = 1\mid \rva = 0) - \bP(\haty = 1 \mid \rva = 1)|.$$

\begin{figure}
    \centering
    \includegraphics[width=\textwidth]{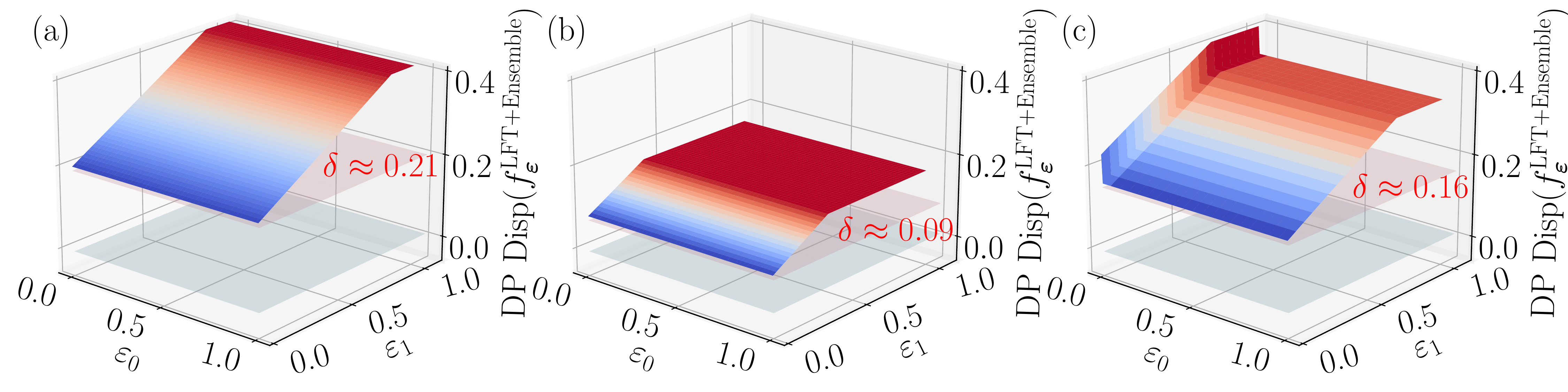}
    \caption{\textbf{Fundamental limitation of \ufl{} in terms of fairness range.}
    DP disparity of \ufl{} as a function of local unfairness budgets. 
    The blue plane is of perfect fairness, and the pink plane visualizes the value of $\delta$ (the lowest DP disparity that \ufl{} can achieve). 
    (a) (Example~\ref{example}) On a distribution that satisfies the conditions of Corollary~\ref{cor:informal_gaussian}.
    (b, c) On distributions illustrated in Sec.~\ref{exp:ufl_limit}, which do not satisfy the conditions of Corollary~\ref{cor:informal_gaussian}.
    In all three cases, the \ufl{} cannot achieve perfect fairness, \ie, $\delta > 0$. 
    }
    \label{fig:figure1}
\end{figure}

\subsection{\fflfedavg{} strictly outperforms \ufl{}}
\label{sec:necessity}

\paragraph{\ufl{} Optimization}
We now present the optimization problem solved in the \ufl{} scenario.
Here, for analytical tractability, we will assume the population limit, \emph{i.e.}, the true data distribution is used in optimization.
Recall that in this scenario, each client solves their own optimization problem to train a locally fair classifier. 
In particular, each client $i\in [I]$ first sets its own local fairness constraint $\veps_i \in [0,1]$, and solves the following problem:
\begin{align}\tag{\ufl{}($i,\veps_i$)}\label{prob:ufl}
         &\min_{f\in\cF} \bP(\haty \neq \rvy \mid \rvi = i), \\
         & \st.~|\bP(\haty = 1\mid \rva = 0, \rvi = i) - \bP(\haty = 1 \mid \rva = 1, \rvi = i)| \leq \veps_i.
\end{align}
We denote by $f_i^{\veps_i}$ the solution to~\ref{prob:ufl}. 
Let $\boldsymbol{\veps} = (\veps_0,\dots,\veps_{I-1})$.
We consider the following ensemble of $I$ locally trained classifiers:
\begin{align}
    \haty \mid \rvx, \rva \sim  \Ber\bigg(\sum_{i\in [I]}f_i^{\veps_i}(\rvx, \rva)/I \bigg) = \Ber(\fuflm),
\end{align}
where $\fuflm := \sum_{i\in [I]}f_i^{\veps_i}(\rvx, \rva)/I$. 

Since there always exists a perfectly fair classifier~\citep{menon2018cost}, one question we can ask is if $\fuflm$ can achieve an arbitrary level of fairness. 
The following lemma asserts that it cannot.
\begin{lemma}[(informal) Achievable fairness range of \ufl{}]\label{lemma:informal_ufl_limit}
Let $q_i = q$ for all $i \in [I]$, where $q\in (0,1)$ is a constant. Under certain conditions, there exists $\delta$ ($\delta >0$) such that
$\min_{\boldsymbol{\veps} \in [0,1]^I} \disp(\fuflm) > \delta.$
\end{lemma}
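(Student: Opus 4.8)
The plan is to reduce the infinite-dimensional problem over $f\in\cF$ to a finite one by characterizing each optimal local classifier as a group-wise threshold rule, then to write the ensemble's global disparity in closed form and lower-bound it uniformly over the budget vector $\boldsymbol{\veps}$. First I would pin down the structure of $f_i^{\veps_i}$. Since $\eta$ is strictly monotone increasing, the unconstrained error-minimizer is the Bayes threshold rule $\indicator{\rvx > x^*}$ with $x^*=\eta^{-1}(1/2)$, common to both groups. Adapting the centralized fairness--accuracy characterization of \citet{menon2018cost} to the DP constraint $|\bP(\haty=1\mid\rva=0,\rvi=i)-\bP(\haty=1\mid\rva=1,\rvi=i)|\le\veps_i$, the constrained error-minimizer is a group-wise threshold rule (with at most boundary randomization), so $f_i^{\veps_i}(\cdot,a)$ is determined by a single threshold $t_a^{(i)}$, equivalently by the local acceptance rate $\alpha_a^{(i)}=\bP_{\cP_a^{(i)}}(\rvx>t_a^{(i)})$ subject to $|\alpha_0^{(i)}-\alpha_1^{(i)}|\le\veps_i$. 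This collapses each client's choice into a one-parameter family indexed by $\veps_i\in[0,1]$.

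Second I would compute the ensemble disparity. Writing $p_a^{(i)}$ for the density of $\cP_a^{(i)}$, the assumptions of equal client sizes and $q_i=q$ yield the global group-conditional law $\bar p_a=\frac1I\sum_j p_a^{(j)}$, so that
\[
\disp(\fuflm)=|R_0-R_1|,\qquad R_a=\frac1I\sum_{i\in[I]}\bar r_a^{(i)},\qquad \bar r_a^{(i)}=\int f_i^{\veps_i}(x,a)\,\bar p_a(x)\,\diffdchar x .
\]
Hence $R_0-R_1=\frac1I\sum_i g_i(\veps_i)$, where $g_i(\veps_i):=\bar r_0^{(i)}-\bar r_1^{(i)}$ is the disparity that client $i$'s locally optimal rule induces on the \emph{global} distribution. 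Each $g_i$ is continuous in $\veps_i$ on $[0,1]$, so $\boldsymbol{\veps}\mapsto\disp(\fuflm)$ is continuous on the compact cube $[0,1]^I$.

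The crux is to lower-bound $|\frac1I\sum_i g_i(\veps_i)|$ uniformly over $\boldsymbol{\veps}$. The key point is that local fairness is enforced against $\cP_a^{(i)}$ whereas the ensemble is scored against $\bar p_a$; when the two group-wise transfer maps from local to global acceptance rates differ -- which is precisely what the ``certain conditions'' on $\{\cP_a^{(i)}\}$ encode -- $g_i$ does not vanish even at $\veps_i=0$. I would show that under these conditions each $g_i(\veps_i)$ keeps a fixed sign over the whole budget range: at $\veps_i=1$ both thresholds equal $x^*$, giving $g_i(1)=\bar F_1(x^*)-\bar F_0(x^*)$ (with $\bar F_a$ the CDF of $\bar p_a$); at $\veps_i=0$ the local acceptance rates coincide while the global ones need not; and the accuracy-optimal threshold trajectory moves monotonically in $\veps_i$, so the sign is preserved in between. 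Consequently $R_0-R_1=\frac1I\sum_i g_i$ is bounded away from $0$, and by continuity on the compact cube the minimum is attained and strictly positive, so taking $\delta$ to be half of this minimum gives $\min_{\boldsymbol{\veps}}\disp(\fuflm)>\delta$. For Gaussian $\cP_a^{(i)}$ this sign/monotonicity computation becomes explicit and should recover the conditions of the referenced Gaussian corollary.

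I expect the third step to be the main obstacle: identifying the precise analytic conditions on the family $\{\cP_a^{(i)}\}$ that guarantee the sign of $g_i$ is preserved across $\veps_i\in[0,1]$, and rigorously establishing monotonicity of the accuracy-optimal thresholds in the budget (including the boundary-randomization regime). Note that the restriction to \emph{accuracy-optimal} local rules is what makes the uniform positive lower bound possible: enlarging the admissible classifiers would only make the disparity easier to drive to zero, so the threshold characterization of the first step is essential, and compactness of $[0,1]^I$ is what upgrades ``positive at every $\boldsymbol{\veps}$'' to ``bounded below by a positive $\delta$''.
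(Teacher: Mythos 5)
Your first two steps track the paper's own development: the group-wise threshold characterization is the paper's Lemma~\ref{lemma:solution}, and your identity $\disp(\fuflm)=\bigl|\tfrac1I\sum_i g(\lambda_i)\bigr|$ --- in the paper's notation, where $g=\tfrac1I\sum_j g_j$ is the global mean-difference curve and $\lambda_i$ is the parameter of client $i$'s optimal local rule --- is exactly the paper's expression for $\md(\fufl)$, grouped per client instead of per pair. The proof fails at the step you yourself flag as the main obstacle, and the failure is structural, not a matter of pinning down analytic conditions. Your target property is that each curve $\veps_i\mapsto g(\lambda_i)$ keeps a fixed sign on $[0,1]$. Each such curve is indeed monotone (your threshold-monotonicity observation gives this), ends at the common value $g(0)$ once $\veps_i\geq|g_i(0)|$, and starts at $g(g_i^{-1}(0))$. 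But for $I=2$ we have $g(g_0^{-1}(0))=\tfrac12 g_1(g_0^{-1}(0))$ and $g(g_1^{-1}(0))=\tfrac12 g_0(g_1^{-1}(0))$; since each $g_j$ is strictly increasing and vanishes at $g_j^{-1}(0)$, these two starting values have strictly opposite signs whenever $g_0^{-1}(0)\neq g_1^{-1}(0)$. Hence if $g(0)\neq 0$, exactly one of the two curves must cross zero --- monotonicity does not prevent a sign change when the endpoint values disagree. And in the degenerate alternatives the lemma's conclusion is false: if $g_0^{-1}(0)=g_1^{-1}(0)$ then $\boldsymbol{\veps}=(0,0)$ makes the ensemble exactly fair, and if $g(0)=0$ then $\boldsymbol{\veps}=(1,1)$ does. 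So per-client sign preservation can hold only in instances where no positive $\delta$ exists; it can never serve as the ``certain conditions'' of the lemma. (The same argument, applied to the clients attaining $\min_i g_i^{-1}(0)$ and $\max_i g_i^{-1}(0)$, rules it out for general $I$.)

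The fix --- and the paper's actual route --- is to impose the sign condition on the aggregate rather than on each summand. With $c=\min\{|g_0(0)|,|g_1(0)|\}$, the paper defines $\psi(\veps_0,\veps_1)$ to be the ensemble mean difference on the binding region $[0,c]^2$ (where both local constraints are active, so $g_i(\lambda_i)=\sign(g_i(0))\veps_i$), and assumes $g_0(0)g_1(0)<0$ together with $\psi(\veps_0,\veps_1)(g_0(0)+g_1(0))>0$ for all $(\veps_0,\veps_1)\in[0,c]^2$: one client's contribution may cross zero, but the cancellation across clients is never complete. The bound is then $\delta=\min_{[0,c]^2}|\psi|>0$ (compactness and continuity enter here, exactly as in your last step), and a five-case analysis over the regimes $\veps_i\lessgtr|g_i(0)|$ --- using precisely the monotonicity you establish --- shows that outside the binding region the ensemble's mean difference only moves farther from zero, extending the bound to all of $[0,1]^2$ (Lemma~\ref{lemma:ufl_limit}; the multi-client version, Lemma~\ref{lemma:ufl_limit_multi}, first aggregates the clients into two subsets). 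If you replace your per-client claim with this aggregate condition, the rest of your outline goes through.
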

Lemma~\ref{lemma:informal_ufl_limit} implies that \ufl{} fails to achieve perfect fairness even when the sensitive attribute ($\rva$) distribution is identical across the clients.
Instead, Lemma~\ref{lemma:informal_ufl_limit} only requires the insensitive attribute distribution ($\rvx$) to be highly heterogeneous across the clients.
The following corollary provides a concrete example satisfying such conditions.
\begin{corollary}[informal]\label{cor:informal_gaussian}
Let $I=2$ and $q_0 = q_1=0.5$, $\eta(x) = \frac{1}{1+\text{e}^{-x}}, \cP_a^{(i)} = \cN(\mu_a^{(i)}, \sigma^{(i)2})$, where $i = 0,1$. 
If one client has much larger variance than the other, under certain assumptions, there exists $\delta >0$ such that $\min_{\veps_0, \veps_1 \in [0,1]} \disp(\fuflm) > \delta.$
\end{corollary}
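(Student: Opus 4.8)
The plan is to treat the statement as a concrete instance of Lemma~\ref{lemma:informal_ufl_limit}: I will exhibit that the Gaussian family with a large variance gap meets the hypotheses under which the ensemble can never be made perfectly fair, and then extract the positive lower bound $\delta$ from a compactness argument. Throughout, write $\Phi$ for the standard normal CDF and, for $a\in\{0,1\}$ and clients $i,j\in\{0,1\}$, let $g_j^{(a)}(i):=\E_{\rvx\sim\cP_a^{(i)}}[f_j^{\veps_j}(\rvx,a)]$ be the acceptance rate of client $j$'s classifier on group $a$ of client $i$. First I would fix the form of the local optima: since $\eta$ is strictly increasing and the error-minimizer under a DP constraint is a group-wise threshold rule (the centralized characterization of \citet{menon2018cost}, which also underlies the earlier lemma), $f_i^{\veps_i}(\rvx,a)=\1[\rvx\ge t_a^{(i)}]$ for thresholds $t_0^{(i)},t_1^{(i)}$ that depend continuously on $\veps_i$, giving the closed form $g_j^{(a)}(i)=1-\Phi\big((t_a^{(j)}-\mu_a^{(i)})/\sigma^{(i)}\big)$.

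Using $q_0=q_1$ and equal client sizes, $\bP(\rvi=i\mid\rva=a)=1/2$, so the ensemble's group acceptance rates average cleanly across clients and
\begin{equation}
\disp(\fuflm)=\tfrac14\Big|\,\textstyle\sum_{i,j}\big(g_j^{(0)}(i)-g_j^{(1)}(i)\big)\Big|.
\end{equation}
Because each $g_j^{(a)}(i)$ is a continuous function of the (continuously varying) thresholds, $\disp(\fuflm)$ is continuous on the compact box $[0,1]^2$; hence its minimum is attained, and it suffices to show the double sum is nonzero for every $(\veps_0,\veps_1)$.

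The core step is to establish non-vanishing via the variance gap. Take client $0$ to have small variance and let $\sigma^{(1)}\to\infty$. Then every threshold classifier applied to client $1$'s data accepts each group with probability $\to\Phi(0)=1/2$, so all four terms with $i=1$ vanish, and client $1$'s own local constraint becomes slack for all but a vanishing range of $\veps_1$, forcing its error-minimizer onto the Bayes thresholds $t_0^{(1)}=t_1^{(1)}=0$. Writing $c:=\Phi(\mu_1^{(0)}/\sigma^{(0)})-\Phi(\mu_0^{(0)}/\sigma^{(0)})>0$ (using the assumption $\mu_1^{(0)}>\mu_0^{(0)}$, i.e.\ a genuine group gap on the low-variance client), the surviving $i=0$ terms are client $0$'s own signed disparity $g_0^{(0)}(0)-g_0^{(1)}(0)=-\min(\veps_0,c)\in[-c,0]$ together with the Bayes cross-term $g_1^{(0)}(0)-g_1^{(1)}(0)=-c$. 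Denoting their sum $S$, we get $S\in[-2c,-c]$, so $\disp(\fuflm)\to\frac14|S|\ge c/4>0$ uniformly in $\veps_0$. A continuity/uniformity argument then transfers this to all sufficiently large but finite $\sigma^{(1)}$, yielding $\delta>0$.

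The hard part will be the non-cancellation at the corner $\veps_1\to 0$. There the stringent local constraint drives the large-variance client off its Bayes thresholds --- in the limit to $t_a^{(1)}=\mu_a^{(1)}$ --- so its cross-disparity on client $0$'s concentrated data need no longer equal $-c$ and could in principle offset client $0$'s term and send $S$ through zero. This is precisely what the unstated assumptions of the corollary (constraints relating $\mu_a^{(1)}$ to $\mu_a^{(0)}$, e.g.\ that client $1$'s group means stay close on client $0$'s scale) exist to rule out; I would close this case with explicit Gaussian-tail estimates showing the cross-term remains strictly below $c$, so the double sum keeps a definite sign across the whole box. Everything else --- the threshold characterization, the averaging identity, and the compactness conclusion --- is routine given the earlier results.
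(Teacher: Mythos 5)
Your skeleton is sound and, up to swapping which client carries the large variance (the paper's Corollary~\ref{cor:gaussian} states both orientations), it follows the same mechanism as the paper's proof: the group-wise threshold form of the local optima, the four-term decomposition of $\disp(\fuflm)$ under $q_0=q_1=1/2$, the washout of both terms evaluated on the large-variance client's data, and the survival of the small-variance client's own signed disparity $-\min(\veps_0,c)$ are all correct. But the one step you explicitly defer --- non-cancellation at the corner where the large-variance client's local constraint binds --- is exactly where the paper's substantive hypothesis lives, and you guess the wrong hypothesis. The formal corollary does not assume that the large-variance client's means ``stay close on client 0's scale''; it assumes $(\mu_0^{(0)}-\mu_1^{(0)})(\mu_0^{(1)}-\mu_1^{(1)})<0$ plus an orientation condition, i.e., the two local ERMs favor \emph{different} groups, so $g_0(0)$ and $g_1(0)$ have opposite signs. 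With that assumption no Gaussian-tail estimate is needed: by Lemma~\ref{lemma:different_sign}, the clamped multiplier of the large-variance client satisfies $\lambda_1\in[g_1^{-1}(0),0]$ for \emph{every} $\veps_1$, and since $g_0$ is strictly increasing, the dangerous cross-term obeys $g_0(\lambda_1)\le g_0(0)=-c$ on the whole box; it can only reinforce client 0's disparity, never offset it, which gives the uniform bound and $\delta\approx\frac14|g_0(0)+g_1(0)|$. The paper packages this through Lemma~\ref{lemma:ufl_limit}'s $\psi$-function (monotonicity in each $\veps_i$, evaluation at the corners $(c,0)$ and $(0,c)$) together with the washout estimate $g_0(g_1^{-1}(0))\approx 0$ proved by an intermediate-value bound on the fair thresholds, but the operative facts are sign condition plus monotonicity.

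The gap is substantive, not cosmetic, because without \emph{some} such assumption the conclusion is false. If both ERMs favor the same group, then as the large variance grows, the cross-term $g_0(\lambda_1)$ sweeps continuously from $-c$ up to values exceeding $+c$ (the large-variance client's fair thresholds $\pm(\mu_0^{(1)}-\mu_1^{(1)})/2$ can land far outside the concentrated client's data), so by the intermediate value theorem one can choose $(\veps_0,\veps_1)$ making the four-term sum exactly zero --- perfect fairness, no positive $\delta$. Your proposed nesting assumption would also rule this out (it forces $g_0(\lambda_1)\approx g_0(0)$ for all $\veps_1$), so your route can be completed, but it proves a different variant of the informal statement than the paper's, and the tail estimates you plan are avoidable: the sign assumption closes the corner in one line. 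One minor error: the large-variance client's perfectly fair thresholds are $t_0^{(1)}=-t_1^{(1)}=(\mu_0^{(1)}-\mu_1^{(1)})/2$ exactly (for equal within-client variances and the symmetric logistic $\eta$), not $t_a^{(1)}=\mu_a^{(1)}$; this is harmless since you only use boundedness of the thresholds.
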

Note that the condition that one client has a much larger variance than the other contributes to the ``high data heterogeneity" requirement. 
We provide the explicit form of $\delta$ in Corollary~\ref{cor:gaussian} in Sec.~\ref{appendix:ufl_limit}. Corollary~\ref{cor:informal_gaussian} implies that under a limiting case of Gaussian distribution, \ufl{} cannot achieve high fairness requirements. 
In Sec.~\ref{sec:experiments}, we will numerically demonstrate the same claim holds for more general cases.
Next, we give an example that satisfies the conditions of Corollary~\ref{cor:informal_gaussian}, whose achievable DP disparity and lower bound are visualized in Fig.~\ref{fig:figure1}(a). 
\begin{example}\label{example}
Let $\mu_0^{(0)} = \mu_1^{(0)} = 0, \mu_0^{(1)} = 3, \mu_1^{(1)} = -1, \sigma^{(0)} = 70$ and $\sigma^{(1)} = 1$. Then, $\delta \approx 0.21$.
\end{example}
\begin{figure}[t]
    \centering
    \includegraphics[width=\textwidth]{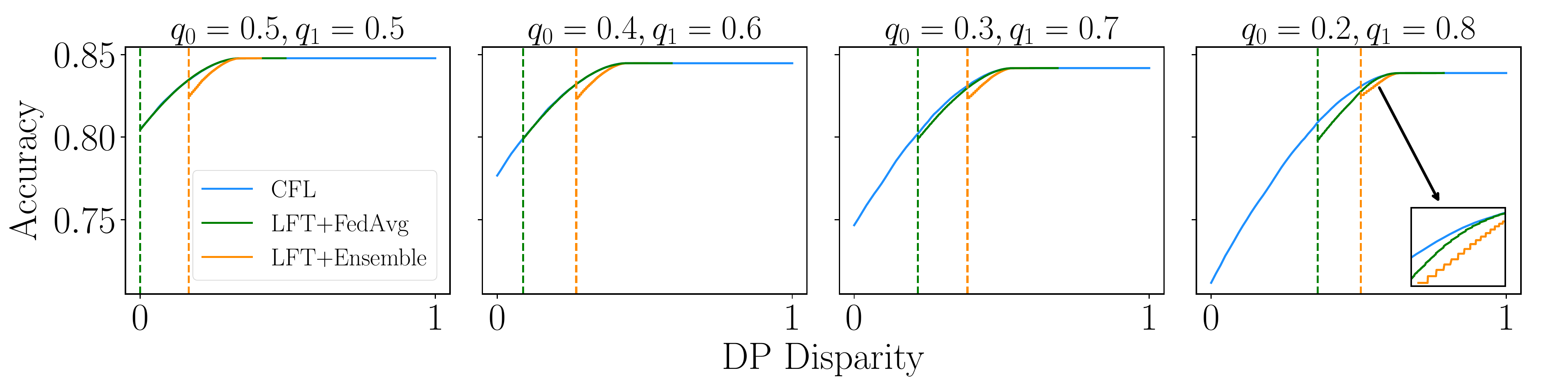}
    \caption{\textbf{Accuracy-fairness tradeoff curves of CFL, \fflfedavg{}, and \ufl{} for two clients cases.} 
    As $q_i$ denotes the sensitive attribute distribution for  client $i$, $|q_1 - q_0|$ captures the sensitive attribute distribution's heterogeneity. 
    The dotted vertical lines describe the lowest DP disparities that can be achieved.
    \fflfedavg{}'s maximum fairness is strictly higher than that of \ufl{} but strictly lower than that of CFL. Moreover, the tradeoff curves are strictly ordered in the same order. 
    }

    \label{fig:figure2}
\end{figure}

\begin{remark}
In this work, we classify local training followed by an ensemble method as a non-federated learning algorithm. However, one may also view it as an extreme instance of federated learning, called one-shot (single round) federated learning~\citep{guha2019one}. 
Depending on how one would classify this approach, our finding can be viewed as either the necessity of federated learning or the necessity of multi-round federated learning.
\end{remark}

\paragraph{\fflfedavg{} Optimization}
The classifier obtained by \fflfedavg{} can be shown to be equivalent to $\ffflm$, the solution to the following problem.
    \begin{align}\tag{\fflfedavg{}$(\boldsymbol{\veps})$}\label{prob:ffl}
         &~~~~~~~~~~~~~~~~~~~~~\min_{f\in\cF}  \bP(\haty \neq \rvy)  \\
         &\st.~|\bP(\haty = 1\mid \rva = 0, \rvi = i) - \bP(\haty = 1 \mid \rva = 1, \rvi = i)| \leq \veps_i .
    \end{align}
The proof of the equivalence is provided in Sec.~\ref{sec:fflbestclassifier}.
The following theorem asserts that \fflfedavg{} can achieve a \emph{strictly} higher fairness than \ufl{}.
\begin{theorem}[(informal) Fundamental values of federated learning for fairness]\label{thm:uflvsffl}
Let $q_i=q $ for all $i\in [I]$, where $q\in (0,1)$ is a constant. Under certain conditions, $\min_{\boldsymbol{\veps}\in [0,1]^I} \disp (\fuflm) > \min_{\boldsymbol{\veps} \in [0,1]^I} \disp(\ffflm).$ 
\end{theorem}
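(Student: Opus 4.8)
The plan is to bound the two sides separately and then combine them: I would show that the right-hand side is exactly zero, meaning \fflfedavg{} attains perfect global fairness under the hypothesis $q_i = q$, while the left-hand side is kept strictly positive by Lemma~\ref{lemma:informal_ufl_limit}. The strict inequality then follows immediately, so the bulk of the work reduces to the single identity $\min_{\boldsymbol{\veps}}\disp(\ffflm) = 0$.

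The heart of the argument is that claim. First I would note that, with equal data across clients and $q_i = q$, the posterior client weights are group-independent: since $\bP(\rvi=i) = 1/I$ and $\bP(\rva=1) = \sum_{j\in[I]} q_j/I = q$, Bayes' rule gives
\begin{align}
\bP(\rvi = i \mid \rva = 1) = \frac{q\,(1/I)}{q} = \frac{1}{I}, \qquad \bP(\rvi = i \mid \rva = 0) = \frac{(1-q)(1/I)}{1-q} = \frac{1}{I}.
\end{align}
Next I would set $\boldsymbol{\veps} = \vzero$, so the feasible set of the \fflfedavg{} problem becomes exactly the classifiers with zero disparity on every client, namely $\bP(\haty=1\mid\rva=0,\rvi=i) = \bP(\haty=1\mid\rva=1,\rvi=i)$ for all $i$; this set is nonempty since any constant $f\in\cF$ belongs to it. For any feasible $f$, the law of total probability yields
\begin{align}
\bP(\haty=1\mid\rva=a) = \sum_{i\in[I]} \bP(\haty=1\mid\rva=a,\rvi=i)\,\bP(\rvi=i\mid\rva=a) = \frac{1}{I}\sum_{i\in[I]}\bP(\haty=1\mid\rva=a,\rvi=i),
\end{align}
and because each summand is identical for $a=0$ and $a=1$ by feasibility, the two group acceptance rates coincide, i.e. $\disp(f)=0$. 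In particular the optimizer $\ffflm$ at $\boldsymbol{\veps}=\vzero$ has zero global disparity, so the right-hand side equals $0$.

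For the left-hand side, the theorem's hypotheses are precisely the conditions of Lemma~\ref{lemma:informal_ufl_limit}, which supplies a constant $\delta>0$ with $\min_{\boldsymbol{\veps}\in[0,1]^I}\disp(\fuflm) > \delta$. Chaining the two bounds gives $\min_{\boldsymbol{\veps}}\disp(\fuflm) > \delta > 0 = \min_{\boldsymbol{\veps}}\disp(\ffflm)$, which is the claim.

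The only subtle step in this theorem is the cancellation of the client weights $\bP(\rvi=i\mid\rva=a)$, which is exactly what $q_i=q$ buys: it lets per-client fairness at $\boldsymbol{\veps}=\vzero$ lift to global fairness, something the ensemble cannot do because it averages classifier outputs rather than enforcing the constraint jointly. Consequently the genuinely hard work does not live here but inside Lemma~\ref{lemma:informal_ufl_limit}, whose proof must show that no choice of local budgets can equalize the ensemble's two group acceptance rates under sufficiently heterogeneous feature distributions; in the present proof that fact is taken as given.
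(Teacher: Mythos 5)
Your proposal is correct and follows essentially the same route as the paper: the paper's formal proof (Thm.~\ref{thm:fflfair} and Thm.~\ref{thm:fflfair_multi}) also uses the fact that with $q_i=q$ and equal client sizes the global mean difference is the unweighted average $\frac{1}{I}\sum_i \text{MD}_i$ of the local ones, so that setting all local budgets to $\veps$ forces global disparity at most $\veps$ (hence $0$ at $\veps=0$), and then invokes the \ufl{} limitation lemma for the strict lower bound. The only cosmetic difference is that the paper proves the slightly stronger statement that \fflfedavg{} can attain \emph{any} disparity level $\veps$ via the triangle inequality, whereas you specialize directly to $\boldsymbol{\veps}=\vzero$, which is all the theorem needs.
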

Thm.~\ref{thm:uflvsffl} implies that \ufl{} cannot match the highest fairness achieved by \fflfedavg{}.
See Fig.~\ref{fig:figure2} for the performance tradeoffs.
Similar to Lemma~\ref{lemma:informal_ufl_limit}, the technical assumptions include high enough heterogeneity of $\rvx$ across the clients (see Sec.~\ref{appendix:multi_clients}). 
\begin{remark}
Extending Thm.~\ref{thm:uflvsffl} (and the analysis of \fflfedavg{}) to general cases where $q_i$'s are different remains open.
However, we conjecture that our lemma holds for more general cases, and we will numerically support our conjecture empirically. 
\end{remark}
\begin{remark}[]
Thm.~\ref{thm:uflvsffl} shows that even with \emph{just} two clients ($I=2$), a constant gap exists between non-federated algorithms and federated algorithms in their fairness performances. 
There is a stark difference between this phenomenon and the well-known gain of federated learning due to an increased sample size, which is generally negligible with a few number of clients. 
Our finding on this untapped gain in fairness can be seen as a new justification for small-scale federated learning, i.e., cross-silo settings.
\end{remark}

\subsection{\fflfedavg{} is strictly worse than CFL}\label{sec:4.1}
We first present the optimization problem for CFL, whose achievable fairness region can serve as an upper bound on that of all other decentralized algorithms.
We then show the existence of data distributions on which \fflfedavg{} achieves a strictly worse fairness performance than CFL. 

\paragraph{CFL Optimization} 
We consider the same problem setting as last subsection. 
We now formulate the CFL problem as:
\begin{align}\tag{CFL($\veps$)}\label{prob:cfl}
         &~~~~~~~~~~~~~~~~~~~~~~\min_{f\in\cF} \bP(\haty \neq \rvy), \\ & \st.~|\bP(\haty = 1\mid \rva = 0) - \bP(\haty = 1 \mid \rva = 1)| \leq \veps. 
\end{align}
Denote the solution to~\ref{prob:cfl} as $\fcfl$. 
It is clear that $\fcfl$ achieves the best accuracy-fairness tradeoff, at the cost of no privacy. 
The following lemma shows that there exists some distribution such that \fflfedavg{} is strictly worse than CFL when the distribution of sensitive attribute $\rva$ is heterogeneous ($q_i$ are not all the same).
\begin{lemma}[(informal) A strict gap between \fflfedavg{} and CFL]\label{lemma:ffl_limit}
When there exist $i\neq j \in [I]~\st. ~q_i \neq q_j$,
there exists a distribution such that 
$\min_{\boldsymbol{\veps}\in [0,1]^I} \disp (\ffflm) > \min_{\veps} \disp(\fcfl) = 0.$
\end{lemma}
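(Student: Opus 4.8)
The plan is to reduce everything to the per-client group acceptance rates and then exhibit a single distribution on which the local (per-client) demographic-parity constraints are powerless to control the \emph{global} disparity. Write $p^{(i)}_a := \bP(\haty = 1 \mid \rva = a, \rvi = i)$. Since the clients are balanced ($\bP(\rvi=i)=1/I$) and $\rva\mid\rvi=i\sim\Ber(q_i)$, Bayes' rule gives $\bP(\rvi=i\mid\rva=1) = q_i/\sum_j q_j$ and $\bP(\rvi=i\mid\rva=0) = (1-q_i)/\sum_j(1-q_j)$. Setting $w^1_i := q_i/\sum_j q_j$ and $w^0_i := (1-q_i)/\sum_j(1-q_j)$, the global disparity of any $f$ becomes
\[
\disp(f) = \Big| \sum_{i\in[I]} w^1_i\, p^{(i)}_1 - \sum_{i\in[I]} w^0_i\, p^{(i)}_0 \Big|.
\]
The CFL side is immediate: \ref{prob:cfl} with $\veps=0$ is feasible because a perfectly fair classifier always exists \citep{menon2018cost}, so $\min_\veps \disp(\fcfl)=0$. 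It therefore remains only to build a distribution on which $\disp(\ffflm)$ stays bounded away from $0$ for \emph{every} budget $\boldsymbol{\veps}$.

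For the construction I would take $I=2$, pick $q_0\neq q_1$ (say $q_0$ close to $0$ and $q_1$ close to $1$), and make the two clients strongly heterogeneous in the insensitive feature while keeping the two groups co-located \emph{within} each client: e.g.\ $\cP^{(0)}_0=\cP^{(0)}_1=\cN(-M,1)$ and $\cP^{(1)}_0=\cP^{(1)}_1=\cN(+M,1)$ for large $M$, with $\eta(x)=\sigma(x)$. The point is that $f(\cdot,0)$ and $f(\cdot,1)$ are integrated against the \emph{same} within-client distribution, so for any shared $f$ the local disparity $|p^{(i)}_1-p^{(i)}_0|$ vanishes; hence the \ref{prob:ffl} constraints hold for every $f$ and are vacuous for all $\boldsymbol{\veps}$. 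Consequently $\ffflm$ coincides with the global Bayes rule $\1[\eta(\rvx)>1/2]=\1[\rvx>0]$ for every $\boldsymbol{\veps}$, giving rates $p^{(i)}:=p^{(i)}_0=p^{(i)}_1$ with $p^{(0)}=\Phi(-M)$ and $p^{(1)}=\Phi(M)$. Since $\sum_i w^1_i=\sum_i w^0_i=1$, the display collapses to $\disp(\ffflm)=|w^1_0-w^0_0|\,|p^{(0)}-p^{(1)}| = |w^1_0-w^0_0|\,\big(\Phi(M)-\Phi(-M)\big)$, a strictly positive constant $\delta$ whenever $q_0\neq q_1$. Thus $\min_{\boldsymbol{\veps}}\disp(\ffflm)=\delta>0=\min_\veps\disp(\fcfl)$, as claimed.

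The honest subtlety is that this clean witness makes the local constraints \emph{exactly} vacuous by co-locating the groups; to present a fully non-degenerate example (groups genuinely distinguishable within each client, $\rvx$ continuous) I would perturb the two within-client group distributions slightly apart and track what error-minimization does. The key structural fact to formalize is that the budget $\veps_i$ bounds only the \emph{gap} $p^{(i)}_1-p^{(i)}_0$, never the overall acceptance \emph{level} of client $i$; because \ref{prob:ffl} minimizes global error with no penalty on that level, the error-minimizing thresholds pin each client's acceptance near its Bayes value for every $\boldsymbol{\veps}$, so the two $q$-weighted aggregates in the display stay separated. The hard part will be turning this into a bound that is \emph{uniform} in $\boldsymbol{\veps}$ — i.e.\ showing that the one-parameter-per-client family of error-efficient rates never crosses the fairness manifold $\sum_i w^1_i p^{(i)}_1=\sum_i w^0_i p^{(i)}_0$. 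I expect to handle it by establishing monotonicity of the optimal group thresholds in $\veps_i$ and then a continuity/compactness argument over the compact cube $\boldsymbol{\veps}\in[0,1]^I$ to extract the strictly positive $\delta$.
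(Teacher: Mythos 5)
Your construction is correct in substance and takes a genuinely different route from the paper. The paper's witness is the fully degenerate one: it sets $q_0=0$, $q_1=1$, so each client hosts a single group, the local constraints in \ref{prob:ffl} literally vanish, the solution collapses to the unconstrained ERM $f_1^{\text{CFL}}$, and the gap follows from an extra \emph{assumption} that $\disp(f_1^{\text{CFL}})>0$. Your witness keeps both groups present in every client ($q_i$ interior) and instead makes them co-located, $\cP_0^{(i)}=\cP_1^{(i)}$, with strong cross-client feature heterogeneity. This buys two things the paper's proof does not give: it works for an arbitrary prescribed pair $q_0\neq q_1$ (matching the hypothesis of the informal statement rather than only the extreme $q\in\{0,1\}$ instance), and it replaces the paper's unverified unfairness assumption on the ERM by an explicit computation, $\disp(\ffflm)=|w^1_0-w^0_0|\,\big(2\Phi(M)-1\big)>0$. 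The Bayes-weight decomposition of the global disparity and the CFL side (a perfectly fair classifier always exists, so the $\veps=0$ problem is feasible) are both fine.

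One step is stated incorrectly, although the conclusion it feeds survives: co-location does \emph{not} make the constraints of \ref{prob:ffl} ``hold for every $f$.'' Classifiers in $\cF$ may depend on the sensitive attribute, and, e.g., $f(x,0)\equiv 1$, $f(x,1)\equiv 0$ has local disparity $1$ in every client of your construction, so the constraints are far from vacuous. What is true --- and what your argument actually needs --- is that the Bayes rule $\indicator{\eta(x)>1/2}$ is group-blind, hence has zero local disparity whenever $\cP_0^{(i)}=\cP_1^{(i)}$, hence is feasible for every $\boldsymbol{\veps}$; since it is also the unconstrained error minimizer, it solves \ref{prob:ffl}, and because ties $\eta(x)=1/2$ occur with probability zero it is the a.e.-unique solution, so $\disp(\ffflm)$ is exactly the quantity you compute. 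With that one-line repair the proof is complete. The perturbation/compactness program you sketch at the end is unnecessary: the lemma asks only for the existence of \emph{some} distribution, and your co-located witness is a legitimate instance of the paper's model ($\rvx$ continuous, $\eta$ strictly increasing, $q_i\in[0,1]$), so there is nothing left to make ``non-degenerate.''
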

\begin{remark}
A strict gap exists for \emph{certain} distributions, but \emph{not for all} distributions.
\end{remark}

\subsection{Numerical Comparisons of tradeoffs}\label{sec:num_comparisons_33}

One limitation of our current theoretical results is that they only compare the maximum achievable fairness.
Extending our theoretical results to fully characterize two-dimensional tradeoffs is non-trivial, so we leave it as future work.
Instead, we numerically solve each of the optimization problems and visualize the tradeoff curves.

Shown in Fig.~\ref{fig:figure2} are the tradeoff curves for two clients cases.
We let $\rvx \mid \rva = 0, \rvi = 0 \sim \cN(3, 1), \rvx \mid \rva = 1, \rvi = 0 \sim \cN(5, 1), \rvx \mid \rva = 0, \rvi = 1 \sim \cN(1, 1), \rvx \mid \rva = 1, \rvi = 1 \sim \cN(-1, 1), \rva \mid \rvi = 0\sim \Ber(q_0), \rva \mid \rvi=1 \sim \Ber(q_1)$, $\eta(x) = \frac{1}{1+e^{-x}}$, and vary the values of $q_0$ and $q_1$.
Note that $|q_1 - q_0|$ captures the heterogeneity of the sensitive data $\rva$, which increases from left to right. 
First, one can observe that \ufl{}$<$\fflfedavg{}$<$ CFL in terms of the achievable fairness range, as predicted by our theory. 
Beyond our theory, we also observe an increasing gap between the tradeoff curves as the data heterogeneity increases. 

\section{Proposed Algorithm: FedFB}\label{sec:fedfb}

Our findings in Sec.~\ref{sec:thy} imply that federated learning is necessary, but the current \fedavg{}-based approach might not be the best approach. 
Can we design a federated learning algorithm that is strictly better than \fedavg{}-based approaches?
In this section, we propose a new federated learning algorithm for fair learning, which we dub \fedfb{} (short for Federated FairBatch). 
Our approach is based on the state-of-the-art (centralized) fair learning algorithm \fairbatch{} (\fb{} for short)~\citep{roh2021fairbatch} and comes with a few desirable theoretical guarantees. 

\paragraph{Centralized \fb{}} Let us first review how \fb{} works in the centralized setting when demographic parity is considered. 
Assume there are $A$ sensitive attributes. 
Let $\boldsymbol{w}$ be the model parameters, $\ell(\rvy, \haty; \boldsymbol{w})$ be the loss function, $n_{y,a} := |\set{(\rvx, \rvy, \rva): \rvy = y, \rva = a}|$ be the number of samples in group $a$ with label $y$, and $n_{\star,a} := |\set{(\rvx, \rvy, \rva):\rva = a}|$ be the number of samples in group $a$. 
Let $L_{y,a}(\boldsymbol{w}) = \sum_{\rvy = y, \rva = a} \ell(\rvy, \haty; \boldsymbol{w})$, $L^\prime_{y,a}(\boldsymbol{w}) := n_{y,a} L_{y,a}(\boldsymbol{w})/n_{\star,a}$.
Then, for $a \in [A]$, we define $$F_a(\boldsymbol{w}) := -L_{0,0}^\prime(\boldsymbol{w})+L_{1,0}^\prime(\boldsymbol{w})+ L_{0,a}^\prime(\boldsymbol{w})-L_{1,a}^\prime(\boldsymbol{w})+ \tfrac{n_{0,0}}{n_{\star,0}} - \tfrac{n_{0,a}}{n_{\star,a}}.$$
Then, we can show the following proposition.
\begin{proposition}[Necessary and sufficient condition for demographic parity]\label{prop:dp}
Consider 0-1 loss: $\ell(\mathrm{y},\hat{\mathrm{y}}; \boldsymbol{w}) = \indicator{\mathrm{y}\neq \hat{\mathrm{y}}(\boldsymbol{w})}$, where $\indicator{\cdot}$ is the indicator function. Then, 
\begin{align} \label{eq:dp}
F_a(\boldsymbol{w}) = 0, \forall a\in [A]
\end{align}
is a necessary \& sufficient condition for demographic parity.
\end{proposition}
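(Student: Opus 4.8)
The plan is to prove that $F_a(\boldsymbol{w})$ equals exactly the signed demographic-parity gap $\bP(\haty=1\mid\rva=a)-\bP(\haty=1\mid\rva=0)$ between sensitive group $a$ and the reference group $0$; once this identity is established, the equivalence with \eqref{eq:dp} drops out immediately.

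First I would specialize every loss term to the $0$-$1$ loss. Since $\ell(\rmy,\haty;\boldsymbol{w})=\indicator{\rmy\neq\haty(\boldsymbol{w})}$, a summand of $L_{0,a}$ fires precisely on a sample with $\rva=a,\rvy=0$ that is predicted $\haty=1$ (a false positive of group $a$), and a summand of $L_{1,a}$ fires precisely on a sample with $\rva=a,\rvy=1$ predicted $\haty=0$ (a false negative). Reading $L_{y,a}$ as the per-group average loss, so that $n_{y,a}L_{y,a}(\boldsymbol{w})$ is the \emph{number} of such misclassified points as the definition of $L'_{y,a}$ demands, gives that $L'_{0,a}$ and $L'_{1,a}$ are, respectively, the number of false positives and the number of false negatives in group $a$, each divided by $n_{\star,a}$.

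Next I would count the group-$a$ samples that receive a positive prediction. They are the false positives (numbering $n_{\star,a}L'_{0,a}$) together with the correctly classified positives, of which there are $n_{1,a}-n_{\star,a}L'_{1,a}$. Dividing by $n_{\star,a}$ and using $n_{1,a}=n_{\star,a}-n_{0,a}$ yields
\[
\bP(\haty=1\mid\rva=a)=L'_{0,a}-L'_{1,a}+1-\tfrac{n_{0,a}}{n_{\star,a}}.
\]
Subtracting the same expression evaluated at $a=0$ cancels the constant $1$ and reproduces verbatim the six terms of $F_a$, establishing $F_a(\boldsymbol{w})=\bP(\haty=1\mid\rva=a)-\bP(\haty=1\mid\rva=0)$ (and, in particular, $F_0\equiv 0$).

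To finish, note that demographic parity asserts $\bP(\haty=1\mid\rva=a)$ is constant over $a\in[A]$, which is equivalent to its agreeing with the value at $a=0$ for every $a$, i.e.\ to $F_a(\boldsymbol{w})=0$ for all $a\in[A]$; this delivers necessity and sufficiency simultaneously. The only real obstacle is the bookkeeping in the counting step — keeping the $L$-versus-$L'$ normalization consistent so that every contribution lands over the common denominator $n_{\star,a}$, and remembering that the positively predicted samples include the correctly classified positives, which enter as $n_{1,a}$ minus a loss term rather than as a loss term itself. After the identity $F_a=\text{DP gap}$ is in hand, nothing further is required.
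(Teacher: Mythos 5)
Your proof is correct and takes essentially the same route as the paper's: both decompose the per-group positive-prediction rate into its label-$0$ and label-$1$ parts, use the 0-1 loss to identify false positives and false negatives with the (count-normalized) loss terms $L'_{y,a}$, and observe that the resulting six-term expression is exactly $F_a$. The only cosmetic difference is that you package the computation as the identity $F_a(\boldsymbol{w})=\bP(\haty=1\mid\rva=a)-\bP(\haty=1\mid\rva=0)$ and then read off the equivalence, whereas the paper runs the same calculation as a chain of equivalent equalities starting from the demographic parity condition.
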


Note that Proposition~\ref{prop:dp} allows us to measure demographic disparity using subgroup-specific losses when the 0-1 loss is used. 
Inspired by this observation, \fb{} uses $F_a(\boldsymbol{w})$ as a surrogate of the unfairness measure even for non-$0-1$ loss functions.
With this surrogate, \fb{} solves the following bi-level optimization. 
\begin{align}\label{dp:w}
& ~~~~~~~~~~\min_{\boldsymbol{\lambda}\in[0, 2\frac{n_{\star,0}}{n}]\times \dots \times[0, 2\frac{n_{\star,A-1}}{n}]} \sum_{a=1}^{A-1}\big(F_a(\boldsymbol{w}_{\boldsymbol{\lambda}})\big)^2\\
& \boldsymbol{w}_{\boldsymbol{\lambda}} = \arg\min_{\boldsymbol{w}} \sum_{a=0}^{A-1}\left[\lambda_a L_{0,a}^\prime(\boldsymbol{w}) + (2\frac{n_{\star,a}}{n}-\lambda_a)L_{1,a}^\prime(\boldsymbol{w})\right],   
\end{align}
Here, $\boldsymbol{\lambda}=(\lambda_0,\dots,\lambda_{A-1})$ are the outer optimization variables, which denote adaptive coefficient to loss occurring from each subgroup. 
Given $\boldsymbol{\lambda}$, the inner optimization problem minimizes a reweighted loss function.
The intuition here is that if one carefully chooses the coefficients for each subpopulation loss term, the weighted risk minimization will result in a fair classifier, even if it is unconstrained.

In particular, \citet{roh2021fairbatch} proposed the following iterative algorithms to solve the outer optimization problem:
\begin{align}\label{eq:update_dp}
 & \mu_0(\boldsymbol{\lambda}) = - \sum_{a=1}^{A-1}F_a(\boldsymbol{w}_{\boldsymbol{\lambda}}), 
 \mu_a (\boldsymbol{\lambda}) = F_a(\boldsymbol{w}_{\boldsymbol{\lambda}}), \text{ for } a\neq 0,\\
 & \lambda_a \leftarrow \lambda_a + \frac{\alpha}{\Vert\boldsymbol{\mu}(\boldsymbol{\lambda})\Vert_2}\mu_a (\boldsymbol{\lambda}), \text{ for } a \in [A], \alpha\text{ is the step size}.
\end{align}
All together, \fb{} alternates this update equation (for the outer optimization) and the inner-loop updates (e.g., via SGD) to solve the bi-level optimization problem\footnote{Technically speaking, the algorithm shown here is a slightly generalized version of the original \fb{}.
In particular, it works for more than two sensitive groups, while the original \fb{} algorithm only works for binary groups.
However, we will still call ours \fb{}.
}. 

\paragraph{\fedfb{}: Federated Learning + \fb{}}
Recall that under the \fedavg{} protocol, clients periodically share their locally trained model parameters with the server. 
Our proposed algorithm is based on the following simple observation:

\emph{The bi-level structure of \fb{} naturally fits the hierarchical structure of federated learning.}

To see this, observe that the update equation given in \eqref{eq:update_dp} only requires the knowledge about all $\mu_a$, which is the function of $F_a$'s.
By definition of $F_a$, it is the sum of the locally computed $F_a$ values across all the clients.
Thus, this can be securely collected by the central aggregator via secure aggregation.
Once the $F_a$ values are securely aggregated at the central server, the central aggregator can update the per-group coefficients $\boldsymbol{\lambda}$ using the exact same equation \eqref{eq:update_dp}.

This allows us to run the \fb{} algorithm even on decentralized data.
More specifically, we modify the FedAvg protocol so that each client shares not only its intermediate model parameters but also its $\{F_a\}_{a \in [A]}$ with the central coordinator.
Once the central server receives the securely aggregated model parameters and $F_a$ values, it performs both (1) model averaging and (2) reweighting coefficients updates ($\boldsymbol{\lambda}$).
The server then broadcasts the averaged model parameter together with the updated coefficients, which are then used for the subsequent round of local training with a reweighted loss function.
Note that we update $\boldsymbol{\lambda}$ every $k$ communication rounds.
\begin{algorithm2e}[t]
 \SetKwBlock{Server}{ServerExecutes:}{ }
 \SetKwBlock{Compute}{ClientUpdate$(i,\boldsymbol{w}, \boldsymbol{\lambda})$:}{}
  \SetKwInOut{Output}{output}

  \Compute{
 Update local $\boldsymbol{w}^{(i)}$ according to sample weights $\boldsymbol{\lambda}$\;
 Compute local $F_a^{(i)}$ for all $a \in [A]$\;
 }{}
 
 \Server{
\For{each round $t$}{
Wait until clients perform their updates\;
$\boldsymbol{w} \leftarrow $ SecAgg$(\{\boldsymbol{w}^{(i)}\})$\; 
$\boldsymbol{F}_{a} \leftarrow $ SecAgg$(\{F_{a}^{(i)}\}) - (I-1)(\frac{n_{0,0}}{n_{\star, 0}} - \frac{n_{0,a}}{n_{\star, a}})$\; 
\uIf{$t \%  k = 0$}{
$\boldsymbol{\lambda} \leftarrow \texttt{Update}(\boldsymbol{\lambda},\boldsymbol{F}_{a})$\;
Broadcast $\boldsymbol{\lambda}$ to clients\;
}
Broadcast $\boldsymbol{w}$ to clients\;
}
\Output{$\boldsymbol{w}, \boldsymbol{\lambda}$}
  }{}
  \caption{\fedfb{}$(k,t)$}
 \label{alg:fedfb}
\end{algorithm2e} 

This algorithm, which we dub \fedfb{}, is formally described in Alg.~\ref{alg:fedfb}.
While this description is specific to the demographic parity, \fedfb{} can be used for other fairness notions including equal opportunity, equalized odds, and client parity.
The only difference would be how to design the $F_a$ function, which is the surrogate for the unfairness measure. 
See Sec.~\ref{appendix:fedfb} for more details. 

\paragraph{Convergence of \fedfb{}} The following theorem shows that \fedfb{} converges to the optimal reweighting coefficients.
The proof is based on the analysis tools for federated learning and \fb{} -- See Thm.~\ref{prop:formal_conv_dp} for more details.
\begin{theorem}[(informal) Partial convergence guarantee of \fedfb{}]
Let the output of \fedfb{}$(k, t)$ be $\boldsymbol{\lambda}_{k,t}$. 
Assume $A=2$.
Then, under certain conditions, $\lim_{t \to \infty}\lim_{k \to \infty}\boldsymbol{\lambda}_{k,t} = \boldsymbol{\lambda}^\star$ for some $\boldsymbol{\lambda}^\star \in \arg\min_{\boldsymbol{\lambda}} \sum_a [F_a(\boldsymbol{w}_{\boldsymbol{\lambda}})]^2$. 
\end{theorem}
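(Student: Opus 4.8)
The plan is to exploit the nested-limit structure of the statement: the inner limit $k \to \infty$ should reduce \fedfb{} to an idealized (exact-solve) version of centralized \fb{}, and the outer limit $t \to \infty$ then becomes a convergence statement for the \fb{} coefficient update \eqref{eq:update_dp}. First I would fix a coefficient vector $\boldsymbol{\lambda}$ and observe that the inner loop of Alg.~\ref{alg:fedfb} is precisely \fedavg{} run on the reweighted global objective in \eqref{dp:w}, namely $\sum_{a}\big[\lambda_a L'_{0,a}(\boldsymbol{w}) + (2\tfrac{n_{\star,a}}{n}-\lambda_a)L'_{1,a}(\boldsymbol{w})\big]$. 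Under the standard assumptions of \citep{li2020on} (smoothness, strong convexity, bounded local gradient variance), \fedavg{} converges to the unique inner minimizer, so $\lim_{k\to\infty}\boldsymbol{w} = \boldsymbol{w}_{\boldsymbol{\lambda}}$. I would then verify that the aggregation step $\boldsymbol{F}_a \leftarrow \mathrm{SecAgg}(\{F_a^{(i)}\}) - (I-1)(\tfrac{n_{0,0}}{n_{\star,0}}-\tfrac{n_{0,a}}{n_{\star,a}})$ reconstructs the global $F_a(\boldsymbol{w}_{\boldsymbol{\lambda}})$ exactly: the per-client losses sum to the global losses, while the constant offset carried by each local $F_a^{(i)}$ is over-counted $I-1$ times and is removed by the correction term (the bookkeeping is deferred to Sec.~\ref{appendix:fedfb}).

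Given Step 1, taking $k \to \infty$ makes the server-side update \eqref{eq:update_dp} identical to the centralized \fb{} outer update evaluated at the exact $\boldsymbol{w}_{\boldsymbol{\lambda}}$ and the exact $F_a(\boldsymbol{w}_{\boldsymbol{\lambda}})$. Hence $\lim_{k\to\infty}\boldsymbol{\lambda}_{k,t}$ equals the $t$-th iterate of idealized centralized \fb{}, and the theorem reduces to showing that this iterate converges to a minimizer of $\Phi(\boldsymbol{\lambda}) := \sum_a [F_a(\boldsymbol{w}_{\boldsymbol{\lambda}})]^2$ as $t\to\infty$. Specializing to $A=2$ collapses the update: from \eqref{eq:update_dp} we get $\mu_0 = -F_1(\boldsymbol{w}_{\boldsymbol{\lambda}})$ and $\mu_1 = F_1(\boldsymbol{w}_{\boldsymbol{\lambda}})$, so $\Vert\boldsymbol{\mu}\Vert_2 = \sqrt{2}\,|F_1|$ and the step becomes $\lambda_0 \leftarrow \lambda_0 - \tfrac{\alpha}{\sqrt 2}\sign(F_1)$, $\lambda_1 \leftarrow \lambda_1 + \tfrac{\alpha}{\sqrt 2}\sign(F_1)$. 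Since only the combination $s := \lambda_1 - \lambda_0$ changes, the outer optimization is effectively one-dimensional in $s$, and $\Phi$ depends on $\boldsymbol{\lambda}$ only through $s$.

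The crux is to show that $F_1(\boldsymbol{w}_{\boldsymbol{\lambda}})$, viewed as a function of $s$, is monotone with a single root $s^\star$, so that the sign-based update drives $s$ to $s^\star$. I would establish monotonicity via the implicit function theorem: strong convexity of the inner objective makes $\boldsymbol{w}_{\boldsymbol{\lambda}}$ a differentiable function of $\boldsymbol{\lambda}$, and differentiating the inner stationarity condition shows that increasing $s$ monotonically rebalances the reweighting between the two label-groups of the protected attribute, moving $F_1$ in a fixed direction. Because a perfectly fair classifier always exists \citep{menon2018cost} and $F_a=0$ is the necessary and sufficient fairness condition of Proposition~\ref{prop:dp}, $F_1$ crosses zero, yielding a root $s^\star$ with $\Phi=0$ (the global minimum) and a corresponding $\boldsymbol{\lambda}^\star$. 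Monotonicity together with the sign update then forces $s$ toward $s^\star$, giving $\boldsymbol{\lambda}_t \to \boldsymbol{\lambda}^\star$.

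I expect Step 3 to be the main obstacle, for two reasons. First, pinning down the sign of $\partial F_1/\partial s$ in terms of the monotonicity of $\eta$ and of the loss is a sensitivity computation that is clean only under strong convexity and may require extra regularity on $\ell$. Second, the stated update uses a \emph{fixed} step size $\alpha$, which on its own produces oscillation within an $O(\alpha)$ band rather than the exact limit claimed; obtaining $\boldsymbol{\lambda}^\star$ exactly requires either a diminishing schedule $\alpha_t \to 0$ in the formal version (Thm.~\ref{prop:formal_conv_dp}) or an argument that the band collapses onto the root. I would resolve this by stating the formal result under a diminishing step size and invoking the quasi-convexity of $\Phi$ in $s$ to conclude convergence to the unique minimizer.
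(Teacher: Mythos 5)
Your proposal is correct and follows essentially the same route as the paper's proof of Thm.~\ref{prop:formal_conv_dp}: the inner limit is handled by the \fedavg{} convergence guarantee of \citet{li2020on} (the paper additionally converts those expectation bounds into high-probability ones via Markov's inequality and a union bound over the $T$ outer rounds), the $A=2$ case collapses to the same one-dimensional sign update with conserved $\lambda_0+\lambda_1$, and your implicit-function-theorem monotonicity step is precisely the paper's Lemma~\ref{lemma:dp}, obtained by differentiating the inner stationarity condition under strong convexity and Assumption~\ref{ass:dp}. Your anticipated fix for the fixed-step oscillation — a diminishing schedule $\alpha_t \to 0$ with $\sum_t \alpha_t = \infty$ — is exactly what the paper's formal statement assumes, yielding the same final bound $|\lambda_a^{(T)} - \lambda_a^\star| \le \max\bigl\{|\lambda_a^{(0)} - \lambda_a^\star| - \sum_{t=1}^{T}\alpha_t,\ \alpha_T\bigr\} \to 0$.
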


\paragraph{Additional privacy leakage}
Under \fedfb{}, clients exchange real-valued $F_a$'s with the server in addition to the model parameters. 
To limit the information leakage, we develop a variant of \fedfb{}, which exchanges the quantized loss values.
For instance, ``\fedfb{}(10bits)'' means each loss value is uniformly quantized using 10 bits. Such a quantization scheme limits the amount of additional information shared in communication round, at the cost of perturbed coefficient updates.

\section{Experiments}\footnote{Our implemtation is available in \url{https://github.com/yzeng58/Improving-Fairness-via-Federated-Learning}. }\label{sec:experiments}
In this section, we numerically study the performance of \ufl{}, \fflfedavg{}, and CFL for more general cases, and evaluate the empirical performance of \fedfb{}. In each simulation study, we report the summary statistics across five replications. 
Similar to the experimental settings used in \citep{roh2020frtrain},
we train all algorithms using a two-layer ReLU neural network with four hidden neurons to evaluate the performance of \fedfb{} for the non-convex case.
Due to the space limit, we defer (1) logistic regression results, (2) the performance tradeoffs of \fedfb{} as a function of the number of clients, and (3) more implementation details to  Sec.~\ref{appendix:experiment}.

\subsection{Limitation of \ufl{} on general cases}\label{exp:ufl_limit}
The first experiment examines the fairness range of \ufl{} under various Gaussian distributions, which does not satisfy the conditions of Corollary~\ref{cor:informal_gaussian}.
To do so, we numerically solve the optimization problems~\ref{prob:ufl}.
See Sec.~\ref{appendix:UFL_FFL_CFL} for more details.
We conjecture that the same phenomenon holds for more general distributions, and corroborate our conjecture with numerical experiments.
Shown in Fig.~\ref{fig:figure1}(b,c) are the numerically computed lower bound on \ufl{}'s achievable fairness.
In particular, for (b), we let $\rvx \mid \rva = 0 , \rvi = 0 \sim \cN(10, 0.2^2), \rvx \mid \rva = 1 , \rvi = 0 \sim \cN(9.8, 0.2^2), \rvx \mid \rva = 0 , \rvi = 1 \sim \cN(0.2, 0.2^2), \rvx \mid \rva = 1 , \rvi = 1 \sim \cN(0, 0.2^2), \rva \sim \Ber(0.2)$, and for (c), we let $\rvx \mid \rva = 0 , \rvi = 0 \sim \cN(3, 1), \rvx \mid \rva = 1 , \rvi = 0 \sim \cN(5, 1), \rvx \mid \rva = 0 , \rvi = 1 \sim \cN(1, 1), \rvx \mid \rva = 1 , \rvi = 1 \sim \cN(-1, 1), \rva \sim \Ber(0.5)$. 
For both cases, we set $\eta(x) = \frac{1}{1+e^{-x}}$. 
It is easy to check that these distributions do \emph{not} satisfy the conditions of Corollary~\ref{cor:informal_gaussian}.
In particular, the distribution (b) corresponds to the case that the same group is favored on both clients, and the positive rates of each group in different clients are distinctive.
The distribution (c) represents the case that different groups are favored on two clients. 
In both cases, \ufl{} fails to achieve perfect fairness, \ie, $\delta > 0$. 
We also observe that $\delta$ is large on the distribution (c).
This supports our conjecture that \ufl{}'s fairness performance is strictly limited not only on certain data distributions but also on more general ones. 
\subsection{Accuracy-fairness tradeoffs of \ufl{}, \fflfedavg{} and CFL}\label{sec:5.2-3clients}

The second experiment extends the experiments conducted in Sec.~\ref{sec:num_comparisons_33}.
We assess the relationship between the data heterogeneity and the gap between the three fair learning scenarios with three clients.
As shown in Fig.~\ref{fig:figure2_3clients}, \fflfedavg{} is observed to achieve a strictly worse tradeoff than CFL and a strictly higher maximum fairness value than \ufl{}. 
The results corroborate the benefit and limitation of \fedavg{}-based federated learning in improving fairness.
A very interesting observation is that \ufl{} is observed to obtain a strictly higher accuracy than \fflfedavg{}.
Indeed, this could be attributed to the fact that the average of locally fair models might not be fair to any sub-distribution, while \ufl{} at least ensures that each component of the mixture classifier is fair on some sub-distribution. 

\subsection{FedFB evaluation on demographic parity}\label{sec:exp_fedfb_dp}
\begin{table}[t]
\caption{\textbf{Comparison of accuracy and DP disparity on the synthetic, Adult, COMPAS, and Bank datasets.} \fedfb{} outperforms the other approaches on most of the tested datasets, sometimes nearly matching the performance of CFL. Note that \fflfedavg{} sometimes gets a strictly worse performance than \fedavg{}. This is because the average of fair models may not be fair at all.}
\label{tab:dp_mlp}
\begin{center}
\begin{small}
\begin{sc}
\tiny{
\begin{tabularx}{\textwidth}{ccc@{\extracolsep{\fill}}c@{\extracolsep{\fill}}c@{\extracolsep{\fill}}c@{\extracolsep{\fill}}c@{\extracolsep{\fill}}c@{\extracolsep{\fill}}c@{\extracolsep{\fill}}c@{\extracolsep{\fill}}r}
\toprule
\multicolumn{2}{c}{Property}&& \multicolumn{2}{c}{Synthetic} & \multicolumn{2}{c}{Adult} & \multicolumn{2}{c}{COMPAS} & \multicolumn{2}{c}{Bank}\\
Private&Fair&Method & Acc.($\uparrow$) & DP Disp.($\downarrow$)& Acc.($\uparrow$) & DP Disp.($\downarrow$)& Acc.($\uparrow$) & DP Disp.($\downarrow$)& Acc.($\uparrow$) & DP Disp.($\downarrow$) \\
\midrule

\cmark&\xmark  &\fedavg{} &.886$\pm$.003 & .406$\pm$.009 & .829$\pm$.012& .153$\pm$.022& .655$\pm$.009& .167$\pm$.037&   .898$\pm$.001& .026$\pm$.003 \\ \midrule
\cmark& \cmark&\ufl{}&.727$\pm$.194 & .248$\pm$.194 & .825$\pm$.008& .034$\pm$.028& .620$\pm$.019& .088$\pm$.055&   .892$\pm$.002& .014$\pm$.006 \\
\cmark&\cmark &\fflfedavg{}&.823$\pm$.102 & .305$\pm$.131 & .801$\pm$.043& .123$\pm$.071& .595$\pm$.005& \textbf{.059$\pm$.009}&   .893$\pm$.000& .017$\pm$.001 \\
\cmark&\cmark&\textbf{FedFB (Ours)} &.725$\pm$.012 &\textbf{.051$\pm$.018} & .804$\pm$.001& \textbf{.028$\pm$.001}& .606$\pm$.019& .086$\pm$.029&  .883$\pm$.000& \textbf{.000$\pm$.000} \\
\midrule
\xmark&\cmark&CFL&.726$\pm$.009 & .028$\pm$.016 & .814$\pm$.002& .010$\pm$.004& .616$\pm$.033& .036$\pm$.028&   .883$\pm$.000& .000$\pm$.000 \\

\bottomrule
\end{tabularx} }
\end{sc}
\end{small}
\end{center}
\end{table}
We assess the empirical performance of \fedfb{} for non-convex cases on four datasets and the performance of \fedfb{} under different data heterogeneity. 
We focus on demographic parity and report DP disparity $= \max_{a\in[A]}|\bP(\haty = 1 \mid \rva = a) - \bP(\haty = 1)|$, where $A$ is the number of groups.
Note that this is slightly different from the definition we used in the previous sections, which was specific for the binary sensitive attribute.

\paragraph{Baselines} 
We employ three types of baselines: (1) decentralized non-fair training (\fedavg{}); (2) decentralized fair training (\ufl{}, \fflfedavg{}, \fairfed{}, \agnosticfair{}s); (3) centralized fair training (CFL).
Here, for the fairer comparison and better performance, \ufl{}, \fflfedavg{}, and CFL are implemented based on \fb{}, which is the state-of-the-art fair learning algorithm on centralized data. 
Note that \ufl{} is the most private, CFL is the least private, and \fedavg{}, \fflfedavg{}, \fairfed{} and \fedfb{} are somewhere in-between as they share some statistics at each communication round but not directly reveal the data.
\agnosticfair{} is close to CFL, as it exchanges information every local update.
\begin{table}[t]
\caption{\textbf{Comparison of accuracy and DP disparity on the synthetic dataset with varying heterogeneity.} \fedfb{} achieves good performance on all the tested levels of heterogeneity.  
This is because \fedfb{} is designed to mimic the operation of CFL, whose performance is independent of data heterogeneity. 
}
\label{tab:dp_dh}
\begin{center}
\begin{small}
\begin{sc}
\begin{tabularx}{\textwidth}{l@{\extracolsep{\fill}}c@{\extracolsep{\fill}}c@{\extracolsep{\fill}}c@{\extracolsep{\fill}}c@{\extracolsep{\fill}}c@{\extracolsep{\fill}}c@{\extracolsep{\fill}}r}
\toprule
& \multicolumn{2}{c}{Low Data Heterogeneity} & \multicolumn{2}{c}{Medium Data Heterogeneity}& \multicolumn{2}{c}{High Data Heterogeneity}\\
Method & Acc.($\uparrow$) & DP Disp.($\downarrow$)& Acc.($\uparrow$) & DP Disp.($\downarrow$)& Acc.($\uparrow$) & DP Disp.($\downarrow$) \\
\midrule
\textbf{FedFB (Ours)} &.669$\pm$.040 & .058$\pm$.042 &.725$\pm$.012 &.051$\pm$.018  & .703$\pm$.012& .013$\pm$.008\\
CFL&.726$\pm$.009 & .028$\pm$.016 & .726$\pm$.009 & .028$\pm$.016 & .726$\pm$.009 & .028$\pm$.016\\
\bottomrule
\end{tabularx}
\end{sc}
\end{small}
\end{center}
\end{table}

 \begin{wrapfigure}{r}{0.35\textwidth}
  \begin{center}
    \includegraphics[width=0.35\textwidth]{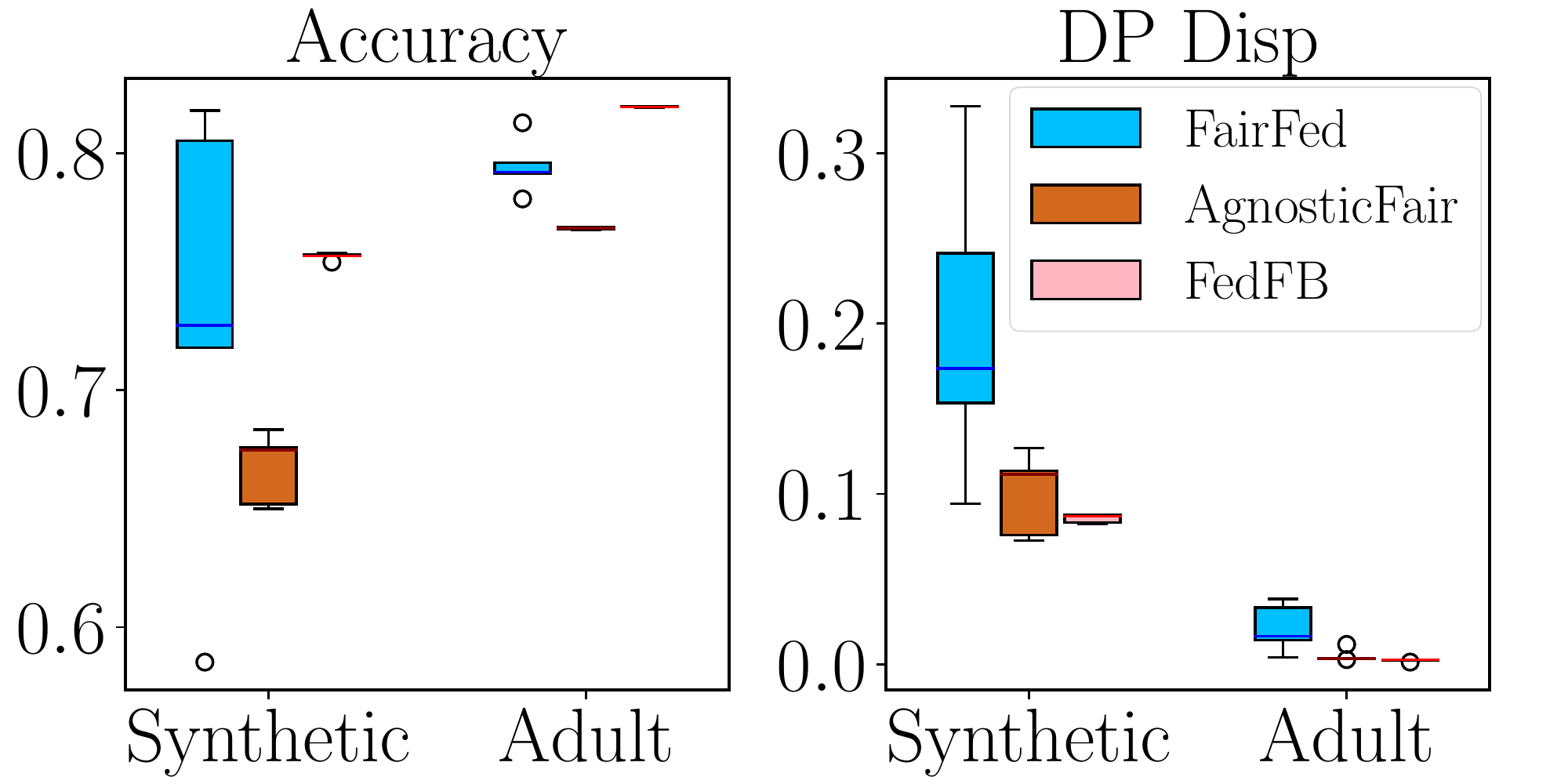}
  \end{center}
\caption{
    \textbf{Performance comparison in terms of accuracy and DP disparity on logistic regression.} 
    \fedfb{} achieves good accuracy and fairness, compared to \fairfed{} and \agnosticfair{}. 
    }
    \label{fig:fairfed}
        \vspace{-.2in}
\end{wrapfigure}

\paragraph{Datasets} \textbf{(synthetic)} We follow \citet{roh2021fairbatch} for data generation, but with a slight modification to make the dataset more unbalanced. 
To study the empirical relationship between accuracy, fairness, and data heterogeneity, 
we split the dataset in different ways to obtain desired levels of data heterogeneity. 
More details are given in Sec.~\ref{appendix:synthetic}.
\textbf{(real)} We use three benchmark datasets: Adult~\citep{dataset:adult} with 48,842 samples, COMPAS~\citep{compas} with 7,214 samples, and Bank~\citep{bank} with 45,211 samples. 
We follow \citet{du2020fairfl}'s method to preprocess and split Adult into two clients and \citet{jiang2020biascorrecting}'s method to preprocess COMPAS and Bank. 
Then, we split COMPAS into two clients based on age and split Bank into three clients based on the loan decision. Note that all the datasets are split in heterogeneous ways.

\paragraph{Results} 
Table~\ref{tab:dp_mlp} reports the test accuracy and DP disparity of four baselines and \fedfb{}. 
As expected, the resulting fairness level of \fedfb{} is close to that of CFL.
Besides, we observe the poor performance of \ufl{} and \fflfedavg{}, which is due to the fundamental limitation of \ufl{} and \fflfedavg{}.
Table~\ref{tab:dp_dh} reports the accuracy and fairness of each method under different data heterogeneity.
FedFB is observed to be robust to data heterogeneity. 
This agrees with our expectation as \fedfb{} closely mimics the operation of the centralized \fb{}, which is not affected by data heterogeneity. 
We make a more thorough comparison between CFL and \fedfb{} by plotting the accuracy-fairness tradeoff curves in Fig.~\ref{fig:comparison}(b). 
To demonstrate the performance gain does not come at the cost of privacy loss, we restrict \fedfb{} to only exchange 10 bits of information per communication round.
Fig.~\ref{fig:comparison}(b) shows that \fedfb{} still performs well with quantized communications.

As suggested in \citet{ezzeldin2021fairfed}, we use logistic regression to compare our approach with \fairfed{} and \agnosticfair{}. Since \fairfed{} is only applicable to single binary sensitive attribute cases, we report the performance on synthetic and Adult datasets.
Fig.~\ref{fig:fairfed} shows that \fedfb{} achieves the best accuracy-fairness performance among the three methods with a much smaller privacy cost compared to \agnosticfair{}. 
For a fair comparison, we provide the experiment results under the same setting as \citet{ezzeldin2021fairfed}, and full comparison with \agnosticfair{} in Sec.~\ref{appendix:experiment}.

\subsection{FedFB evaluation on client parity}
\begin{figure}
    \centering
    \includegraphics[width=\textwidth]{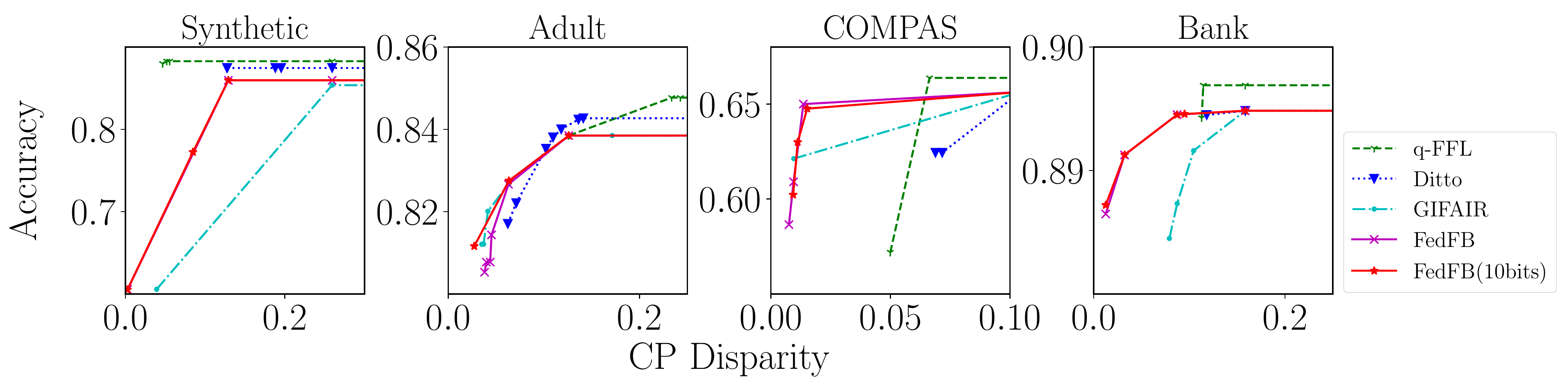}
    \caption{\textbf{Comparison of accuracy and Client Parity (CP) disparity on the synthetic, Adult, COMPAS, and Bank datasets.} Although our algorithm is not specifically designed for CP, it closely matches the state-of-the-art accuracy-CP performance.} 
    \label{fig:cptradeoff}
\end{figure}

We evaluate the performance of \fedfb{} in achieving client parity (CP) and compare it with the state-of-the-art algorithms for CP.
We will measure CP disparity $= \max_{i \neq j \in [I]} |L^{(i)} - L^{(j)}|$, where $L^{(i)}$ is the loss in $i$th client. 

\paragraph{Baselines} We consider \gifair{}~\citep{yue2021gifair}, \qffl{}~\citep{li2020fair}, \ditto{}~\citep{li2021ditto}, and the unconstrained baseline \fedavg{}~\citep{McMahan17FedAvg}. 
Similar to \fedfb{}, both \gifair{} and \qffl{} propose a modified aggregation protocol, under which clients share some additional information with the central server, which then accordingly adjusts the objective function used for the next round of training. 
The key difference is that while \fedfb{} optimizes the coefficients for the primary objective terms (\ie{} sample reweighting) by solving a bi-level optimization problem, \gifair{} updates the coefficient for the penalty terms, and \qffl{} implicitly updates the weights on each objective term based on nonlinear behaviors of polynomial functions, which is equivalent to the $\alpha$-fairness algorithm used in networking~\citep{mo2000fair}. 
The \ditto{} algorithm combines multitask learning with federated learning to learn a personalized classifier for each client, improving the accuracy of the clients with low accuracy.


\paragraph{Datasets} We use the same datasets as before, but split the datasets according to their sensitive attributes to simulate the same setting as assumed by \gifair{}, \qffl{}, and \ditto{}. 

\paragraph{Results}

Fig.~\ref{fig:cptradeoff} shows that \fedfb{} offers competitive and stable performances in mitigating the model bias, especially in the high fairness region. 
Although \qffl{} achieves better accuracy and fairness on the synthetic data, under strict fairness constraint, \fedfb{} and its private variant nearly achieve the highest accuracy on the other three datasets. 

\section{Conclusions}
\paragraph{Summary} We have investigated how to achieve group fairness under a decentralized setting. 
We developed a theoretical framework for decentralized fair learning algorithms and analyzed the performance of \ufl{}, \fflfedavg{}, and CFL.
As a result, we showed that (1) federated learning can significantly boost model fairness even with only a handful number of participating clients, and (2) \fedavg{}-based federated fair learning algorithms are strictly worse than the oracle upper bound of CFL.
To close the gap, we propose \fedfb{}.
Our extensive experimental results demonstrate that our proposed solution \fedfb{} achieves state-of-the-art performances.

\paragraph{Open questions} 
\textbf{(Theory)} 
First, 
full theoretical characterization of accuracy-fairness tradeoff still remains open. 
Furthermore, a three-way tradeoff between accuracy, fairness, and privacy remains widely open. 
\textbf{(Algorithm)} It remains open whether or not \fedfb{} can handle different fairness notions such as \emph{proportional fairness}~\citep{zhang2020hierarchically,lyu2020collaborative,lyu2020fairfl}.

\section*{Acknowledgement}
This work was supported in part by NSF Award DMS-2023239, NSF/Intel Partnership on Machine Learning for Wireless Networking Program under Grant No. CNS-2003129, and the Understanding and Reducing Inequalities Initiative of the University of Wisconsin-Madison, Office of the Vice Chancellor for Research and Graduate Education with funding from the Wisconsin Alumni Research Foundation.
\bibliography{references}

\begin{thebibliography}{35}
\providecommand{\natexlab}[1]{#1}
\providecommand{\url}[1]{\texttt{#1}}
\expandafter\ifx\csname urlstyle\endcsname\relax
  \providecommand{\doi}[1]{doi: #1}\else
  \providecommand{\doi}{doi: \begingroup \urlstyle{rm}\Url}\fi

\bibitem[Chu et~al.(2021)Chu, Wang, Dong, Pei, Zhou, and Zhang]{chu2021fedfair}
L.~Chu, L.~Wang, Y.~Dong, J.~Pei, Z.~Zhou, and Y.~Zhang.
\newblock Fedfair: Training fair models in cross-silo federated learning.
\newblock \emph{arXiv preprint arXiv:2109.05662}, 2021.

\bibitem[Cui et~al.(2021)Cui, Pan, Liang, Zhang, and Wang]{cui2021ffl}
S.~Cui, W.~Pan, J.~Liang, C.~Zhang, and F.~Wang.
\newblock Addressing algorithmic disparity and performance inconsistency in
  federated learning.
\newblock In \emph{Thirty-Fifth Conference on Neural Information Processing
  Systems}, 2021.

\bibitem[Du et~al.(2021)Du, Xu, Wu, and Tong]{du2020fairfl}
W.~Du, D.~Xu, X.~Wu, and H.~Tong.
\newblock Fairness-aware agnostic federated learning.
\newblock In \emph{Proceedings of the 2021 SIAM International Conference on
  Data Mining (SDM)}, pages 181--189, 2021.

\bibitem[Dua and Graff(2017)]{dataset:adult}
D.~Dua and C.~Graff.
\newblock {UCI} machine learning repository, 2017.
\newblock URL \url{http://archive.ics.uci.edu/ml}.

\bibitem[Dwork et~al.(2012)Dwork, Hardt, Pitassi, Reingold, and
  Zemel]{dwork2012individual}
C.~Dwork, M.~Hardt, T.~Pitassi, O.~Reingold, and R.~Zemel.
\newblock Fairness through awareness.
\newblock In \emph{Proceedings of the 3rd Innovations in Theoretical Computer
  Science Conference}, ITCS '12, page 214–226, 2012.
\newblock ISBN 9781450311151.
\newblock \doi{10.1145/2090236.2090255}.
\newblock URL \url{https://doi.org/10.1145/2090236.2090255}.

\bibitem[Ezzeldin et~al.(2021)Ezzeldin, Yan, He, Ferrara, and
  Avestimehr]{ezzeldin2021fairfed}
Y.~H. Ezzeldin, S.~Yan, C.~He, E.~Ferrara, and S.~Avestimehr.
\newblock Fairfed: Enabling group fairness in federated learning.
\newblock \emph{arXiv preprint arXiv:2110.00857}, 2021.

\bibitem[Friedler et~al.(2016)Friedler, Scheidegger, and
  Venkatasubramanian]{friedler2016impossibility}
S.~A. Friedler, C.~Scheidegger, and S.~Venkatasubramanian.
\newblock On the (im)possibility of fairness, 2016.

\bibitem[Guha et~al.(2019)Guha, Talwalkar, and Smith]{guha2019one}
N.~Guha, A.~Talwalkar, and V.~Smith.
\newblock One-shot federated learning.
\newblock \emph{arXiv preprint arXiv:1902.11175}, 2019.

\bibitem[Hardt et~al.(2016)Hardt, Price, and Srebro]{Hardt2016Equality}
M.~Hardt, E.~Price, and N.~Srebro.
\newblock Equality of opportunity in supervised learning.
\newblock In \emph{Advances in Neural Information Processing Systems (NIPS)},
  volume~29, pages 3315--3323, 2016.

\bibitem[Jiang and Nachum(2020)]{jiang2020biascorrecting}
H.~Jiang and O.~Nachum.
\newblock Identifying and correcting label bias in machine learning.
\newblock In \emph{Proceedings of the Twenty Third International Conference on
  Artificial Intelligence and Statistics}, volume 108 of \emph{Proceedings of
  Machine Learning Research}, pages 702--712, 2020.
\newblock URL \url{http://proceedings.mlr.press/v108/jiang20a.html}.

\bibitem[Kearns et~al.(2018)Kearns, Neel, Roth, and Wu]{kearns2018audit}
M.~Kearns, S.~Neel, A.~Roth, and Z.~S. Wu.
\newblock Preventing fairness gerrymandering: Auditing and learning for
  subgroup fairness.
\newblock In \emph{Proceedings of the 35th International Conference on Machine
  Learning (ICML)}, volume~80 of \emph{Proceedings of Machine Learning
  Research}, pages 2564--2572, 2018.
\newblock URL \url{http://proceedings.mlr.press/v80/kearns18a.html}.

\bibitem[Kone{\v{c}}n{\`y} et~al.(2017)Kone{\v{c}}n{\`y}, McMahan, Yu,
  Richt{\'a}rik, Suresh, and Bacon]{konevcny2016federated}
J.~Kone{\v{c}}n{\`y}, H.~B. McMahan, F.~X. Yu, P.~Richt{\'a}rik, A.~T. Suresh,
  and D.~Bacon.
\newblock Federated learning: Strategies for improving communication
  efficiency.
\newblock \emph{arXiv preprint arXiv:1610.05492}, 2017.

\bibitem[Li et~al.(2020{\natexlab{a}})Li, Sanjabi, Beirami, and
  Smith]{li2020fair}
T.~Li, M.~Sanjabi, A.~Beirami, and V.~Smith.
\newblock Fair resource allocation in federated learning.
\newblock In \emph{International Conference on Learning Representations
  (ICLR)}, 2020{\natexlab{a}}.
\newblock URL \url{https://openreview.net/forum?id=ByexElSYDr}.

\bibitem[Li et~al.(2021)Li, Hu, Beirami, and Smith]{li2021ditto}
T.~Li, S.~Hu, A.~Beirami, and V.~Smith.
\newblock Ditto: Fair and robust federated learning through personalization.
\newblock In \emph{Proceedings of the 38th International Conference on Machine
  Learning (ICML)}, volume 139 of \emph{Proceedings of Machine Learning
  Research}, pages 6357--6368, 2021.
\newblock URL \url{http://proceedings.mlr.press/v139/li21h.html}.

\bibitem[Li et~al.(2020{\natexlab{b}})Li, Huang, Yang, Wang, and
  Zhang]{li2020on}
X.~Li, K.~Huang, W.~Yang, S.~Wang, and Z.~Zhang.
\newblock On the convergence of fedavg on non-iid data.
\newblock In \emph{International Conference on Learning Representations
  (ICLR)}, 2020{\natexlab{b}}.
\newblock URL \url{https://openreview.net/forum?id=HJxNAnVtDS}.

\bibitem[Lyu et~al.(2020{\natexlab{a}})Lyu, Xu, Wang, and
  Yu]{lyu2020collaborative}
L.~Lyu, X.~Xu, Q.~Wang, and H.~Yu.
\newblock Collaborative fairness in federated learning.
\newblock In \emph{Federated Learning}, pages 189--204. Springer,
  2020{\natexlab{a}}.

\bibitem[Lyu et~al.(2020{\natexlab{b}})Lyu, Yu, Nandakumar, Li, Ma, Jin, Yu,
  and Ng]{lyu2020fairfl}
L.~Lyu, J.~Yu, K.~Nandakumar, Y.~Li, X.~Ma, J.~Jin, H.~Yu, and K.~Ng.
\newblock Towards fair and privacy-preserving federated deep models.
\newblock \emph{IEEE Transactions on Parallel \& Distributed Systems},
  31\penalty0 (11):\penalty0 2524--2541, 2020{\natexlab{b}}.
\newblock ISSN 1558-2183.
\newblock \doi{10.1109/TPDS.2020.2996273}.

\bibitem[McMahan et~al.(2017)McMahan, Moore, Ramage, Hampson, and
  y~Arcas]{McMahan17FedAvg}
B.~McMahan, E.~Moore, D.~Ramage, S.~Hampson, and B.~A. y~Arcas.
\newblock Communication-efficient learning of deep networks from decentralized
  data.
\newblock In \emph{International Conference on Artificial Intelligence and
  Statistics (AISTATS)}, volume~54 of \emph{Proceedings of Machine Learning
  Research}, pages 1273--1282, 2017.
\newblock URL \url{http://proceedings.mlr.press/v54/mcmahan17a.html}.

\bibitem[Menon and Williamson(2018)]{menon2018cost}
A.~K. Menon and R.~C. Williamson.
\newblock The cost of fairness in binary classification.
\newblock In \emph{Proceedings of the 1st Conference on Fairness,
  Accountability and Transparency}, volume~81 of \emph{Proceedings of Machine
  Learning Research}, pages 107--118, 2018.
\newblock URL \url{http://proceedings.mlr.press/v81/menon18a.html}.

\bibitem[Mo and Walrand(2000)]{mo2000fair}
J.~Mo and J.~Walrand.
\newblock Fair end-to-end window-based congestion control.
\newblock \emph{IEEE/ACM Transactions on networking}, 8\penalty0 (5):\penalty0
  556--567, 2000.

\bibitem[Mohri et~al.(2019)Mohri, Sivek, and Suresh]{mohri2019agnostic}
M.~Mohri, G.~Sivek, and A.~T. Suresh.
\newblock Agnostic federated learning.
\newblock In \emph{Proceedings of the 36th International Conference on Machine
  Learning (ICML)}, volume~97 of \emph{Proceedings of Machine Learning
  Research}, pages 4615--4625, 2019.
\newblock URL \url{http://proceedings.mlr.press/v97/mohri19a.html}.

\bibitem[Moro et~al.(2014)Moro, Cortez, and Rita]{bank}
S.~Moro, P.~Cortez, and P.~Rita.
\newblock A data-driven approach to predict the success of bank telemarketing.
\newblock \emph{Decision Support Systems}, 62:\penalty0 22--31, 2014.
\newblock ISSN 0167-9236.
\newblock \doi{https://doi.org/10.1016/j.dss.2014.03.001}.
\newblock URL
  \url{https://www.sciencedirect.com/science/article/pii/S016792361400061X}.

\bibitem[ProPublica(2021)]{compas}
ProPublica.
\newblock Compas recidivism risk score data and analysis.
\newblock 2021.
\newblock URL \url{https://www.propublica.org/datastore/results?q=compas}.

\bibitem[Rodr{\'\i}guez-G{\'a}lvez et~al.(2021)Rodr{\'\i}guez-G{\'a}lvez,
  Granqvist, van Dalen, and Seigel]{rodriguez2021enforcing}
B.~Rodr{\'\i}guez-G{\'a}lvez, F.~Granqvist, R.~van Dalen, and M.~Seigel.
\newblock Enforcing fairness in private federated learning via the modified
  method of differential multipliers.
\newblock \emph{arXiv preprint arXiv:2109.08604}, 2021.

\bibitem[Roh et~al.(2020)Roh, Lee, Whang, and Suh]{roh2020frtrain}
Y.~Roh, K.~Lee, S.~Whang, and C.~Suh.
\newblock {FR}-train: A mutual information-based approach to fair and robust
  training.
\newblock In \emph{Proceedings of the 37th International Conference on Machine
  Learning (ICML)}, volume 119 of \emph{Proceedings of Machine Learning
  Research}, pages 8147--8157, 2020.
\newblock URL \url{http://proceedings.mlr.press/v119/roh20a.html}.

\bibitem[Roh et~al.(2021)Roh, Lee, Whang, and Suh]{roh2021fairbatch}
Y.~Roh, K.~Lee, S.~E. Whang, and C.~Suh.
\newblock Fairbatch: Batch selection for model fairness.
\newblock In \emph{International Conference on Learning Representations
  (ICLR)}, 2021.
\newblock URL \url{https://openreview.net/forum?id=YNnpaAKeCfx}.

\bibitem[Wick et~al.(2019)Wick, panda, and Tristan]{wick2019unlock}
M.~Wick, s.~panda, and J.-B. Tristan.
\newblock Unlocking fairness: a trade-off revisited.
\newblock In \emph{Advances in Neural Information Processing Systems
  (NeurIPS)}, volume~32, 2019.
\newblock URL
  \url{https://proceedings.neurips.cc/paper/2019/file/373e4c5d8edfa8b74fd4b6791d0cf6dc-Paper.pdf}.

\bibitem[Yue et~al.(2021)Yue, Nouiehed, and Kontar]{yue2021gifair}
X.~Yue, M.~Nouiehed, and R.~A. Kontar.
\newblock Gifair-fl: An approach for group and individual fairness in federated
  learning.
\newblock \emph{arXiv preprint arXiv:2108.02741}, 2021.

\bibitem[Zafar et~al.(2017{\natexlab{a}})Zafar, Valera, Gomez~Rodriguez, and
  Gummadi]{Zafar2017Mistreatment}
M.~B. Zafar, I.~Valera, M.~Gomez~Rodriguez, and K.~P. Gummadi.
\newblock Fairness beyond disparate treatment \& disparate impact: Learning
  classification without disparate mistreatment.
\newblock In \emph{Proceedings of the 26th International Conference on World
  Wide Web}, page 1171–1180, 2017{\natexlab{a}}.

\bibitem[Zafar et~al.(2017{\natexlab{b}})Zafar, Valera, Rodriguez, Gummadi, and
  Weller]{zafar2019preferece}
M.~B. Zafar, I.~Valera, M.~G. Rodriguez, K.~P. Gummadi, and A.~Weller.
\newblock From parity to preference-based notions of fairness in
  classification.
\newblock In \emph{Advances in Neural Information Processing Systems (NIPS)},
  NIPS'17, page 228–238, 2017{\natexlab{b}}.
\newblock ISBN 9781510860964.

\bibitem[Zafar et~al.(2017{\natexlab{c}})Zafar, Valera, Rogriguez, and
  Gummadi]{Zafar2017FC}
M.~B. Zafar, I.~Valera, M.~G. Rogriguez, and K.~P. Gummadi.
\newblock {Fairness Constraints: Mechanisms for Fair Classification}.
\newblock In \emph{International Conference on Artificial Intelligence and
  Statistics (AISTATS)}, volume~54, pages 962--970, 2017{\natexlab{c}}.

\bibitem[Zemel et~al.(2013)Zemel, Wu, Swersky, Pitassi, and
  Dwork]{zemel2013learning}
R.~Zemel, Y.~Wu, K.~Swersky, T.~Pitassi, and C.~Dwork.
\newblock Learning fair representations.
\newblock In \emph{International conference on machine learning}, pages
  325--333. PMLR, 2013.

\bibitem[Zhang et~al.(2020{\natexlab{a}})Zhang, Kou, and Wang]{zhang2020fairfl}
D.~Y. Zhang, Z.~Kou, and D.~Wang.
\newblock Fairfl: A fair federated learning approach to reducing demographic
  bias in privacy-sensitive classification models.
\newblock In \emph{2020 IEEE International Conference on Big Data (Big Data)},
  pages 1051--1060, 2020{\natexlab{a}}.
\newblock \doi{10.1109/BigData50022.2020.9378043}.

\bibitem[Zhang et~al.(2020{\natexlab{b}})Zhang, Li, Robles-Kelly, and
  Kankanhalli]{zhang2020hierarchically}
J.~Zhang, C.~Li, A.~Robles-Kelly, and M.~Kankanhalli.
\newblock Hierarchically fair federated learning.
\newblock \emph{arXiv preprint arXiv:2004.10386}, 2020{\natexlab{b}}.

\bibitem[Zhao and Gordon(2019)]{zhao2019representation}
H.~Zhao and G.~Gordon.
\newblock Inherent tradeoffs in learning fair representations.
\newblock In \emph{Advances in Neural Information Processing Systems
  (NeurIPS)}, volume~32, 2019.
\newblock URL
  \url{https://proceedings.neurips.cc/paper/2019/file/b4189d9de0fb2b9cce090bd1a15e3420-Paper.pdf}.

\end{thebibliography}
\bibliographystyle{abbrvnat}

\appendix

\newpage
\section{Appendix - \ufl{}, \fflfedavg{}, CFL analysis}\label{appendix:UFL_FFL_CFL}
In this section, we provide the concrete analysis for \ufl{}, \fflfedavg{}, and CFL. 
For illustration purposes, we will start by discussing two clients cases, and then extend the analysis into more clients cases in Sec.~\ref{appendix:multi_clients}. 
We begin with the analysis of CFL in Sec.~\ref{appendix:cfl}. Then we analyze \ufl{} and \fflfedavg{}. To be specific, in Sec.~\ref{appendix:ufl}, we analyze the limitation of \ufl{} and present the formal version of~Lemma~\ref{lemma:informal_ufl_limit} under two clients cases and Corollary~\ref{cor:informal_gaussian}. 
In Sec.~\ref{appendix:ffl}, we analyze \fflfedavg{} and compare it with \ufl{} and CFL, then we present the formal two-client version of Thm.~\ref{thm:uflvsffl} and Lemma~\ref{lemma:ffl_limit}. 
All the multi-client statements are included in Sec.~\ref{appendix:multi_clients}
We summarize the commonly used notations in Table~\ref{tab:notation} .

\begin{table}
\begin{center}
\begin{tabular}[width = \textwidth]{c c c c c c}
 \hline
 \textbf{Symbol} & \textbf{Meaning} & \textbf{Symbol} & \textbf{Meaning} & \textbf{Symbol} & \textbf{Meaning}\\ [0.5ex]
 \hline
 $\mathrm{x}$ & feature &  $\indicator{\cdot}$ & indicator function & $\varepsilon_i$ & bias in client $i$\\
 $\mathrm{a}$ & sensitive attribute &  $f$ & randomized classifier &  $q$ & $\mathrm{a}\sim \text{Bern}(q)$\\
 $\mathrm{i}$ & client index & $\mathcal{P}_a^{(i)}$ & distribution & $\text{DP Disp}(f)$ & unfairness of $f$\\
 $\hat{\mathrm{y}}$ & predicted class & $\eta(x)$ & $\mathbb{P}(\rvy=1 | \rvx=x)$ &  $f_{\varepsilon_0,\varepsilon_1}^{\text{\ufl{}}}$ & \ufl{} classifier\\
 \hline
\end{tabular}
 
\end{center}
\caption{Commonly used notations.} 
\label{tab:notation}
\end{table} 

\subsection{CFL analysis}\label{appendix:cfl}
In this section, we analyze the CFL classifier $\fcfl$ given in~\ref{prob:cfl}. 
We mainly derive the solution of~\ref{prob:cfl} in Lemma~\ref{lemma:solution}. 
In Lemma~\ref{lemma:unique_solution} and Lemma~\ref{lemma:different_sign} we summarize the properties of $\fcfl$. For a given classifier $f$, we define the mean difference to be
\[\text{MD}(f) = \bP(\haty = 1\mid \rva = 0) - \bP(\haty = 1 \mid \rva = 1).\]

\begin{lemma}\label{lemma:solution}
Let $q\in(0,1)$. Define $g(\cdot): [-\max(q,1-q), \max(q,1-q)] \rightarrow [-1,1]$ as 
\begin{equation}
    g(\lambda) = \int_{[\eta^{-1}(\frac{1}{2} - \frac{\lambda}{2(1-q)}), +\infty]} \,\diffdchar\cP_0 - \int_{[\eta^{-1}(\frac{1}{2} + \frac{\lambda}{2q}), +\infty]} \,\diffdchar \cP_1,
\end{equation} 
then $\fcfl = \{\indicator{s(x,a)>0} + \alpha \indicator{s(x,a) = 0} : \alpha \in [0,1]\}$, where $s(x,0) = 2\eta(x) -1 + \frac{\lambda}{1-q}$, $s(x,1) = 2\eta(x) - 1 -\frac{\lambda}{q}$, $\lambda = g^{-1}(\sign(g(0))\min\{\veps, |g(0)|\})$. Here we denote the indicator function as $\indicator{E}:$ $\indicator{E}=1$ if $E$ is true, zero otherwise. 
\end{lemma}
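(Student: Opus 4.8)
The plan is to treat \ref{prob:cfl} as an infinite-dimensional linear program in $f$ and solve it by Lagrangian duality, recovering a group-wise threshold rule whose mean difference is exactly $g(\lambda)$, and then to pin down $\lambda$ by complementary slackness. First I would rewrite the objective: since $\haty\mid\rvx,\rva\sim\Ber(f(\rvx,\rva))$ and $\rvy\mid\rvx\sim\Ber(\eta(\rvx))$ are conditionally independent given $(\rvx,\rva)$, a direct computation gives $\bP(\haty\neq\rvy\mid\rvx,\rva)=\eta(\rvx)+f(\rvx,\rva)\,(1-2\eta(\rvx))$, so $\bP(\haty\neq\rvy)=\E[\eta(\rvx)]+\E[f(\rvx,\rva)(1-2\eta(\rvx))]$. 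Conditioning on $\rva$ with $\bP(\rva=0)=1-q$, $\bP(\rva=1)=q$, both the objective and $\md(f)=\int f(x,0)\,\diffdchar\cP_0-\int f(x,1)\,\diffdchar\cP_1$ become linear functionals of $f$. Hence \ref{prob:cfl} is a linear program over $f\in\cF$ with a single two-sided linear constraint $|\md(f)|\le\veps$.

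Next I would form the Lagrangian. Introducing a multiplier $\nu$ for the active side of the constraint and setting $\lambda=-\nu$, the Lagrangian is, up to the $f$-independent term $\E[\eta(\rvx)]-\nu\veps$,
\[
(1-q)\!\int\! f(x,0)\big[(1-2\eta(x))-\tfrac{\lambda}{1-q}\big]\diffdchar\cP_0+q\!\int\! f(x,1)\big[(1-2\eta(x))+\tfrac{\lambda}{q}\big]\diffdchar\cP_1 .
\]
The bracket for group $a$ equals $-s(x,a)$ with $s(x,0)=2\eta(x)-1+\tfrac{\lambda}{1-q}$ and $s(x,1)=2\eta(x)-1-\tfrac{\lambda}{q}$, so minimizing pointwise over $f(x,a)\in[0,1]$ gives $f(x,a)=1$ when $s(x,a)>0$, $f(x,a)=0$ when $s(x,a)<0$, and $f(x,a)=\alpha$ for arbitrary $\alpha\in[0,1]$ on the tie set $\{s(x,a)=0\}$. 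This reproduces the claimed form $f=\indicator{s(x,a)>0}+\alpha\indicator{s(x,a)=0}$.

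Then I would identify the constraint function. Since $\eta$ is strictly increasing (so $\eta^{-1}$ exists), $\{x:s(x,0)>0\}=\{x>\eta^{-1}(\tfrac12-\tfrac{\lambda}{2(1-q)})\}$ and $\{x:s(x,1)>0\}=\{x>\eta^{-1}(\tfrac12+\tfrac{\lambda}{2q})\}$, so the mean difference of the threshold rule equals $g(\lambda)$ verbatim; the stated domain $[-\max(q,1-q),\max(q,1-q)]$ is precisely the range of $\lambda$ before one threshold saturates an endpoint of $[0,1]$. Because increasing $\lambda$ lowers the group-$0$ threshold (raising $\bP(\haty=1\mid\rva=0)$) and raises the group-$1$ threshold (lowering $\bP(\haty=1\mid\rva=1)$), $g$ is monotone increasing, hence invertible on its range. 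I would then pin down $\lambda$ by complementary slackness: the unconstrained ($\lambda=0$) Bayes rule $\indicator{\eta(x)>1/2}$ has mean difference $g(0)$; if $|g(0)|\le\veps$ it is already feasible and optimal, giving $\lambda=0$; otherwise the constraint is active at the boundary with the sign of $g(0)$, forcing $g(\lambda)=\sign(g(0))\,\veps$. Both cases combine into $\lambda=g^{-1}\!\big(\sign(g(0))\min\{\veps,|g(0)|\}\big)$.

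The main obstacle is the rigorous optimality (strong duality) step: one must show the pointwise Lagrangian minimizer is globally optimal for the original constrained problem, which is a Neyman--Pearson-type statement complicated by the two-sided constraint. Concretely, I would verify the KKT conditions for the selected $\lambda$ — primal feasibility $|g(\lambda)|\le\veps$, the correct sign of $\nu$ (determined by $\sign(g(0))$), and complementary slackness — so that every feasible $f$ has risk at least that of the threshold rule. Care is also needed on the tie sets $\{s(x,a)=0\}$, which are handled by the randomization $\alpha$ and are $\cP_a$-null when $\rvx$ is continuous and $\eta$ strictly monotone, and at the boundary of the domain of $g$ where a threshold saturates; the monotonicity (and, for uniqueness, strict monotonicity) of $g$ needed to invert it I would defer to the companion lemmas (Lemma~\ref{lemma:unique_solution} and Lemma~\ref{lemma:different_sign}).
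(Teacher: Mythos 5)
Your proposal is correct and follows essentially the same route as the paper's proof: write the risk and the DP constraint as linear functionals of $f$, minimize the Lagrangian pointwise over $f(x,a)\in[0,1]$ to obtain the group-wise threshold rule with score $s(x,a)$, identify the mean difference of that rule as $g(\lambda)$, and invert the monotone $g$ to pin down $\lambda$ via the case split $|g(0)|\le\veps$ versus $|g(0)|>\veps$. The only difference is in how optimality is certified: you verify KKT/complementary slackness for the inequality-constrained program, whereas the paper first shows the constraint must be active at the optimum by a mixing argument (otherwise a convex combination of the optimum with the Bayes classifier would be feasible with strictly smaller risk) and then applies Lagrangian sufficiency to the equality-constrained problem; the two certifications are routine and interchangeable here.
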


\begin{proof}
The proof is similar as \citet{menon2018cost}. To solve~\ref{prob:cfl}, we first write the error rate and the fairness constraint as a linear function of $f$. Let $p_a(\cdot)$ be the pdf of $\cP_a$, where $a = 0,1$. 
Denote the joint distribution of $\rvx$ and $\rva$ as $p_{\rvx,\rva}(x,a)$. Note that 
\begin{equation}\label{eq:acc}
    \begin{aligned}
    & \bP(\haty \neq \rvy) \\ 
    =& \int_{\cX}\sum_{a\in \cA} \left[f(x,a)(1-\eta(x)) +  (1-f(x,a))\eta(x) \right]p_{\rvx,\rva}(x,a) \,\diffdchar x \\
    =&\bE_{\rvx, \rva} f(\rvx, \rva) (1-2\eta(\rvx)) + \bP(\rvy = 1)
    \end{aligned}
\end{equation}
and 
\begin{equation}\label{eq:fair}
    \begin{aligned}
    & \bP(\haty =1\mid \rva = 0) - \bP(\haty=1\mid \rva = 1) \\
    =&\int_\cX f(x,0)p_0 (x) \,\diffdchar x - \int_\cX f(x,1)p_1 (x) \,\diffdchar x\\
    =& \int_{\cX}\sum_{a\in \cA} \indicator{a = 0}f(x,0)\frac{p_{\rvx,\rva}(x,a)}{\bP(\rva = 0)}\,\diffdchar x
     - \int_{\cX}\sum_{a\in \cA}  \indicator{a = 1}f(x,1)\frac{p_{\rvx,\rva}(x,a)}{\bP(\rva = 1)}\,\diffdchar x
     \\
    =& \bE_{\rvx, \rva} \left[f(\rvx,0)\frac{\indicator{\rva = 0}}{1-q} - f(\rvx,1)\frac{\indicator{\rva = 1}}{q}\right].
    \end{aligned}
\end{equation}
Consequently, our goal becomes solving 
\begin{equation}\label{prob:lemma1}
    \begin{array}{c}
         \min_{f\in\cF}  \bE_{\rvx, \rva} f(\rvx, \rva) (1-2\eta(\rvx)) + \bP(\rvy = 1)\\
         \st.~|\bE_{\rvx, \rva} \left[f(\rvx,0)\frac{\indicator{\rva = 0}}{1-q} - f(\rvx,1)\frac{\indicator{\rva = 1}}{q}\right]| \leq \veps.
    \end{array}
\end{equation}
Denote the function that minimizes the error rate (ERM) as $\tdf\in\cF$. It is easy to see that,
\begin{equation}
    \tdf(x) \in \left\{\indicator{\eta(x) > 1/2} + \alpha \indicator{\eta(x) = 1/2}: \alpha \in [0,1]\right\}.
\end{equation}

Next, consider the following three cases. 
In particular, we provide the proof for $|\md(\tdf)| \leq \veps$ and $\md(\tdf) > \veps$. 
The proof for $\md(\tdf) < -\veps$ is similar as the proof for $\md(\tdf) > \veps$.
\begin{case}
$|\md(\tdf)| \leq \veps$: ERM is already fair.\end{case}
The solution to~\eqref{prob:lemma1} and~\ref{prob:cfl} is $\tdf$.

\begin{case}
$\md(\tdf) > \veps$: ERM is favoring group 0 over group 1. \end{case}
We will show that solving~\eqref{prob:lemma1} is equivalent to solving an unconstrained optimization problem. 

First, we will prove by contradiction that the solution $f^\star \in \cF$ to~\eqref{prob:lemma1} satisfies $\md(f^\star) = \veps$. 
We use $f^\star \in \cF$ to denote the solution of~\eqref{prob:lemma1}. 
Suppose the above claim does not hold. Then we have $\md(f^\star) < \veps$. 
To show the contradiction, we construct a $f^\prime \in \cF$ that satisfies the fairness constraint and has a lower error rate than that of $f^\star$. 
Let $f^\prime $ be a linear combination of $f^\star$ and $\tdf$:
\begin{equation}
    f^\prime = a f^\star + (1-a) \tdf,
\end{equation}
where $a = \frac{\md(\tdf) - \veps}{\md(\tdf) - \md(f^\star)} \in (0,1)$. 
Then we obtain 
\begin{equation}
    \md(f^\prime) = a \md(f^\star) + (1-a) \md(\tdf) = \frac{\veps (\md(\tdf) - \md(f^\star))}{\md(\tdf) - \md(f^\star)} = \veps.
\end{equation}
Denote the error rate $\bP\set{\haty \neq \rvy}$ as $e: \cF \rightarrow [0,1]$. 
Then the error rate of $f^\prime$ is
\begin{equation}
    e(f^\prime) = a e(f^\star) + (1-a)e(\tdf) < e(f^\star),
\end{equation}
which is inconsistent to the optimality assumption of $f^\star$. 
Therefore, $\md(f^\star) = \veps$. 

Now, solving~\ref{prob:cfl} is equivalent to solving 
\begin{equation}
    \begin{array}{c}
         \min_{f\in\cF}  \bE_{\rvx, \rva} f(\rvx, \rva) (1-2\eta(\rvx)) + \bP(\rvy = 1)\\
         \st.~|\bE_{\rvx, \rva} \left[f(\rvx,0)\frac{\indicator{\rva = 0}}{1-q} - f(\rvx,1)\frac{\indicator{\rva = 1}}{q}\right]| = \veps.
    \end{array}
\end{equation}
Furthermore, the optimization problem above is also equivalent to
\begin{eqnarray}
         &\min_{f\in\cF}&  \bE_{\rvx, \rva} f(\rvx, \rva) (1-2\eta(\rvx)) - \lambda \bE_{\rvx, \rva} \left( f(\rvx,0)\frac{\indicator{\rva = 0}}{1-q} - f(\rvx,1)\frac{\indicator{\rva = 1}}{q}\right) \label{eq:lemma1_min}\\
         &&~~~~~~~\st.~|\bE_{\rvx, \rva} \left[f(\rvx,0)\frac{\indicator{\rva = 0}}{1-q} - f(\rvx,1)\frac{\indicator{\rva = 1}}{q}\right]| = \veps \label{eq:lemma1_st},
\end{eqnarray}
for all $\lambda \in \bR$. 

Next, our goal is to select a suitable $\lambda$ such that the constrained optimization problem above becomes an unconstrained problem, \ie, we will select a suitable $\lambda$ such that the minimizer to the unconstrained optimization problem~\eqref{eq:lemma1_min}
satisfies equality constraint~\eqref{eq:lemma1_st}. 

Note that 
\begin{align}
    & \bE_{\rvx, \rva} f(\rvx, \rva) (1-2\eta(\rvx))  - \lambda \bE_{\rvx, \rva} \left( f(\rvx,0)\frac{\indicator{\rva = 0}}{1-q} - f(\rvx,1)\frac{\indicator{\rva = 1}}{q}\right) \\
    = &\bE_{\rvx, \rva} f(\rvx, \rva) \left(1-2\eta(\rvx)  - \lambda  \frac{\indicator{\rva = 0}}{1-q} + \lambda \frac{\indicator{\rva = 1}}{q}\right),
\end{align}
then the solution to unconstrained optimization problem~\eqref{eq:lemma1_min} is 
\begin{equation}
    \bar{f} \in \set{\indicator{s(x,a) > 0} + \alpha \indicator{s(x,a) = 0}: \alpha \in[0,1]},
\end{equation}
where 
\begin{equation}
    s(x,a) = \begin{cases} 
      -1+2\eta(x) + \frac{\lambda}{1-q} & a = 0\\
      -1+2\eta(x) - \frac{\lambda}{q} & a = 1
   \end{cases}.
\end{equation}
\begin{figure}
    \centering
    \includegraphics[width = 0.5\textwidth]{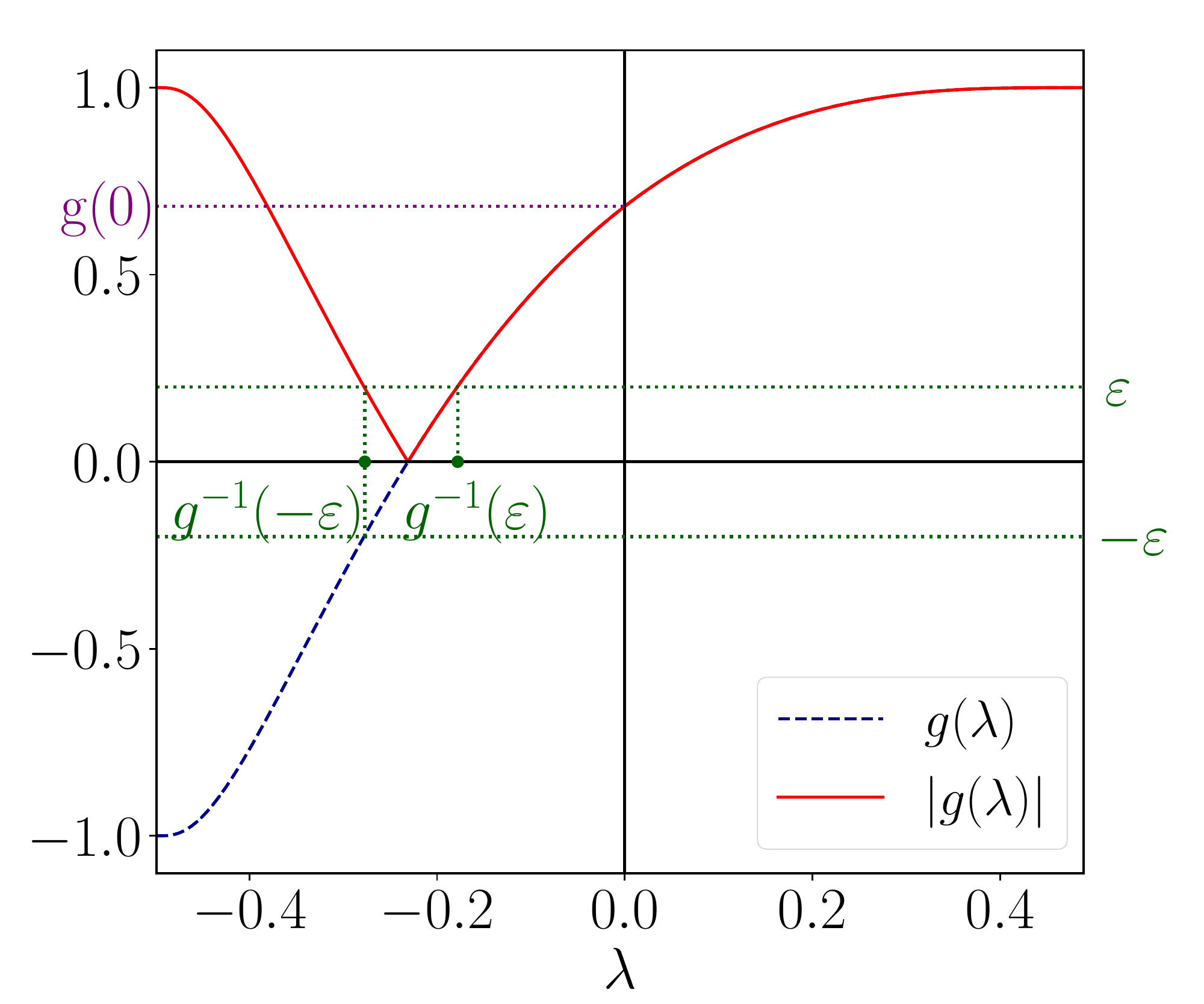}
    \caption{Visualization of $g(\lambda)$ and $|g(\lambda)|$ under \textbf{Case 2:}  $\md(\tdf) > \veps$. When $g(0) = \md(\Tilde{f}) >\veps \geq0$, the corresponding $\lambda$ of the best classifier is $\lambda = g^{-1}(\veps) < 0$.}
    \label{fig:g}
\end{figure}
Since the range of $\eta(x)$ is [0,1], $\bar{f}$ with $\lambda > \max(q,1-q)$ is no different from $\bar{f}$ with $\lambda = \max(q,1-q)$; $\bar{f}$ with $\lambda < -\max(q,1-q)$ is no different from $\bar{f}$ with $\lambda = -\max(q,1-q)$. 
Therefore, the only thing left is to find the $\lambda \in [-\max(q,1-q), \max(q,1-q)]$ such that $\bar{f}$ satisfies the constraint~\eqref{eq:lemma1_st}. 

Now consider the mean difference between the positive rate of two groups:
\begin{equation}\label{eq:g}
    \begin{aligned}
    \md(\bar{f}) & = \bP(\haty = 1\mid \rva = 0) - \bP(\haty = 1\mid \rva = 1) \\
    & = \int_{-\infty}^{+\infty} \indicator{\eta(x) > \frac{1}{2} - \frac{\lambda}{2(1-q)}} \, \diffdchar\cP_0 - \int_{-\infty}^{+\infty} \indicator{\eta(x) > \frac{1}{2} + \frac{\lambda}{2q}} \, \diffdchar\cP_1 \\ 
    & = \int_{[\eta^{-1}(\frac{1}{2} - \frac{\lambda}{2(1-q)}), +\infty]} \,\diffdchar\cP_0 - \int_{[\eta^{-1}(\frac{1}{2} + \frac{\lambda}{2q}), +\infty]} \,\diffdchar\cP_1 \\
    & =g(\lambda).
    \end{aligned}
\end{equation}
Note that $g(\cdot): [-\max(q,1-q), \max(q,1-q)] \rightarrow [-1,1]$ is a strictly monotone increasing function. Consequently, if and only if $\lambda = g^{-1}(\veps)$, $\bar{f}$ satisfies~\eqref{eq:lemma1_st}. 
Recall that optimization problem~\eqref{eq:lemma1_min} with constraint~\eqref{eq:lemma1_st} is equivalent as~\ref{prob:cfl}. 
Thus, let $\lambda = g^{-1}(\veps)$ and $\bar{f}$ is the solution of~\ref{prob:cfl}. 

\begin{case}
$\md(\tdf) < -\veps$: ERM is favoring group 1 over group 0.\end{case}

Similarly, like Case 2, we obtain that the solution~\ref{prob:cfl} is
\begin{equation}
    \bar{f} \in \set{\indicator{s(x,a) > 0} + \alpha \indicator{s(x,a) = 0}: \alpha \in[0,1]},
\end{equation}
where 
\begin{equation}
    s(x,a) = \begin{cases} 
      -1+2\eta(x) + \frac{\lambda}{1-q} & a = 0\\
      -1+2\eta(x) - \frac{\lambda}{q} & a = 1
   \end{cases},
\end{equation}
and $\lambda = g^{-1}(-\veps)$.
Combining all the cases above yields the desired conclusion. The proof is now complete.
\end{proof}

\begin{remark}
Select $\alpha = 0$, then the solution to~\ref{prob:cfl} can be written as 
\begin{equation}\label{eq:thresholdform}
    f(x,a) = \begin{cases} 
      \indicator{\eta(x) > \frac{1}{2} - \frac{\lambda}{2(1-q)}} & a= 0\\
      \indicator{\eta(x) > \frac{1}{2} + \frac{\lambda}{2q}} & a = 1
   \end{cases}.
\end{equation}
Therefore, Lemma~\ref{lemma:solution} implies that the best classifier of the CFL problem is equivalent to simply applying a constant threshold to the class-probabilities for each value of the sensitive feature. 
\end{remark}

Lemma~\ref{lemma:solution} suggests the following property of the solution to~\ref{prob:cfl}.
\begin{lemma}\label{lemma:unique_solution}
If $f$ and $g$ are two solutions to~\ref{prob:cfl}, then $f=g$ almost everywhere. 
\end{lemma}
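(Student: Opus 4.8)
The plan is to read off everything from the explicit characterization just established in Lemma~\ref{lemma:solution}. That lemma shows that \emph{every} optimal $f$ for~\ref{prob:cfl} has the form $\indicator{s(x,a)>0}+\alpha\indicator{s(x,a)=0}$ for some $\alpha\in[0,1]$, where $s(x,0)=2\eta(x)-1+\tfrac{\lambda}{1-q}$ and $s(x,1)=2\eta(x)-1-\tfrac{\lambda}{q}$. The first thing I would nail down is that the scalar $\lambda=g^{-1}(\sign(g(0))\min\{\veps,|g(0)|\})$ is identical for the two solutions $f$ and $g$: it is a function only of $\veps$ and the fixed data distribution, and it is well defined because $g(\cdot)$ is strictly monotone increasing on $[-\max(q,1-q),\max(q,1-q)]$ and hence invertible. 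Therefore $f$ and $g$ are assembled from the \emph{same} threshold function $s$, and the only freedom distinguishing them is the choice of the parameter $\alpha$ multiplying $\indicator{s(x,a)=0}$.

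Next I would localize the possible disagreement. Wherever $s(x,a)\neq 0$ we have $f(x,a)=\indicator{s(x,a)>0}=g(x,a)$, so the two classifiers coincide; any difference is confined to the set $\{(x,a):s(x,a)=0\}$. For $a=0$ this set is $\{x:\eta(x)=\tfrac12-\tfrac{\lambda}{2(1-q)}\}$, and for $a=1$ it is $\{x:\eta(x)=\tfrac12+\tfrac{\lambda}{2q}\}$. It thus remains to argue that these loci are negligible under the measure implicit in ``almost everywhere,'' namely the distributions $\cP_0,\cP_1$ (equivalently Lebesgue measure, since $\rvx$ is assumed continuous).

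The decisive structural input is that $\eta$ is strictly monotone increasing, hence injective: each equation $\eta(x)=c$ has at most one solution, so for each fixed $a$ the indeterminate set is a single point (or empty). Since $\rvx$ is a continuous random variable, any single point carries zero mass under $\cP_0$, $\cP_1$, and Lebesgue measure, so $\{(x,a):s(x,a)=0\}$ is a null set and $f=g$ almost everywhere. There is no genuinely hard step here; the only points requiring care are confirming that both solutions share the \emph{same} $\lambda$ (which is exactly the strict monotonicity/invertibility of $g$) and interpreting ``almost everywhere'' against the data measure, where continuity of $\rvx$ together with injectivity of $\eta$ forces the tie-breaking locus to be at most two threshold points.
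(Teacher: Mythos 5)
Your proof is correct and takes exactly the route the paper intends: the paper states this lemma with no explicit proof, presenting it as an immediate consequence of Lemma~\ref{lemma:solution}, and your argument is precisely the fill-in of that implication. Specifically, you correctly identify the two key points left implicit—both solutions share the same $\lambda$ because $g(\cdot)$ is strictly monotone (hence invertible), so they can differ only on the tie set $\{(x,a): s(x,a)=0\}$, which consists of at most one point of $\cX$ per value of $a$ by strict monotonicity of $\eta$ and is therefore null since $\rvx$ is continuous.
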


For illustration purposes, we denote 
\begin{equation}\label{lambda}
\lbdcfl=g^{-1}(\sign(g(0))\min\{\veps, |g(0)|\}).    
\end{equation}
Below we summarize some useful properties of $\lbdcfl$. 

\begin{lemma}\label{lemma:different_sign}
The sign of $\lbdcfl$ and $\md(\fcfl)$ are determined by $g(0)$. 
\begin{enumerate} 
    \item If $\veps<|g(0)|$, then $\md(\fcfl) = \lbdcfl \neq 0$, and $g(\lbdcfl) = \sign(g(0))\veps$. If $|g(0)|\leq \veps,$ then $\lbdcfl = 0$ and $\md(\fcfl) = g(\lbdcfl) = g(0)$.

    \item If $g(0)>0$ or $g(0)<0$, then for any $\veps\geq 0$, we have $\lambda \leq 0$ or $\lambda \geq 0$, respectively.
\end{enumerate}
\end{lemma}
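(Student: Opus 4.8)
The plan is to read everything off from two facts already secured in Lemma~\ref{lemma:solution}: first, that $g$ is strictly monotone increasing on $[-\max(q,1-q),\max(q,1-q)]$, so that $g$ is injective and its inverse $g^{-1}$ is itself strictly increasing on the range of $g$; and second, that the mean difference of the optimal classifier equals $g$ evaluated at the chosen multiplier, i.e.\ $\md(\fcfl) = g(\lbdcfl)$, which is exactly the identity derived in~\eqref{eq:g}. Granting these, the whole lemma reduces to unwinding the definition $\lbdcfl = g^{-1}(\sign(g(0))\min\{\veps,|g(0)|\})$ and bookkeeping the signs.

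For Part 1, I would split on which term achieves the minimum. When $\veps < |g(0)|$, the minimum equals $\veps$, so by definition $g(\lbdcfl) = \sign(g(0))\veps$, whence $\md(\fcfl) = g(\lbdcfl) = \sign(g(0))\veps$. To obtain $\lbdcfl \neq 0$, note that $g(\lbdcfl) = \sign(g(0))\veps \neq \sign(g(0))|g(0)| = g(0)$ because $\veps \neq |g(0)|$; since $g^{-1}(g(0)) = 0$, injectivity of $g$ forces $\lbdcfl \neq 0$. When instead $|g(0)| \leq \veps$, the minimum equals $|g(0)|$, so $g(\lbdcfl) = \sign(g(0))|g(0)| = g(0)$, and injectivity gives $\lbdcfl = 0$ together with $\md(\fcfl) = g(\lbdcfl) = g(0)$.

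For Part 2, I would invoke monotonicity of $g^{-1}$ directly. If $g(0) > 0$, then $\sign(g(0)) = 1$ and $g(\lbdcfl) = \min\{\veps, g(0)\} \leq g(0)$ for every $\veps \geq 0$; applying the increasing map $g^{-1}$ and using $g^{-1}(g(0)) = 0$ yields $\lbdcfl \leq 0$. The case $g(0) < 0$ is symmetric: here $g(\lbdcfl) = -\min\{\veps,|g(0)|\} \geq -|g(0)| = g(0)$, so $\lbdcfl \geq 0$.

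The steps are routine once Lemma~\ref{lemma:solution} is in hand, so I do not expect a genuine obstacle; the content is entirely bookkeeping with the definition of $\lbdcfl$ and the monotonicity of $g$. The only points that deserve a word of care are that the argument $\sign(g(0))\min\{\veps,|g(0)|\}$ fed into $g^{-1}$ lies in the range of $g$ (it sits on the segment between $0$ and $g(0)$, both attained by $g$, so this is already inherited from the well-definedness of $\lbdcfl$ in Lemma~\ref{lemma:solution}), and that it is \emph{injectivity} of $g$ that upgrades the value identities $g(\lbdcfl) = g(0)$ and $g(\lbdcfl) \neq g(0)$ into the vanishing/non-vanishing conclusions $\lbdcfl = 0$ and $\lbdcfl \neq 0$.
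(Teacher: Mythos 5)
Your proposal is correct and follows essentially the same route as the paper: both unwind the definition $\lbdcfl = g^{-1}(\sign(g(0))\min\{\veps,|g(0)|\})$, use the identity $\md(\fcfl)=g(\lbdcfl)$ from Lemma~\ref{lemma:solution}, and conclude via strict monotonicity of $g$ (hence of $g^{-1}$) together with $g^{-1}(g(0))=0$. The paper's proof is merely terser (Part~1 is dismissed as immediate from the definition, and Part~2 is a three-case monotonicity check), so your extra bookkeeping on injectivity and the range of $g^{-1}$ is just a more explicit rendering of the same argument.
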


\begin{proof}
The first property follows directly from the definition of $\lbdcfl$. Next, we prove the second property.

When $\veps > |g(0)|$, we have $\lambda=0$ and the first property holds. 
When $g(0)>\veps>0$, by the definition of $\lbdcfl$, we have $\lambda = g^{-1}(\veps) < g^{-1}(g(0)) =0$. 
When $g(0)<-\veps<0$, we have $\lambda = g^{-1}(-\veps) > g^{-1}(g(0)) = 0$. Combining all the cases above yields the desired conclusion.
\end{proof}

\subsection{\ufl{} analysis}\label{appendix:ufl}
With the analysis of CFL in Sec.~\ref{appendix:cfl}, in this section, we analyze the \ufl{} classifier~\ref{prob:ufl} for the case of two clients. For illustration purpose, with $I=2$, we denote $\fufl=\fuflm=(f_0^{\veps_0}+f_1^{\veps_1})/2$.
In Sec.~\ref{appendix:ufl_setting} we introduce some notations for the \ufl{} classifier $\fufl$ that follows from Lemma~\ref{lemma:solution}. 
In Sec.~\ref{appendix:ufl_limit} we analyze the limitation of $\fufl$ as stated in Sec.~\ref{sec:necessity}. 
To be more specific, we present the two clients' version of Lemma~\ref{lemma:informal_ufl_limit}, formal version of Corollary~\ref{cor:informal_gaussian} and conclude their proof. 
In Sec.~\ref{appendix:ufl_cfl}, we analyze the performance gap between $\fufl$ and CFL classifier $\fcfl$.

\subsubsection{Problem setting}\label{appendix:ufl_setting}
 By Lemma~\ref{lemma:solution}, the solution to~\ref{prob:ufl} is
\begin{equation}
        f_i^{\veps_i}(x,a) = \begin{cases} 
      \indicator{\eta(x) > \frac{1}{2} - \frac{\lbdufl}{2(1-q)}} & a= 0\\
      \indicator{\eta(x) > \frac{1}{2} + \frac{\lbdufl}{2q}} & a = 1
   \end{cases},
\end{equation}
where the associated $\lbdufl$ is defined as
\begin{equation}\label{lambda_i}
\lbdufl = g_i^{-1}(\sign(g_i(0)) \min(\veps_i, |g_i(0)|))     
\end{equation}
and
\[g_i(\lambda) = \int_{[\eta^{-1}(\frac{1}{2} - \frac{\lambda}{2(1-q)}), +\infty)} \, \diffdchar\cP_0^{(i)} - \int_{[\eta^{-1}(\frac{1}{2} + \frac{\lambda}{2q}), +\infty)} \, \diffdchar\cP_1^{(i)}.\] 
Note that $g_i(\lambda)$ is the mean difference on $i$th client $\bE_{\rvx \sim \cP_0^{(i)}}f(\rvx, 0) - \bE_{\rvx \sim \cP_1^{(i)}}f(\rvx, 1)$ of the classifier of the form~\eqref{eq:thresholdform}. Now, the demographic disparity for $\fufl$ can be written as
\begin{equation}\label{dp_\ufl{}}
    \begin{split}
       \text{DP Disp}(\fufl)    &  = \left|\frac{1}{2}\sum_{i=0}^1 \left[ \bP(\haty = 1\mid \rva = 0, \rvi = i) - \bP(\haty = 1 \mid \rva = 1, \rvi = i) \right]\right| \\
       & = \left|\frac{1}{4}\sum_{i,j=0}^1 \left[\bE_{\rvx \sim \cP_0^{(i)}}f_j^{\veps_j}(\rvx, 0) - \bE_{\rvx \sim \cP_1^{(i)}}f_j^{\veps_j}(\rvx, 1)  \right]\right|  \\
    &  = |\frac{1}{4}\left(g_0(\lbduflo) +  g_0(\lbdufll)+g_1(\lbduflo)+g_1(\lbdufll)\right)|. 
    \end{split}
\end{equation}
For ease of notation, we define local mean difference on $i$th client as $\text{MD}_i(f) = \bP(\haty = 1\mid \rva = 0, \rvi = i) - \bP(\haty = 1 \mid \rva = 1, \rvi = i)$ and local demographic disparity on $i$th client as $\text{DP Disp}_i(f) = |\text{MD}_i(f)|$, where $i = 0,1$. 
Since the overall distribution of the data samples is $\rvx \mid \rva = a \sim \cP_a = \cP_a^{(0)}/2 + \cP_{a}^{(1)}/2$, $a=0,1$, $g$ (see the definition of $g$ in Lemma~\ref{lemma:solution}) and $g_0,g_1$ has the following relation:
\[g(\lambda) = \frac{1}{2}g_0(\lambda) + \frac{1}{2} g_1(\lambda).\]

\subsubsection{Limitation of \ufl{}}\label{appendix:ufl_limit}
In this section, we mainly analyze the limitation of \ufl{} in Lemma~\ref{lemma:ufl_limit}, which shows that $\fufl$ can not achieve $0$ demographic disparity in certain cases. Corollary~\ref{cor:gaussian} is a specific example of Lemma~\ref{lemma:ufl_limit}.
\begin{lemma}[Formal version of Lemma~\ref{lemma:informal_ufl_limit} under two clients cases]\label{lemma:ufl_limit}
Let $q\in(0,1)$. Let $c = \min\{|g_0(0)|, |g_1(0)|\}$. Define $\psi: [0,c]\times[0,c] \rightarrow [-1,1]$ as
\begin{equation}\label{def:psi}
\begin{split}
 \psi(\veps_0, \veps_1) = \md(\fufl)   &= \frac{1}{4}g_0(g_1^{-1}(\sign(g_1(0)) \veps_1)) + \frac{1}{4}g_1(g_0^{-1}(\sign(g_0(0))\veps_0)) \\
    & + \frac{1}{4}\sign(g_0(0)) \veps_0 + \frac{1}{4} \sign(g_1(0))\veps_1.
\end{split}
\end{equation}
 If $g_0(0)g_1(0) < 0$ and $\psi(\veps_0, \veps_1)(g_0(0) + g_1(0)) > 0 $ for all $\veps_0, \veps_1 \in [0, c]$, then for all $\veps_0, \veps_1 \in [0,1]$, $\text{DP Disp}(\fufl) \geq \delta= \min\{|\psi(\veps_0, \veps_1)|: \veps_0, \veps_1 \in [0, c]\} > 0$.
\end{lemma}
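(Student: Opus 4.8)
The plan is to reduce the four-term expression for $\disp(\fufl)$ to a transparent form and then run a monotonicity argument on the space of effective fairness budgets. Writing $\lambda_0 := \lbduflo$ and $\lambda_1 := \lbdufll$ for the two local thresholds and using the pooled identity $g(\lambda) = \tfrac12\big(g_0(\lambda)+g_1(\lambda)\big)$ recorded at the end of Sec.~\ref{appendix:ufl_setting}, I would regroup the four terms in the $\disp(\fufl)$ formula of Sec.~\ref{appendix:ufl_setting} \emph{by threshold} rather than by client:
\begin{equation}
\md(\fufl) = \tfrac14\big(g_0(\lambda_0)+g_1(\lambda_0)\big) + \tfrac14\big(g_0(\lambda_1)+g_1(\lambda_1)\big) = \tfrac12\big(g(\lambda_0)+g(\lambda_1)\big).
\end{equation}
This recasts the ensemble's unfairness as the average of the \emph{pooled} mean-difference function $g$ at the two local thresholds. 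Since $\lambda_i = g_i^{-1}(\sign(g_i(0))\min(\veps_i,|g_i(0)|))$ depends on $\veps_i$ only through the capped budget $e_i := \min(\veps_i,|g_i(0)|)$, letting each $\veps_i$ range over $[0,1]$ is the same as letting $e_i$ range over $[0,|g_i(0)|]$. Hence it suffices to lower-bound $|\tpsi(e_0,e_1)|$ on the rectangle $R := [0,|g_0(0)|]\times[0,|g_1(0)|]$, where $\tpsi$ denotes the extension of $\psi$ to $R$ given by the same formula~\eqref{def:psi}; on the subrectangle $[0,c]^2$ it coincides with $\psi$.

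Next I would establish the two ingredients that drive the bound. First, $\delta>0$: because $\eta$ and $\rvx$ are continuous, each $g_i$ (hence $g_i^{-1}$ and $\psi$) is continuous, so $\psi$ attains its minimum modulus on the compact set $[0,c]^2$; the hypothesis $\psi(\veps_0,\veps_1)\big(g_0(0)+g_1(0)\big) > 0$ forces $\psi$ to be nonvanishing with constant sign $\sign(g_0(0)+g_1(0))$ there, whence $\delta = \min_{[0,c]^2}|\psi| > 0$. Second, monotonicity: by the symmetry of $g$ and $\md$ under relabeling the clients, together with $g_0(0)g_1(0)<0$, assume without loss of generality $g_0(0)>0>g_1(0)$. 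Then $e_0\mapsto \lambda_0 = g_0^{-1}(e_0)$ is increasing and $e_1 \mapsto \lambda_1 = g_1^{-1}(-e_1)$ is decreasing, so since $g$ is strictly increasing, $\tpsi = \tfrac12\big(g(\lambda_0)+g(\lambda_1)\big)$ is strictly increasing in $e_0$ and strictly decreasing in $e_1$ on all of $R$.

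The crux is to show that the minimum of $|\tpsi|$ over the full rectangle $R$ is still attained inside $[0,c]^2$, which requires the sign of $g_0(0)+g_1(0)$ to cooperate with the monotonicity directions. With $g_0(0)>0>g_1(0)$ one has $g_0(0)+g_1(0)>0 \iff |g_0(0)|>|g_1(0)|$, giving exactly two scenarios. If $|g_0(0)|>|g_1(0)|$, then $c=|g_1(0)|$, so $R$ exceeds $[0,c]^2$ only in the $e_0$-direction, where $\tpsi>0$; since $\tpsi$ is increasing in $e_0$, for any such point $\tpsi(e_0,e_1)\ge \tpsi(c,e_1)\ge\delta$. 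If instead $|g_0(0)|<|g_1(0)|$, then $c=|g_0(0)|$, so $R$ exceeds $[0,c]^2$ only in the $e_1$-direction, where $\tpsi<0$; since $\tpsi$ is decreasing in $e_1$, there $\tpsi(e_0,e_1)\le \tpsi(e_0,c)\le -\delta$. In either scenario $|\tpsi|\ge\delta$ on the extension, and combined with the bound on $[0,c]^2$ this yields $\disp(\fufl) = |\tpsi(e_0,e_1)|\ge\delta$ on all of $R$, i.e. for all $\veps_0,\veps_1\in[0,1]$; the borderline $g_0(0)+g_1(0)=0$ cannot occur, as it would violate the strict inequality in the hypothesis.

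The main obstacle I anticipate is precisely this last coordination step: verifying that in each scenario the direction in which $R$ exceeds $[0,c]^2$ is the direction along which the monotone $\tpsi$ moves \emph{away} from zero, so that no new, smaller value of $|\tpsi|$ appears outside $[0,c]^2$. Establishing the simplified identity $\md(\fufl)=\tfrac12\big(g(\lambda_0)+g(\lambda_1)\big)$ and the two opposite monotonicities is routine once the pooled relation for $g$ is in hand; the care lies entirely in pairing the sign regime supplied by the hypotheses against the geometry of $R$.
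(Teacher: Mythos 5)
Your proposal is correct, and it reaches the lemma by a genuinely different organization of the argument than the paper's. The paper works throughout with the client-wise four-term expression $\tfrac14\big(g_0(\lbduflo)+g_0(\lbdufll)+g_1(\lbduflo)+g_1(\lbdufll)\big)$, fixes WLOG $|g_0(0)|<|g_1(0)|$ and $g_1(0)>0$, and then verifies five separate cases according to whether each $\veps_i$ exceeds $|g_0(0)|$ or $|g_1(0)|$, bounding each case below by a value of $\psi$ on $[0,c]^2$ via hand-tailored comparison inequalities (and then repeats the whole case analysis for $g_1(0)<0$). You instead regroup by threshold, using the pooled relation $g=\tfrac12(g_0+g_1)$ to write $\md(\fufl)=\tfrac12\big(g(\lbduflo)+g(\lbdufll)\big)$ --- a decomposition the paper never writes down --- and reparametrize by the capped budgets $e_i=\min(\veps_i,|g_i(0)|)$, which collapses the paper's Cases 1, 2, 4, 5 into a single rectangle-extension statement: minimizing over $[0,1]^2$ is minimizing the extended $\tpsi$ over $R=[0,|g_0(0)|]\times[0,|g_1(0)|]$. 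Your coordinate-wise monotonicity of $\tpsi$ plus the observation that $\sign\big(g_0(0)+g_1(0)\big)$ determines which side of $R$ protrudes beyond $[0,c]^2$ is exactly the sign/monotonicity coordination the paper's cases check one by one; your version also makes explicit two points the paper leaves implicit, namely that $\delta>0$ follows from continuity and compactness once $\psi$ is nonvanishing, and that $g_0(0)+g_1(0)=0$ is ruled out by the strict hypothesis. What your route buys is brevity and symmetry (one WLOG, two scenarios, no duplication over the sign of $g_1(0)$); it also extends to the multi-client weighted setting of Lemma~\ref{lemma:ufl_limit_multi}, since the same regrouping gives $\md = J_0\tg(\lbduflop)+J_1\tg(\lbdufllp)$ with $\tg=J_0\tg_0+J_1\tg_1$. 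What the paper's pedestrian route buys is mainly that each case exhibits the exact pointwise inequality being used, at the cost of length. The regularity you invoke (continuity and strict monotonicity of $g_i$, hence of $g_i^{-1}$) is the same standing assumption the paper relies on in Lemma~\ref{lemma:solution}, so you are not assuming anything extra.
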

\begin{proof}
Define $\delta = \min\{|\psi(\veps_0, \veps_1)|: \veps_0, \veps_1 \in [0, c]\}$. 
The goal is to show that the demographic disparity has a positive lower bound. 
Note that the mean difference can be expressed as
\begin{equation}\label{\ufl{}_dp}
   \md(\fufl) = \frac{1}{4}\left(g_0(\lbduflo) +  g_0(\lbdufll)+g_1(\lbduflo)+g_1(\lbdufll)\right). 
\end{equation}
In the following proof, we will show, the mean difference cannot reach 0.

Without any loss of generality, assume $|g_0(0)| < |g_1(0)|$. 
First we consider $g_1(0)>0$. We will discuss $g_1(0)<0$ later. 
By $g_0(0)g_1(0)<0$ and $\psi(\veps_0, \veps_1)(g_0(0)+ g_1(0)) > 0$ for all $\veps_0, \veps_1 \in [0, c]$, we have $g_0(0)<0$ and $\psi(\veps_0, \veps_1) > 0$ for all $\veps_0, \veps_1 \in [0, c]$. 

First, we will prove that \ufl{} achieves its lowest mean difference when $\veps_0, \veps_1 \in [0, c]$. 
In what follows, we consider five different cases to derive the desired result.

\begin{case}
$\veps_0 > |g_0(0)|, \veps_1 > |g_1(0)|$: ERM is fair on both clients.
\end{case} By~\eqref{lambda_i}, we have $\lbduflo = \lbdufll = 0$. 
Recall $g_i(\cdot)$ is a monotone increasing function, we combine $g_1(0)>0$ and Lemma~\ref{lemma:different_sign} to have $g_0(g_1^{-1}(0)) < g_0(0) < 0.$
Applying the above conclusion yields 
\[\eqref{\ufl{}_dp} = \frac{1}{2}g_0(0) +  \frac{1}{2}g_1(0) > 2\left(\frac{1}{4}g_0(0) + \frac{1}{4}g_1(0) + \frac{1}{4}g_0(g_1^{-1}(0))\right) = 2\psi(g_0(0), 0)\geq \delta.\]

\begin{case}
$\veps_0 \leq |g_0(0)|, \veps_1 > |g_1(0)|$: ERM is unfair on client 0, but fair on client 1.
\end{case} 
Applying~\eqref{lambda_i} results in $\lbdufll=0$. 
By the fact that $g_i(\cdot)$ is a strictly monotone increasing function, we have $\lbduflo=g_0^{-1}(-\veps_0) > g_0^{-1}(g_0(0)) = 0.$ 
Applying the above conclusion yields
\begin{equation}
    \begin{aligned}
       ~\eqref{\ufl{}_dp}  =& - \frac{1}{4} \veps_0 + \frac{1}{4} g_1(0) + \frac{1}{4}g_0(0) + \frac{1}{4}g_1(\lbduflo)~~~\big(\lbduflo>0, g_1(\lbduflo) > g_1(0), g_0(0) < -\veps_0\big) \\
        >& \frac{1}{2}g_0(0)+\frac{1}{2}g_1(0) > 2\psi(g_0(0), 0) \geq \delta.
    \end{aligned}
\end{equation}
\begin{case}
$\veps_0 \leq |g_0(0)|, \veps_1 \leq |g_1(0)|$: ERM is unfair on both client 0 and client 1. 
\end{case} 
Applying~\eqref{lambda_i} we have $\lbduflo = g_0^{-1}(-\veps_0), \lbdufll=g_1^{-1}(\veps_1)$.
Then we have
\begin{align}
  ~\eqref{\ufl{}_dp} & = \frac{1}{4} \left(-\veps_0 + \veps_1 + g_0(g_1^{-1}(\veps_1)) + g_1(g_0^{-1}(-\veps_0))\right) \\
   & \geq \frac{1}{4} \left(-\veps_0 + \veps_0 + g_0(g_1^{-1}(\veps_0)) + g_1(g_0^{-1}(-\veps_0))\right) =  \psi(\veps_0, \veps_0) \geq \delta. 
\end{align}

\begin{case}
$\veps_0 > |g_0(0)|, \veps_1 \leq |g_0(0)| $: ERM is fair on client 0 and very unfair on client 1.
\end{case} 
By~\eqref{lambda_i}, we have $\lbduflo = 0, \lbdufll = g_1^{-1}(\veps_1) > g_1^{-1}(0)$. 
Then we obtain
\begin{align}
~\eqref{\ufl{}_dp}   &  = \frac{1}{4}\left(g_0(0)+g_0(\lbdufll) +g_1(0)+\veps_1\right) \\
    &> (g_0(0) + g_0(g_1^{-1}(0)) + g_1(0))/4 = \psi(g_0(0),0) \geq \delta.
\end{align}

\begin{case}
$\veps_0 > |g_0(0)|, |g_0(0)| \leq \veps_1 < |g_1(0)|$: ERM is fair on client 0 and unfair on client 1.
\end{case} 
Applying~\eqref{lambda_i} implies $ \lbdufll = g^{-1}_1(\veps_1) > g_1^{-1}(0)$. 
Therefore,
\begin{align}
~\eqref{\ufl{}_dp}  =   &  \frac{1}{4}\left(g_0(0) + g_0(g_1^{-1}(\veps_1)) + \veps_1 + g_1(0)\right)  \\
    &>(g_0(0) + g_1(0)+ g_0(g_1^{-1}(0)) + \veps_1)/4 > \psi(g_0(0), 0) \geq \delta.
\end{align}
Combining all the cases above, we conclude that when $g_1(0)>0$ $\text{DP Disp}(\fufl) \geq \delta = \min\{|\psi(\veps_0, \veps_1)|: \veps_0, \veps_1 \in [0, c]\} > 0$ for all $\veps_0, \veps_1 \in [0,1]$.

Now we consider $g_1(0)<0$, by the setting $|g_0(0)| < |g_1(0)|$ and the assumption $g_0(0)g_1(0)<0$ and $\psi(\veps_0,\veps_1)(g_0(0)+g_1(0))>0$, we have $g_0(0)>0$ and $\psi(\veps_0,\veps_1)<0$ for all $\veps_0,\veps_1 \in [0,c]$. 
Following similar computation above, Case 1 - Case 5 become:

Case 1. $\veps_0 > |g_0(0)|, \veps_1 > |g_1(0)|$. Now we have $0<g_0(0)<g_0(g_1^{-1}(0))$, thus 
\[\eqref{\ufl{}_dp} < 2\left(\frac{1}{4}g_0(0) + \frac{1}{4}g_1(0) + \frac{1}{4}g_0(g_1^{-1}(0))\right) = 2\psi(g_0(0), 0)\leq -\delta.\]

Case 2. $\veps_0 \leq |g_0(0)|, \veps_1 > |g_1(0)|$. Now we have $g_0(0)>\veps_0$, $g_1(\lbduflo) < g_1(0)$, thus
\[\eqref{\ufl{}_dp} < \frac{1}{2}g_0(0)+ \frac{1}{2}g_1(0)< 2\psi(g_0(0),0)\leq -\delta.\]

Case 3. In this case we have
\[\eqref{\ufl{}_dp}  = \frac{1}{4} \left(\veps_0 - \veps_1 + g_0(g_1^{-1}(-\veps_1)) + g_1(g_0^{-1}(\veps_0))\right) = \psi(\veps_0, \veps_1) \leq -\delta.\] 

Case 4. Now we have $\lbdufll = g_1^{-1}(-\veps_1) < g_1^{-1}(0)$, thus
\[\eqref{\ufl{}_dp} < (g_0(0) + g_0(g_1^{-1}(0)) + g_1(0))/4 = \psi(g_0(0),0) \leq -\delta.\]

Case 5. Now we have $ \lbdufll = g^{-1}_1(-\veps_1) < g_1^{-1}(0)$, thus
\[\eqref{\ufl{}_dp}  =     (g_0(0) + g_1(0)+ g_0(g_1^{-1}(0)) - \veps_1)/4 < \psi(g_0(0), 0) \leq - \delta.\]
Then we conclude the proof.

\end{proof}

\begin{remark}
Note that $c$ is the smallest local demographic disparity the ERM achieves on clients. 
The condition $g_0(0)g_1(0)<0$ implies that the ERM is favoring different groups in different clients. 
The condition $\psi(\veps_0, \veps_1)(g_0(0) + g_1(0)) > 0$ for all $\veps_0, \veps_1 \in [0,c]$ implies that $\fufl$ favors the same group as ERM when the constraint is very tight. If the conditions above hold, Lemma~\ref{lemma:ufl_limit} suggests that there exists a lower bound of all the demographic disparity that \ufl{} can achieve. 
In particular, if the conditions above hold, \ufl{} fails to achieve perfect demographic parity.  
\end{remark}

Among the conditions of Lemma~\ref{lemma:ufl_limit}, $g_0(0)g_1(0)<0$ can be satisfied by the distribution with high data heterogeneity. 
To demonstrate the condition $\psi(\veps_0, \veps_1)(g_0(0) + g_1(0)) > 0$ for all $\veps_0, \veps_1 \in [0,c]$ can be satisfied, we consider a limiting Gaussian case. 
The following corollary serves as an example that satisfies the conditions of Lemma~\ref{lemma:ufl_limit}, and provides a more explicit expression of the lowest demographic disparity \ufl{} can reach in the Gaussian case.

\begin{corollary}[Formal version of Corollary~\ref{cor:informal_gaussian}]\label{cor:gaussian}
Let $q=0.5$, $\eta(x) = \frac{1}{1+\text{e}^{-x}}, \cP_a^{(i)} = \cN(\mu_a^{(i)}, \sigma^{(i)2})$ where $(\mu_0^{(0)} - \mu_1^{(0)}) (\mu_0^{(1)} - \mu_1^{(1)}) < 0$. 
Then DP Disp$(\fufl) \geq \delta \approx \frac{1}{4}|g_0(0) + g_1(0)| > 0$ for all $\veps_0, \veps_1\in[0,1]$ if one of the following condition holds: 
\begin{enumerate}
    \item $\sigma^{(0)}  \gg \sigma^{(1)}, |\mu_0^{(0)}|, |\mu_1^{(0)}|, |\mu_0^{(1)}|, |\mu_1^{(1)}|$  and $\mu_0^{(1)} > \mu_1^{(1)}$: client 0 has much larger variance than client 1, and client 1 is favoring group 0;
    \item $\sigma^{(1)}  \gg \sigma^{(0)}, |\mu_0^{(0)}|, |\mu_1^{(0)}|, |\mu_0^{(1)}|, |\mu_1^{(1)}|$  and $\mu_0^{(0)} > \mu_1^{(0)}$: client 1 has much larger variance than client 0, and client 0 is favoring group 0.
\end{enumerate}
\end{corollary}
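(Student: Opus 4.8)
The plan is to realize Corollary~\ref{cor:gaussian} as a concrete instance of Lemma~\ref{lemma:ufl_limit}, so that the whole task reduces to verifying the two hypotheses $g_0(0)g_1(0)<0$ and $\psi(\veps_0,\veps_1)\bigl(g_0(0)+g_1(0)\bigr)>0$ on $[0,c]^2$, and then reading off $\delta$ from the minimizer of $|\psi|$. First I would make $g_i$ explicit. Setting $q=\tfrac12$ collapses the two thresholds in the definition of $g_i$ to a single point; writing $t(\lambda):=\eta^{-1}(\tfrac12-\lambda)$, using $\eta^{-1}(\tfrac12+\lambda)=-t(\lambda)$ for the logistic $\eta$, and the Gaussian tail identity $\int_{[s,\infty)}\diffdchar\cP_a^{(i)}=\Phi\bigl((\mu_a^{(i)}-s)/\sigma^{(i)}\bigr)$ gives
\[
g_i(\lambda)=\Phi\!\Bigl(\tfrac{\mu_0^{(i)}-t(\lambda)}{\sigma^{(i)}}\Bigr)-\Phi\!\Bigl(\tfrac{\mu_1^{(i)}+t(\lambda)}{\sigma^{(i)}}\Bigr),\qquad g_i(0)=\Phi\!\Bigl(\tfrac{\mu_0^{(i)}}{\sigma^{(i)}}\Bigr)-\Phi\!\Bigl(\tfrac{\mu_1^{(i)}}{\sigma^{(i)}}\Bigr).
\]
Since $\Phi$ is strictly increasing, $\sign g_i(0)=\sign(\mu_0^{(i)}-\mu_1^{(i)})$, so the hypothesis $(\mu_0^{(0)}-\mu_1^{(0)})(\mu_0^{(1)}-\mu_1^{(1)})<0$ immediately yields $g_0(0)g_1(0)<0$, the first condition of Lemma~\ref{lemma:ufl_limit}.

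The work is in the sign condition on $\psi$. I would treat the case $\sigma^{(0)}\gg\sigma^{(1)},|\mu_a^{(i)}|$ (the second case is symmetric after swapping the client indices), where $\mu_0^{(1)}>\mu_1^{(1)}$ forces $g_1(0)>0$ and hence $g_0(0)<0$. The key quantitative input is a large-variance estimate: a first-order expansion of $\Phi$ about $0$ gives $g_0(\lambda)=\tfrac{1}{\sqrt{2\pi}\,\sigma^{(0)}}\bigl(\mu_0^{(0)}-\mu_1^{(0)}-2t(\lambda)\bigr)+O(1/\sigma^{(0)2})$, which is $O(1/\sigma^{(0)})$ uniformly over $\lambda$ in any fixed compact subinterval of $(-\tfrac12,\tfrac12)$, whereas $g_1$ stays order one. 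Two consequences drive everything: $c=\min(|g_0(0)|,|g_1(0)|)=|g_0(0)|=O(1/\sigma^{(0)})$, and in the four-term expression~\eqref{def:psi} for $\psi$ the terms $\tfrac14 g_0(g_1^{-1}(\veps_1))$, $-\tfrac14\veps_0$ and $\tfrac14\veps_1$ are all $O(1/\sigma^{(0)})$, while the surviving term $\tfrac14 g_1\bigl(g_0^{-1}(-\veps_0)\bigr)$ is order one. Here I would use that for $\veps_0\in[0,c]$ we have $-\veps_0\in[g_0(0),0]$, so $g_0^{-1}(-\veps_0)$ is pinned to the fixed bounded interval $[0,\lambda_0^\star]$, where $\lambda_0^\star$ is the (bounded, interior) root of $g_0$ determined by $t(\lambda_0^\star)=(\mu_0^{(0)}-\mu_1^{(0)})/2$; on this interval $g_1\ge g_1(0)>0$. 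Hence $\psi\ge \tfrac14 g_1(0)-O(1/\sigma^{(0)})>0$ uniformly on $[0,c]^2$ once $\sigma^{(0)}$ is large, which is exactly $\psi\cdot(g_0(0)+g_1(0))>0$ since $g_0(0)+g_1(0)>0$.

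With both hypotheses verified, Lemma~\ref{lemma:ufl_limit} gives $\disp(\fufl)\ge\delta=\min_{[0,c]^2}|\psi|>0$ for all $\veps_0,\veps_1\in[0,1]$. To pin down $\delta$ I would exploit the monotonicity of the two well-behaved blocks of $\psi$: $g_1(g_0^{-1}(-\veps_0))-\veps_0$ is decreasing in $\veps_0$ and $g_0(g_1^{-1}(\veps_1))+\veps_1$ is increasing in $\veps_1$, so the minimum sits at the corner $(\veps_0,\veps_1)=(c,0)$, where $g_0^{-1}(-c)=0$ and $g_1^{-1}(0)=\lambda_1^\star$, giving $\psi(c,0)=\tfrac14\bigl(g_0(0)+g_1(0)+g_0(\lambda_1^\star)\bigr)$; dropping the $O(1/\sigma^{(0)})$ term $g_0(\lambda_1^\star)$ yields $\delta\approx\tfrac14|g_0(0)+g_1(0)|$.

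I expect the main obstacle to be making the large-variance approximation genuinely uniform on the shrinking square $[0,c]^2$: the leading term $\tfrac14 g_1(g_0^{-1}(-\veps_0))$ must dominate the three vanishing terms for every such pair simultaneously, and the inverse $g_0^{-1}$ — the inverse of a map that is nearly flat near $0$ — must be controlled even as its slope there degenerates. The saving observation is precisely that the constraint $\veps_0\le c=|g_0(0)|$ confines the relevant argument of $g_0^{-1}$ to the fixed interval $[g_0(0),0]$, decoupling the inversion from the flatness and keeping $g_1(g_0^{-1}(-\veps_0))$ bounded below by the fixed positive constant $\tfrac14 g_1(0)$, so that the $O(1/\sigma^{(0)})$ corrections cannot flip the sign of $\psi$.
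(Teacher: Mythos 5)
Your proposal is correct, and its skeleton coincides with the paper's proof: instantiate Lemma~\ref{lemma:ufl_limit}, get $g_0(0)g_1(0)<0$ from $\sign g_i(0)=\sign(\mu_0^{(i)}-\mu_1^{(i)})$, show $\psi>0$ on $[0,c]^2$, push the minimum of $\psi$ to the corner $(\veps_0,\veps_1)=(c,0)$ by monotonicity in each coordinate, and identify $\delta=\psi(c,0)\approx\tfrac14\bigl(g_0(0)+g_1(0)\bigr)$. The genuine difference lies in how the pivotal estimate $g_0(g_1^{-1}(0))\approx 0$ (and its uniform variant at $g_1^{-1}(\veps_1)$ for $\veps_1\in[0,c]$) is obtained. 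The paper traps $g_1^{-1}(0)$ through a three-case analysis on the configuration of $\mu_0^{(1)},\mu_1^{(1)}$ (both negative, both positive, straddling zero), constructing auxiliary thresholds $\lambda',\lambda''$ with, e.g., $\eta^{-1}(\tfrac12+\lambda')=\mu_1^{(1)}$, and invoking monotonicity and intermediate-value arguments to conclude the relevant threshold is bounded by client 1's means. You bypass this entirely by observing that, because the two groups within a client share one variance and $\Phi$ is injective, the root of $g_i$ has the closed form $t(\lambda_i^\star)=(\mu_0^{(i)}-\mu_1^{(i)})/2$ in your notation $t(\lambda)=\eta^{-1}(\tfrac12-\lambda)$; boundedness of the threshold is then immediate and the whole case analysis disappears. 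Two further differences: you get monotonicity of $\psi$ by composing monotone maps where the paper differentiates $\psi$, and your explicit first-order expansion of $\Phi$ makes the $O(1/\sigma^{(0)})$ bookkeeping precise and uniform over the shrinking square $[0,c]^2$ -- a point the paper leaves implicit in its ``$\approx$'' steps, and which you correctly flag (and resolve via the pinning $g_0^{-1}(-\veps_0)\in[0,\lambda_0^\star]$, which is also how the paper's Case-by-case inequalities implicitly work). The trade-off: your argument is shorter and tighter, but it leans on the exact closed-form root afforded by the logistic symmetry and equal within-client variances, whereas the paper's coarser trapping argument is the sort that would survive unequal group variances, where no midpoint formula for $g_i^{-1}(0)$ exists.
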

\begin{proof}
In this example, note that local mean difference function of $\lambda$ can be written as:
\begin{equation}\label{g_i_example}
 g_i(\lambda) = \Phi(\frac{\eta^{-1}(\frac{1}{2} + \lambda) - \mu_1^{(i)}}{\sigma^{(i)}}) - \Phi(\frac{\eta^{-1}(\frac{1}{2} - \lambda) - \mu_0^{(i)}}{\sigma^{(i)}})   ,
\end{equation}
where $\Phi(\cdot)$ is the CDF of the standard Gaussian distribution.

We only provide the proof for condition 1, and the proof for condition 2 is similar. 
Assume condition 1 holds. By $(\mu_0^{(0)} - \mu_1^{(0)}) (\mu_0^{(1)} - \mu_1^{(1)}) < 0$ and $\mu_0^{(1)} - \mu_1^{(1)} > 0$, we have $\mu_0^{(0)} - \mu_1^{(0)} < 0$. 
By~\eqref{g_i_example}, we have $g_0(0) < 0$ and $g_1(0) > 0$. 
Consequently, combining Lemma~\ref{lemma:different_sign} and~\eqref{def:psi} yields
\[\psi(\veps_0, \veps_1) = \frac{1}{4} \left(g_0(g_1^{-1}(\veps_1)) + g_1(g_0^{-1}(-\veps_0)) - \veps_0 + \veps_1 \right).\] 
First we show that $\psi(\veps_0,\veps_1)$ reachs its minimum either at $(c,0)$ or $(0,c)$ by taking the derivative of $\psi$, where $c = \min\{|g_0(0)|,|g_1(0)|\}$ is the smallest local demographic disparity. 
And then, we will estimate the minimum of $\psi$ on $[0,1]\times[0,1]$.

We take the derivative of $\psi$ with respect to $\veps_i$ and get
\begin{equation}
    \frac{\partial \psi}{\partial \veps_i}(\veps_0, \veps_1) = \sign(g_i(0)) \left(1+ \frac{g_{1-i}^\prime(g_i^{-1}( \sign(g_i(0)) \veps_i))}{g_i^\prime(g_i^{-1}(\sign(g_i(0)) \veps_i ))}\right) / 4.
\end{equation}

By condition 1, we have $g_0(0) = \Phi(-\frac{\mu_1^{(0)}}{\sigma^{(0)}}) - \Phi(-\frac{\mu_0^{(0)}}{\sigma^{(0)}}) \approx 0,$ thus $|g_0(0)|\ll |g_1(0)|$ and $c = |g_0(0)|$. Since $g_i$ are increasing function, $i=0,1$, we have $g_i^\prime(\cdot)>0$, and thus $\frac{\partial \psi}{\partial \veps_0}<0,\frac{\partial \psi}{\partial \veps_1}>0$. Therefore, $\psi$ reaches its extreme value at $(0,c)$ and $(c,0)$. 

Now, we evaluate 
\[\psi(0,c) = (g_0(g_1^{-1}(c)) + g_1(g_0^{-1}(0)) - g_0(0))/4> \frac{g_1(0)-g_0(0)+g_0(g_1^{-1}(c))}{4},\] where the inequality comes from $g_0(0)<0$ to have $g_1(g_0^{-1}(0))>g_1(0)$. 
And at $(c,0)$, we have
\[\psi(c,0) = (g_0(g_1^{-1}(0)) + g_1(0)+g_0(0))/4.\] 

In what follows, we will show that $\min\set{\psi(c,0), \psi(0,c)} \approx \delta = \frac{g_1(0)+g_0(0)}{4}$ 
by proving that $ 0>g_0(g_1^{-1}(c)) > g_0(g_1^{-1}(0))\approx 0$. 

Consider $\psi(c,0)$ and $\psi(0,c)$. Since $g_1(0)>c,g_0(0)<0$, we have 
\[g_0(g_1^{-1}(0))<g_0(g_1^{-1}(c))<g_0(0)<0.\]

Therefore, the only thing left is to show $ g_0(g_1^{-1}(0))\approx 0$. 
We divide the rest of the proof into the following three cases. 

\begin{case}
$\mu_1^{(1)} < \mu_0^{(1)}< 0$: on client 1, the local classifier is favoring group 0 over group 1, and the positive rate of both groups are under $\frac{1}{2}$.
\end{case}
Clearly, under this case, we have $\int_{[\eta^{-1}(\frac{1}{2}),\infty)} \,\diffdchar  \cP_0^{(1)} = \int_{[0,\infty)} \,\diffdchar  \cP_0^{(1)} < \frac{1}{2}$. 
We select $\lambda^\prime<0$ such that $\eta^{-1}(\frac{1}{2}+\lambda^\prime) = \mu_1^{(1)}$. Then we have $\int_{[\eta^{-1}(\frac{1}{2}+\lambda^\prime),\infty)} \,\diffdchar  \cP_1^{(1)} = \int_{[\mu_1^{(1)},\infty)} \,\diffdchar  \cP_1^{(1)}= \frac{1}{2}$, while $\int_{[\eta^{-1}(\frac{1}{2}-\frac{\lambda^\prime}{2(1-q)}),\infty)} \,\diffdchar  \cP_0^{(1)} <0$. 
Thus we get $g_1(\lambda^\prime) < 0$. Combining $g_1(0)>0$ and intermediate value theorem results in $\lambda^\prime<g^{-1}_1(0) < 0$.  Then we obtain
\begin{equation}\label{Conclusion in case1}
    \begin{split}
\mu_1^{(1)} & = \eta^{-1}(\frac{1}{2} + \lambda^\prime)  <\eta^{-1}(\frac{1}{2} + g^{-1}_1(0)) < 0 \\
   -\mu_{1}^{(1)}     &=\eta^{-1}(\frac{1}{2}-\lambda^\prime)>\eta^{-1}(\frac{1}{2}-g^{-1}_1(0))>0 ~~~~~\text{($\eta(x)-\frac{1}{2}$ is odd)}
    \end{split}.
\end{equation}

By plugging~\eqref{Conclusion in case1} into~\eqref{g_i_example}, we have the other side of $g_0(g_1^{-1}(0)) < 0$ is bounded by $ g_0(\lambda^\prime) = \Phi(\frac{\mu_1^{(1)} - \mu_1^{(1)}}{\sigma^{(1)}}) - \Phi(\frac{-\mu_1^{(1)} - \mu_0^{(1)}}{\sigma^{(1)}})$. Since $\sigma^{(1)} \gg |\mu_1^{(1)}|, |\mu_1^{(0)}|$, we get $g_0(g^{-1}_1(0)) \approx 0$.

\begin{case}
$0<\mu_1^{(1)} < \mu_0^{(1)}$: on client 1, the local classifier is favoring group 0 over group 1, and the positive rate of both groups are above $\frac{1}{2}$.
\end{case}
This proof of this case is similar to Case 1.

\begin{case}
$\mu_1^{(1)} < 0 < \mu_0^{(1)} $: with respect to the local classifier trained by client 1, the positive rate of group 0 is above $\frac{1}{2}$ while that of group 1 is under $\frac{1}{2}$.
\end{case}
Without any loss of generality, we assume $|\mu_1^{(1)}|<|\mu_0^{(1)}| $. Select $\lambda^{\prime\prime}
<0$ such that $\eta(\frac{1}{2}-\lambda^{\prime\prime})=\mu_0^{(1)}.$ Clearly $\int_{[\eta^{-1}(\frac{1}{2}-\lambda^{\prime\prime}),+\infty)} \,\diffdchar  \cP_0^{(1)} = \int_{[\mu_0^{(1)},+\infty)} \,\diffdchar  \cP_0^{(1)} = \frac{1}{2}$, while
\[\int_{[\eta^{-1}(\frac{1}{2}+\lambda^{\prime\prime}),+\infty)} \,\diffdchar  \cP_1^{(1)} = \int_{[-\mu_0^{(1)},+\infty)} \,\diffdchar  \cP_1^{(1)} >\int_{[-\mu_1^{(1)},+\infty)} \,\diffdchar  \cP_1^{(1)} > \frac{1}{2}.\]
Consequently, we get $g_1(\lambda^{\prime\prime}) < 0$ and $\lambda^{\prime\prime}<g_1^{-1}(0) < 0$. Then we draw the same conclusion as~\eqref{Conclusion in case1}. Therefore, $g_0(g_1^{-1}(0)) \approx 0$.

Combining all three cases above, we get $\delta >0$. 
Then applying Lemma~\ref{lemma:ufl_limit} we complete the proof.
\end{proof}

\subsubsection{Comparison between \ufl{} and CFL}\label{appendix:ufl_cfl}
In this section, we compare the performance of \ufl{} and CFL. In Lemma~\ref{lemma:ufl=CFL_condition} and Lemma~\ref{lemma:veps>g(0)}, we illustrate the conditions for \ufl{} to have the same performance as CFL. In Lemma~\ref{lemma:max\ufl{}}, Lemma~\ref{lemma:uflcfl} and Lemma~\ref{lemma:ufl_q_different}, we illustrate the scenarios when CFL outperforms \ufl{}.

To do the comparison, first we introduce some additional notations. Define the accuracy of a classifier $f$ as $$\text{Acc}(f) = \bP(\haty = \rvy) = \bP(\rvy = 0)\bE_{\rvx, \rva\mid \rvy= 0}[1-f(\rvx,\rva)] + \bP(\rvy = 1)\bE_{\rvx, \rva \mid \rvy = 0}f(\rvx,\rva).$$ 

Given the required global demographic disparity $\veps$, define the performance of $\fufl$ as:
\begin{equation}
    \text{\ufl{}}(\veps_0, \veps_1; \veps) = \begin{cases} 
      \text{Acc}(\fufl) & \text{DP Disp}(\fufl) \leq \veps \\
      0 & \ow.
   \end{cases},
\end{equation}

and define performance of $\fcfl$ as: $$\text{CFL}(\veps) = \text{Acc}(f^{\text{CFL}}_{\veps}).$$ 

Now we are able to compare the performance between \ufl{} and CFL with the metric $\text{\ufl{}}(\veps_0, \veps_1; \veps)$ and $\text{CFL}(\veps)$. 
In particular, we will show that, under some mild conditions, $\max_{\veps_0, \veps_1}\text{\ufl{}}(\veps_0, \veps_1; \veps) < \text{CFL}(\veps)$, which implies the gap between \ufl{} and CFL is inevitable. 

We begin with the following two lemmas, which describe the cases that $\text{\ufl{}}(\veps_0, \veps_1; \veps) = \text{CFL}(\veps)$.

\begin{lemma}\label{lemma:ufl=CFL_condition}
Let $q\in(0,1)$. Given an \ufl{} classifier $\fufl$ such that $\text{DP Disp}(\fufl) \leq \veps$ and a CFL classifier $\fcfl$, we have $\text{\ufl{}}(\veps_0, \veps_1; \veps) \leq \text{CFL}(\veps)$. 
The equality holds if and only if $\lbduflo=\lbdufll=\lbdcfl$, where $\lbdcfl$ is defined in~\eqref{lambda}, $\lbduflo,\lbdufll$ are defined in~\eqref{lambda_i}.
\end{lemma}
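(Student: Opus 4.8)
The plan is to separate the two assertions: the inequality $\text{UFL}(\veps_0,\veps_1;\veps)\le\text{CFL}(\veps)$, and the characterization of when it is tight. For the inequality I would argue purely by optimality: $\text{CFL}(\veps)=\text{Acc}(\fcfl)$ is by construction the largest accuracy attainable by \emph{any} classifier meeting the global constraint $\text{DP Disp}\le\veps$ in~\ref{prob:cfl}. Under the lemma's hypothesis $\text{DP Disp}(\fufl)\le\veps$, the ensemble $\fufl$ is a feasible point of~\ref{prob:cfl}, so $\text{Acc}(\fufl)\le\text{Acc}(\fcfl)$; since $\text{UFL}(\veps_0,\veps_1;\veps)=\text{Acc}(\fufl)$ in this feasible regime, the inequality is immediate.

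For the ``if'' direction of the equality, I would observe that $\lbduflo=\lbdufll=\lbdcfl=:\lambda$ makes $f_0^{\veps_0}$, $f_1^{\veps_1}$, and $\fcfl$ literally the same threshold classifier of the form~\eqref{eq:thresholdform} with parameter $\lambda$ (by Lemma~\ref{lemma:solution}). Hence $\fufl=(f_0^{\veps_0}+f_1^{\veps_1})/2=\fcfl$ pointwise, so $\text{Acc}(\fufl)=\text{Acc}(\fcfl)=\text{CFL}(\veps)$.

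The substantive direction is ``only if''. Assuming $\text{UFL}(\veps_0,\veps_1;\veps)=\text{CFL}(\veps)$, the ensemble $\fufl$ is feasible for~\ref{prob:cfl} and attains its optimal value, hence is itself a minimizer; by the a.e.\ uniqueness of the CFL solution (Lemma~\ref{lemma:unique_solution}) I get $\fufl=\fcfl$ almost everywhere. The key structural step is then a value-set argument: $\fcfl$ is a threshold classifier taking values in $\{0,1\}$ a.e.\ (the boundary $\{\eta(\rvx)=\text{const}\}$ is null because $\rvx$ is continuous and $\eta$ strictly monotone), whereas $\fufl=(f_0^{\veps_0}+f_1^{\veps_1})/2$ a priori takes values in $\{0,\tfrac{1}{2},1\}$. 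Matching them a.e.\ forces the $\tfrac{1}{2}$-valued region to be null, i.e.\ $f_0^{\veps_0}=f_1^{\veps_1}=\fcfl$ almost everywhere. Finally I would upgrade ``equal classifiers'' to ``equal thresholds'': since $\md$ is a linear functional of $f$ (see~\eqref{eq:fair}), the a.e.\ equalities give $\md(f_0^{\veps_0})=\md(f_1^{\veps_1})=\md(\fcfl)$, and each of these equals $g$ evaluated at the respective $\lambda$ by~\eqref{eq:g}; strict monotonicity (hence injectivity) of $g$ from Lemma~\ref{lemma:solution} then yields $\lbduflo=\lbdufll=\lbdcfl$.

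The main obstacle I anticipate is making the last two upgrades rigorous and checking that their hypotheses are genuinely in force. Concretely, I must lean on Lemma~\ref{lemma:unique_solution} to turn ``equal accuracy'' into ``equal classifier a.e.'', and on injectivity of $g$ to turn ``equal mean difference'' into ``equal $\lambda$''; the delicate point is the value-set step, since it is precisely the $\{0,\tfrac{1}{2},1\}$-versus-$\{0,1\}$ mismatch (together with continuity of $\rvx$) that rules out an ensemble averaging two \emph{distinct} thresholds whose aggregate happens to coincide with $\fcfl$.
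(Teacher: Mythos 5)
Your proposal is correct and follows essentially the same route as the paper: the inequality comes from feasibility of $\fufl$ plus optimality of $\fcfl$, and the equality case is characterized through the threshold form of CFL solutions from Lemma~\ref{lemma:solution}. The paper's own proof is far terser—it asserts the ``if and only if'' simply ``according to the form of the solution''—so your additional steps (invoking Lemma~\ref{lemma:unique_solution} to get $\fufl=\fcfl$ a.e., the $\{0,\tfrac{1}{2},1\}$-versus-$\{0,1\}$ value-set argument to force $f_0^{\veps_0}=f_1^{\veps_1}$, and injectivity of $g$ to upgrade equal classifiers to equal thresholds) are exactly the details the paper leaves implicit, not a different method.
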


\begin{proof}
The proof is straightforward.
Clearly, since $\fcfl$ is the optimizer to~\ref{prob:cfl}, we have $\text{\ufl{}}(\veps_0, \veps_1; \veps) \leq \text{CFL}(\veps)$. 
By Lemma~\ref{lemma:solution}, according to the form of the solution to~\ref{prob:cfl}, $\fufl$ is the solution to~\ref{prob:cfl} if and only if $\lbduflo=\lbdufll=\lbdcfl$. Thus complete the proof.


\end{proof}

\begin{lemma}\label{lemma:veps>g(0)}
Let $q\in(0,1)$. If the ERM is already fair, \ie, $\veps\geq |g(0)|$, then
\[\max_{\veps_0,\veps_1} \text{\ufl{}}(\veps_0, \veps_1; \veps) = \text{CFL}(\veps).\]

\end{lemma}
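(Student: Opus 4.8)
The plan is to exhibit an explicit choice of local budgets $(\veps_0,\veps_1)$ under which the ensemble reproduces the centralized ERM classifier, and then invoke the equality condition of Lemma~\ref{lemma:ufl=CFL_condition}. First I would record what the hypothesis $\veps \geq |g(0)|$ buys us on the centralized side: by the definition of $\lbdcfl$ in~\eqref{lambda} (equivalently by Lemma~\ref{lemma:different_sign}), when $\veps \geq |g(0)|$ we have $\min\{\veps,|g(0)|\} = |g(0)|$, so $\lbdcfl = g^{-1}(\sign(g(0))\,|g(0)|) = g^{-1}(g(0)) = 0$. Thus the optimal centralized classifier is the unconstrained ERM, and $\text{CFL}(\veps) = \text{Acc}(\tdf)$ is the best accuracy attainable by any classifier.

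Next I would choose $\veps_0 = \veps_1 = 1$ (indeed any $\veps_i \geq |g_i(0)|$ works) and show this forces both local multipliers to vanish. From the definition in~\eqref{lambda_i}, $\lbduflo = g_0^{-1}(\sign(g_0(0))\min(\veps_0,|g_0(0)|))$ and similarly for $\lbdufll$; since $|g_i(0)| \leq 1$, the choice $\veps_i = 1$ gives $\min(\veps_i,|g_i(0)|) = |g_i(0)|$, and therefore $\lbduflo = g_0^{-1}(g_0(0)) = 0$ and $\lbdufll = g_1^{-1}(g_1(0)) = 0$. Consequently each local solution collapses to the threshold rule $\indicator{\eta(x) > 1/2}$, which is independent of the client because $\eta$ is shared across clients, so the ensemble $\fufl$ is exactly the global ERM $\tdf$, matching $\lbduflo = \lbdufll = \lbdcfl = 0$.

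Before concluding I must check that this ensemble is actually feasible, \ie that the global constraint $\disp(\fufl) \leq \veps$ holds, so that $\text{\ufl{}}(1,1;\veps)$ equals the accuracy rather than $0$. Plugging $\lbduflo = \lbdufll = 0$ into the DP-disparity expression for the ensemble and using $g = \tfrac12 g_0 + \tfrac12 g_1$ gives $\disp(\fufl) = \tfrac14|2g_0(0) + 2g_1(0)| = \tfrac12|g_0(0)+g_1(0)| = |g(0)| \leq \veps$, which is precisely the hypothesis. Hence $(\veps_0,\veps_1) = (1,1)$ is feasible and its $\text{\ufl{}}$-value is a genuine accuracy.

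Finally I would assemble the pieces. Lemma~\ref{lemma:ufl=CFL_condition} gives $\text{\ufl{}}(\veps_0,\veps_1;\veps) \leq \text{CFL}(\veps)$ for every feasible pair, and the same lemma's equality condition $\lbduflo = \lbdufll = \lbdcfl$ is met by our choice, so $\text{\ufl{}}(1,1;\veps) = \text{CFL}(\veps)$. Combining the upper bound with this attained value yields $\max_{\veps_0,\veps_1}\text{\ufl{}}(\veps_0,\veps_1;\veps) = \text{CFL}(\veps)$, as claimed. The only point demanding care — and hence the main obstacle, though it is mild — is the feasibility verification in the previous paragraph: one must confirm that the ensemble of two locally-fair classifiers stays globally $\veps$-fair, which is exactly where the hypothesis $\veps \geq |g(0)|$ is consumed a second time.
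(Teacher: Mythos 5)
Your proof is correct and follows essentially the same route as the paper's: both arguments note that the hypothesis $\veps \geq |g(0)|$ makes the CFL solution equal to the unconstrained ERM $\indicator{\eta(x) > 1/2}$, then take $\veps_0 = \veps_1 = 1$ so that each local classifier (and hence the ensemble) is also ERM. Your write-up is simply more explicit than the paper's two-line proof, in particular in verifying feasibility $\disp(\fufl) = |g(0)| \leq \veps$ and in invoking the equality condition $\lbduflo = \lbdufll = \lbdcfl = 0$ of Lemma~\ref{lemma:ufl=CFL_condition}, both of which the paper leaves implicit.
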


\begin{proof}
Since the ERM is already fair, $\fcfl$ is ERM$= \indicator{\eta(x) > 1/2}$. Therefore, we take $\veps_0 = \veps_1 = 1$, and $\fufl$ also equals to ERM. Thus we conclude the lemma.


\end{proof}

The next two lemmas describes the cases that $\text{\ufl{}}(\veps_0, \veps_1; \veps) < \text{CFL}(\veps)$.

\begin{lemma}\label{lemma:max\ufl{}}
Let $q\in(0,1)$. If $g_0(0)g_1(0) < 0$, $\max_{\veps_0, \veps_1} \text{\ufl{}}(\veps_0, \veps_1; \veps) <  \text{CFL}(\veps)$ for all $\veps < |g(0)|$.
\end{lemma}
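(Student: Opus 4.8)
The plan is to argue by contradiction through the equality characterization in Lemma~\ref{lemma:ufl=CFL_condition}, which gives $\text{\ufl{}}(\veps_0,\veps_1;\veps)\le\text{CFL}(\veps)$ for every feasible budget pair, with equality \emph{exactly} when $\lbduflo=\lbdufll=\lbdcfl$. It therefore suffices to show that, under $g_0(0)g_1(0)<0$ and $\veps<|g(0)|$, no $(\veps_0,\veps_1)\in[0,1]^2$ can achieve $\lbduflo=\lbdufll=\lbdcfl$. As a first step I would record from Lemma~\ref{lemma:different_sign} that $\veps<|g(0)|$ forces $\lbdcfl=g^{-1}(\sign(g(0))\veps)\ne 0$; this nonzero value is the common target the two local coefficients would need to hit simultaneously.

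The crux is a sign analysis of the attainable local coefficients. From the closed form $\lbdufl=g_i^{-1}\big(\sign(g_i(0))\min(\veps_i,|g_i(0)|)\big)$ in~\eqref{lambda_i} together with the strict monotonicity of $g_i$, as $\veps_i$ ranges over $[0,1]$ the coefficient $\lbdufl$ of client $i$ sweeps exactly the closed interval with endpoints $g_i^{-1}(0)$ and $0$. Because $g_i$ is strictly increasing, $\sign(g_i^{-1}(0))=-\sign(g_i(0))$, so this interval is nonpositive when $g_i(0)>0$ and nonnegative when $g_i(0)<0$. Under $g_0(0)g_1(0)<0$ the attainable intervals for client $0$ and client $1$ lie on opposite sides of the origin, whence their intersection is $\{0\}$. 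Consequently $\lbduflo=\lbdufll$ can only hold with $\lbduflo=\lbdufll=0$, contradicting $\lbdcfl\ne 0$; the equality case of Lemma~\ref{lemma:ufl=CFL_condition} is thus never realized, and the inequality is strict at every $(\veps_0,\veps_1)$.

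Finally, to pass from pointwise strictness to strictness of the maximum, I would invoke compactness: the map $(\veps_0,\veps_1)\mapsto(\lbduflo,\lbdufll)$ is continuous with compact image, and both $\text{Acc}(\fufl)$ and $\disp(\fufl)$ are continuous in these coefficients, so the feasible set $\{\disp(\fufl)\le\veps\}$ is compact and $\text{Acc}(\fufl)$ attains its maximum there; if that set is empty the maximum equals $0<\text{CFL}(\veps)$. At the maximizer the equality condition fails by the previous paragraph, giving $\max_{\veps_0,\veps_1}\text{\ufl{}}(\veps_0,\veps_1;\veps)<\text{CFL}(\veps)$. I expect the sign/interval bookkeeping of the second paragraph to be the main obstacle, since the whole conclusion rests on correctly identifying the attainable range of each $\lbdufl$ and verifying that the two ranges under $g_0(0)g_1(0)<0$ meet only at the origin; the compactness argument is then routine.
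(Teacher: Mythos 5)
Your proposal is correct and follows essentially the same route as the paper's own proof: both rest on the equality characterization of Lemma~\ref{lemma:ufl=CFL_condition}, use the sign structure of the local coefficients in~\eqref{lambda_i} under $g_0(0)g_1(0)<0$ to show that $\lbduflo=\lbdufll$ forces both to vanish, and contrast this with $\lbdcfl\neq 0$ guaranteed by Lemma~\ref{lemma:different_sign} when $\veps<|g(0)|$. The only differences are cosmetic: you make explicit the compactness/attainment step and the empty-feasible-set case, which the paper handles as its Case~1 together with a tacit appeal to the maximum being achieved.
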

\begin{proof}
In this proof, we only consider the case that $g_1(0)>0$, $|g_1(0)| \geq |g_0(0)|$. 
The proof for $g_1(0)>0$ or $|g_1(0)| \geq |g_0(0)|$ is similar. 
Next, we divide the proof into two cases. 
\begin{case}
$\max_{\veps_0, \veps_1} \text{\ufl{}}(\veps_0, \veps_1; \veps) = 0$: \ufl{} cannot achieve $\veps$ global demographic disparity.
\end{case}
 The conclusion holds.

\begin{case}
$\max_{\veps_0, \veps_1} \text{\ufl{}}(\veps_0, \veps_1; \veps) > 0$: \ufl{} can achieve $\veps$ global demographic disparity.
\end{case} 
Since $\veps < |g(0)|$, by Lemma~\ref{lemma:different_sign}, we have $\lbdcfl\neq 0$. 
Next, we solve $\fufl$ by solving the local version of~\ref{prob:cfl}. 
Combining Lemma~\ref{lemma:different_sign},  $g_1(0)>0$ and $g_0(0)<0$ yields $\lbduflo \geq 0, \lbdufll \leq 0$. If $\lbduflo = \lbdufll$, then $\lbduflo = \lbdufll = 0 \neq \lbdcfl$. Thus, we conclude the lemma by applying Lemma~\ref{lemma:ufl=CFL_condition}.
\end{proof}
\begin{remark}
Lemma~\ref{lemma:max\ufl{}} implies that if ERM is favoring different groups in different clients, there exists an inevitable gap between the performance of \ufl{} and that of CFL. 
\end{remark}

\begin{lemma}\label{lemma:uflcfl}
Let $q\in(0,1)$. Let $\tau = \min\big\{ |g(0)|, \max\{\sign(g(0))g(g_0^{-1}(0)), \sign(g(0))g(g_1^{-1}(0))\}  \big\}$. 

If $g_0(0)g_1(0) > 0$, we have
\begin{equation}
        \max_{\veps_0, \veps_1} \text{\ufl{}}(\veps_0, \veps_1; \veps)  
        \begin{cases} 
      = \text{CFL}(\veps) & \text{for all } \veps \geq \tau\\
      <  \text{CFL}(\veps) & \ow.
   \end{cases}.
\end{equation}
\end{lemma}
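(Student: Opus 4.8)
The plan is to reduce everything to Lemma~\ref{lemma:ufl=CFL_condition}, which already tells us that, whenever $\fufl$ satisfies the global constraint, $\text{\ufl{}}(\veps_0,\veps_1;\veps) \le \text{CFL}(\veps)$ with equality \emph{if and only if} $\lbduflo = \lbdufll = \lbdcfl$. Since $\text{\ufl{}}(\veps_0,\veps_1;\veps) = 0 < \text{CFL}(\veps)$ when the constraint is violated (recall $\text{CFL}(\veps) > 0$), the whole lemma collapses to a single question: \emph{for which $\veps$ can we choose $\veps_0,\veps_1 \in [0,1]$ so that both local thresholds coincide with the centralized one, $\lbduflo = \lbdufll = \lbdcfl$?} I would first record this reduction, and then spend the argument characterizing that feasibility set.

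By symmetry it suffices to treat $g(0) > 0$ (the case $g(0) < 0$ is identical after flipping signs, which is exactly what the $\sign(g(0))$ factors in $\tau$ encode); note $g_0(0)g_1(0) > 0$ forces $g_0(0),g_1(0) > 0$ here. The key computation is to track the achievable range of each local threshold: from $\lbdufl = g_i^{-1}(\sign(g_i(0))\min(\veps_i, |g_i(0)|))$ and the strict monotonicity of $g_i$, as $\veps_i$ sweeps $[0,1]$ the value $\lbdufl$ sweeps exactly $[g_i^{-1}(0), 0]$. Hence $\lbduflo = \lbdufll = \lbdcfl$ is solvable iff $\lbdcfl$ lies in the intersection $[\max\{g_0^{-1}(0), g_1^{-1}(0)\}, 0]$, the upper endpoint being free since $\lbdcfl \le 0$ automatically. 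Applying the strictly increasing $g$ to the binding lower bound, and using $g = \tfrac12(g_0+g_1)$ with $\lbdcfl = g^{-1}(\min\{\veps, g(0)\})$, this is equivalent to $\min\{\veps, g(0)\} \ge \max\{g(g_0^{-1}(0)), g(g_1^{-1}(0))\}$. A short check shows the right-hand side is strictly below $g(0)$: since $g_i^{-1}(0) < 0$ we get $g(g_0^{-1}(0)) = \tfrac12 g_1(g_0^{-1}(0)) < \tfrac12 g_1(0) < g(0)$, and symmetrically for the other term. Thus the $g(0)$ inside the $\min$ never binds, the condition collapses to $\veps \ge \tau$, and this recovers precisely the stated definition of $\tau$.

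For the equality direction ($\veps \ge \tau$) I would exhibit the witness explicitly: set $\veps_i = g_i(\lbdcfl) \in [0, g_i(0)] \subseteq [0,1]$, which forces $\lbdufl = \lbdcfl$ for each $i$, so each local classifier reduces to the same threshold rule as $\fcfl$ by Lemma~\ref{lemma:solution}, whence $\fufl = \fcfl$; the constraint is then met and the accuracy matches, giving $\max_{\veps_0,\veps_1} \text{\ufl{}}(\veps_0,\veps_1;\veps) = \text{CFL}(\veps)$ (one may alternatively invoke Lemma~\ref{lemma:veps>g(0)} once $\veps \ge |g(0)|$). For the strict direction ($\veps < \tau$) the feasibility set is empty, so $\lbduflo = \lbdufll = \lbdcfl$ fails for every $(\veps_0,\veps_1)$, and Lemma~\ref{lemma:ufl=CFL_condition} yields a \emph{pointwise} strict inequality at each feasible pair.

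The main obstacle, and the one place I would be careful, is upgrading this pointwise strictness to a strict inequality for the \emph{maximum}. I would argue by compactness: $\md(\fufl)$ and $\text{Acc}(\fufl)$ are continuous in $(\veps_0,\veps_1)$ through $g_i$ and $g_i^{-1}$ (continuous because $\rvx$ is continuous), so the feasible region $\{(\veps_0,\veps_1)\in[0,1]^2 : \disp(\fufl)\le\veps\}$ is compact. If it is empty the maximum is $0 < \text{CFL}(\veps)$; otherwise the continuous accuracy attains its maximum at some feasible point, where the impossibility of $\lbduflo=\lbdufll=\lbdcfl$ forces it strictly below $\text{CFL}(\veps)$. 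This attainment step is exactly what rules out a supremum that merely approaches $\text{CFL}(\veps)$, and it completes the proof.
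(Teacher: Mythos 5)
Your proposal is correct, and it rests on the same two pillars as the paper's own proof: Lemma~\ref{lemma:ufl=CFL_condition} (accuracy equality holds iff $\lbduflo=\lbdufll=\lbdcfl$) and the explicit witness $\veps_i = g_i(\lbdcfl)$. The packaging, however, is genuinely different, in two ways. First, the paper proceeds by case analysis: it splits on whether the two perfectly-locally-fair classifiers coincide ($f_0^0 = f_1^0$ versus $f_0^0 \neq f_1^0$), and in the second case further splits the range of $\veps$ into three regimes ($\veps \geq |g(0)|$, $\tau \leq \veps < |g(0)|$, $\veps < \tau$), handling each with a separate monotonicity argument. You replace all of this with a single feasibility characterization: as $\veps_i$ sweeps $[0,1]$, $\lbdufl$ sweeps exactly $[g_i^{-1}(0),0]$, so threshold matching is possible iff $\lbdcfl \in [\max\{g_0^{-1}(0),g_1^{-1}(0)\},0]$, which after applying the increasing map $g$ and the identities $g(g_0^{-1}(0)) = \tfrac{1}{2}g_1(g_0^{-1}(0))$ and $g(g_1^{-1}(0)) = \tfrac{1}{2}g_0(g_1^{-1}(0))$ collapses to $\veps \geq \tau$. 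This unified view automatically subsumes the paper's Case~1 as the degenerate situation $\tau = 0$: one of the two quantities $g(g_0^{-1}(0)), g(g_1^{-1}(0))$ is always $\geq 0$ and the other $\leq 0$, with both vanishing exactly when $g_0^{-1}(0)=g_1^{-1}(0)$, i.e.\ $f_0^0 = f_1^0$. Second, your compactness step fills a gap the paper leaves implicit: the paper passes from the pointwise strict inequality supplied by Lemma~\ref{lemma:ufl=CFL_condition} directly to $\max_{\veps_0,\veps_1}\text{\ufl{}}(\veps_0,\veps_1;\veps) < \text{CFL}(\veps)$ without arguing that the maximum is attained, whereas a supremum of values each strictly below $\text{CFL}(\veps)$ could a priori equal $\text{CFL}(\veps)$. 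Your argument — continuity of $\disp(\fufl)$ and of the accuracy in $(\veps_0,\veps_1)$ through $g_i$ and $g_i^{-1}$ (valid since $\rvx$ is continuous and the $g_i$ are continuous and strictly increasing), hence a compact feasible set on which the accuracy attains its maximum — is exactly what is needed, so on this point your write-up is more rigorous than the paper's.
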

\begin{proof}
Without any loss of generality, assume $g_0(0), g_1(0) > 0$. Then by Lemma~\ref{lemma:different_sign}, we have $\lbduflo\leq 0, \lbdufll \leq 0$ for all $\veps_0, \veps_1 \in [0,1]$, and $g(0) = (g_0(0) + g_1(0))/2 > 0$.

To study the performance of \ufl{} when $g_0(0)g_1(0) < 0$, recall that we use $f_i^{\veps_i}$ to denote the local classifier trained by client $i$ in \ufl{} analysis. 
Therefore, $f_i^0$ is the local classifier trained by client $i$ that achieves perfect local fairness. 

Next, we discuss two cases to prove the result. In Case 1, we will show that $\tau = 0$, and then prove that $\max_{\veps_0, \veps_1} \text{\ufl{}}(\veps_0, \veps_1; \veps) = \text{CFL}(\veps)$; in Case 2, we will show that $\tau > 0$, and then prove that $\max_{\veps_0, \veps_1} \text{\ufl{}}(\veps_0, \veps_1; \veps) < \text{CFL}(\veps)$ when $\veps < \tau$.

\begin{case}
$f_0^0 = f_1^0$: the two local classifiers that achieve perfect local fairness are equal.
\end{case} 

When $\veps\geq g(0)$, the conclusion holds by directly applying Lemma~\ref{lemma:veps>g(0)}. Therefore, in what follows, we focus on the case that $\veps < g(0)$.

Next, we will first, show that when $\veps = 0$, $\lambda_0^{\mathrm{\ufl{}}_0}=\lambda_0^{\mathrm{\ufl{}}_1} = \lambda_0^{\mathrm{CFL}}$, which implies that $f_0^0 = f_1^0 = f_0^{\mathrm{CFL}}$.

Since $f_0^0 = f_1^0$, we have $\lambda_0^{\mathrm{\ufl{}}_0} = \lambda_0^{\mathrm{\ufl{}}_1}$, and thus $g_0^{-1}(0) = g_1^{-1}(0)$. 
Consequently, we get
$$ g(\lambda_0^{\mathrm{\ufl{}}_0}) = g(\lambda_0^{\mathrm{\ufl{}}_1})= g(g_0^{-1}(0)) = \frac{g_0(g_0^{-1}(0)) + g_1(g_1^{-1}(0))}{2} = 0 = g(\lambda_0^{\mathrm{CFL}}).$$ 
By the monotonicity of $g$, we have $\lambda_0^{\mathrm{\ufl{}}_0}=\lambda_0^{\mathrm{\ufl{}}_1} = \lambda_0^{\mathrm{CFL}}$ and $\tau = 0$. 
Next, we will show that, for $\veps \neq 0$, there also exists $\veps_0, \veps_1 \in [0,1]$ such that $\lbduflo = \lbdufll = \lbdcfl$, which implies $f_{\veps_0}^{\mathrm{\ufl{}}_0} = f_{\veps_0}^{\mathrm{\ufl{}}_1} = \fcfl$. 

Consider $\veps \neq 0$. Select $\veps_i = g_i(\lbdcfl), i=0,1.$ 
By Lemma~\ref{lemma:solution} and the monotonicity of $g$, we have $ \lambda_0^{\mathrm{CFL}} < \lbdcfl <0 $. Therefore, $\veps_i = g_i(\lbdcfl) > g_i(\lambda_0^{\mathrm{CFL}}) =  0$. 
By Lemma~\ref{lemma:solution}, we get $\lbdufl = g_i^{-1}(\veps_i)$ for $i = 0,1$. By the selection of $\veps_i$, we have $\lbdcfl= g_i^{-1}(\veps_i)$. 
Therefore, $\lbdcfl = \lbduflo = \lbdufll$ when $\veps \neq 0$. 

Combining all the discussion above yields $\fufl = \fcfl$ for all $\veps < g(0)$, thus we conclude $\max_{\veps_0, \veps_1} \text{\ufl{}}(\veps_0, \veps_1; \veps) = \text{CFL}(\veps)$ for all $\veps < g(0)$. Consequently, the lemma holds under Case 1. 

\begin{case}
$f_0^0 \neq f_1^0$: the two local classifiers that achieve perfect local fairness are different.
\end{case} 
The key idea of this proof is: when $\md_0(\fcfl), \md_1(\fcfl)\geq 0$, then we can always select $\veps_0 = g_0(\lbdcfl) = \md_0(\fcfl) > 0, \veps_1 = g_1(\lbdcfl) = \md_1(\fcfl) > 0$ such that $\lbduflo = g_0^{-1}(\veps_0) = \lbdcfl$ and $\lbdufll = g_1^{-1}(\veps_1) = \lbdcfl$, thus $\fufl = \fcfl$; when $\md_0(\fcfl)\md_1(\fcfl)=g_0(\lbdcfl)g_1(\lbdcfl)<0$, however, by Lemma~\ref{lemma:different_sign}, for all $\veps_0, \veps_1 \in [0,1]\times [0,1]$, we have $g_0(\lbduflo)g_1(\lbdufll)>0>g_0(\lbdcfl)g_1(\lbdcfl)$, thus there exist $i\in\set{0,1}$ such that $\lbdufl\neq \lbdcfl$ and $\fufl\neq\fcfl$. 
Next, we will give rigorous proof. 

Since $f_0^0 \neq f_1^0$, we have $\lambda_0^{\mathrm{\ufl{}}_0} \neq \lambda_0^{\mathrm{\ufl{}}_1}$. 
By Lemma~\ref{lemma:solution}, we get $g_0^{-1}(0) \neq g_1^{-1}(0)$. 
Without any loss of generality, assume $g_0^{-1}(0) < g_1^{-1}(0)$, which implies $g_1(g_0^{-1}(0)) < g_1(g_1^{-1}(0))=0$ and $g_0(g_0^{-1}(0)) = 0 < g_0(g_1^{-1}(0))$. Thus we get
\[g_1(g_0^{-1}(0)) < 0 < g_0(g_1^{-1}(0)).\]

Combining the inequality above and $g=\frac{g_0+g_1}{2}$ yields
\begin{equation}\label{gg_0 and gg_1}
g(g_0^{-1}(0)) = g_1(g_0^{-1}(0)) < 0 = g(g^{-1}(0)) <  g_0(g_1^{-1}(0)) = g(g_1^{-1}(0)).
\end{equation}
Thus we have
\[\tau = \max\{\sign(g(0))g(g_0^{-1}(0)), \sign(g(0))g(g_1^{-1}(0))\} = g(g_1^{-1}(0)).\]

When $\veps \geq |g(0)|$, by Lemma~\ref{lemma:veps>g(0)}, clearly we have $\max_{\veps_0, \veps_1} \text{\ufl{}}(\veps_0, \veps_1; \veps) = \text{CFL}(\veps)$.

For the other case $\veps < |g(0)|$, by Lemma~\ref{lemma:different_sign} we have $\lambda = g^{-1}(\veps)$. Similar to Case 1, in order to achieve $\lbduflo=\lbdufll=\lbdcfl$, we select $\veps_i = g_i(\lbdcfl)$.

When $\veps < g(g_1^{-1}(0))$,  
\[g_1(\lbdcfl) = g_1(g^{-1}(\veps)) < g_1(g^{-1}(g(g_1^{-1}(0)))) = 0.\] 
Since $g_1(0)>0$, by Lemma~\ref{lemma:different_sign} we have $g_1(\lbdufll)\geq 0$, from the monotonicity of $g_1$ we conclude $\lbdufll\neq \lbdcfl$. 
By Lemma~\ref{lemma:ufl=CFL_condition}, we have $\max_{\veps_0, \veps_1} \text{\ufl{}}(\veps_0, \veps_1; \veps) < \text{CFL}(\veps)$ for all $\veps \leq \tau$. 

When $\veps \geq g(g_1^{-1}(0))$, applying~\eqref{gg_0 and gg_1} we have 
\[g_i(0)>\veps_i = g_i(\lambda) = g_i(g^{-1}(\veps)) \geq g_i(g^{-1}(g(g_i^{-1}(0)))) = 0 ,\] 
where the first inequality comes from Case 1. Thus, $\lambda_i=g^{-1}(\veps_i)=\lambda$ as desired, and we obtain $\fufl = \fcfl$. 

Combining both cases above yields the desired conclusion.
\end{proof}
\begin{remark}
Recall that we use $f_i^{\veps_i}$ to denote the local classifier trained by client $i$ in \ufl{} analysis. 
In the expression of $\tau$:
\begin{equation}
    \min\big\{ |g(0)|, \max\{\sign(g(0))g(g_0^{-1}(0)), \sign(g(0))g(g_1^{-1}(0))\}  \big\},
\end{equation}
$|g(0)|$ is the demographic disparity of ERM, $\sign(g(0))g(g_0^{-1}(0))$ is the demographic disparity of local classifier $f_0^{0}$, and $\sign(g(0))g(g_1^{-1}(0))$ is the demographic disparity of local classifier $f_1^{0}$. 
According to the proof of Lemma~\ref{lemma:uflcfl}, we obtain $\max\{\sign(g(0))g(g_0^{-1}(0)), \sign(g(0))g(g_1^{-1}(0)) \}> 0$ if and only if two local classifiers which achieves perfect local fairness is equal, \ie, $f_0^{0} = f_1^{0}$.
Therefore, Lemma~\ref{lemma:uflcfl} implies that, if the ERM is favoring the same group on different clients and the two local classifiers which achieve perfect local fairness are unequal, then \ufl{} performs strictly worse than CFL when the required demographic disparity is smaller than a certain value.
\end{remark}

So far we assume $\rva \sim \Ber(q)$ for both client 0 and client 1. When both clients do not share the same $q$, we can conclude that CFL outperforms \ufl{} in the following lemma.

\begin{lemma}\label{lemma:ufl_q_different}
Assume $\rva \sim \Ber(q_i)$ in client $i$, and $q_0 \neq q_1 \in (0,1)$. Then $\max_{\veps_0, \veps_1} \text{\ufl{}}(\veps_0, \veps_1; \veps) < \text{CFL}(\veps)$ for all $\veps < |g(0)|$.
\end{lemma}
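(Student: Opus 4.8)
The plan is to argue by contradiction, leveraging the uniqueness of the CFL optimizer (Lemma~\ref{lemma:unique_solution}) together with the explicit group-wise threshold form of every optimal classifier (Lemma~\ref{lemma:solution}). First I would record the only consequence of $q_0 \neq q_1$ that matters: since the two clients carry equal mass, the overall rate is $q = (q_0+q_1)/2$, which lies strictly between $q_0$ and $q_1$, so $q \neq q_0$ and $q \neq q_1$. Because $\veps < |g(0)|$, Lemma~\ref{lemma:different_sign} gives $\lbdcfl \neq 0$; hence by Lemma~\ref{lemma:solution} the classifier $\fcfl$ is a pair of thresholds, $\eta(x) > \tfrac12 - \tfrac{\lbdcfl}{2(1-q)}$ for $a=0$ and $\eta(x) > \tfrac12 + \tfrac{\lbdcfl}{2q}$ for $a=1$, and (as $\eta$ is strictly monotone and $\rvx$ is continuous) it is $\{0,1\}$-valued almost everywhere.

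Next I would set up the contradiction. Always $\max_{\veps_0,\veps_1}\text{\ufl{}}(\veps_0,\veps_1;\veps) \le \text{CFL}(\veps)$, since $\fcfl$ maximizes accuracy over all $f\in\cF$ obeying the global constraint $\disp(f)\le\veps$, and a feasible $\fufl$ is one such $f$. Using continuity of $\text{Acc}(\fufl)$ and $\disp(\fufl)$ in $(\veps_0,\veps_1)$ over the compact square $[0,1]^2$, the maximum is attained (or the feasible set is empty, in which case $\text{\ufl{}}\equiv 0 < \text{CFL}(\veps)$ trivially). So suppose equality holds: then some feasible pair satisfies $\text{Acc}(\fufl)=\text{CFL}(\veps)$, and Lemma~\ref{lemma:unique_solution} forces $\fufl = \fcfl$ almost everywhere.

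The heart of the argument is to show this is impossible. Writing $\fufl = \tfrac12(f_0^{\veps_0}+f_1^{\veps_1})$ with each local classifier $\{0,1\}$-valued, the average can match the $\{0,1\}$-valued $\fcfl$ almost everywhere only if $f_0^{\veps_0}=f_1^{\veps_1}=\fcfl$ almost everywhere. Since each $f_i^{\veps_i}$ is itself a group-wise threshold classifier (Lemma~\ref{lemma:solution} applied to client $i$ with rate $q_i$) and $\eta$ is injective, this agreement forces the exact identities $\frac{\lbdufl}{1-q_i}=\frac{\lbdcfl}{1-q}$ and $\frac{\lbdufl}{q_i}=\frac{\lbdcfl}{q}$ for $i=0,1$. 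As $\lbdcfl\neq 0$, the second identity yields $\lbdufl\neq 0$, so dividing the two gives $\frac{1-q_i}{q_i}=\frac{1-q}{q}$, i.e. $q_i = q$, contradicting $q\neq q_i$. Hence $\fufl \neq \fcfl$, the assumed equality fails, and the strict inequality follows.

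I expect the main obstacle to be the step converting the almost-everywhere identity $\fufl=\fcfl$ into exact threshold equalities, which is where the regularity of the $\cP_a^{(i)}$ enters: one must rule out that mismatched local thresholds disagree only on a null set. I would handle this by observing that between two distinct group-$a$ thresholds $\fufl$ equals $\tfrac12$ on an interval, and under the standing continuity assumption the overall distribution $\cP_a = \tfrac12(\cP_a^{(0)}+\cP_a^{(1)})$ places positive mass on that interval, where $\fcfl\in\{0,1\}$ almost everywhere — a contradiction. Making this support argument precise (or stating the mild positive-density condition it requires) is the delicate part; the algebraic incompatibility in the previous paragraph is then routine.
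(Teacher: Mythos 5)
Your proposal is correct and follows essentially the same route as the paper's proof: argue by contradiction, use Lemma~\ref{lemma:different_sign} to get $\lbdcfl\neq 0$, invoke the a.e.\ uniqueness of the CFL solution (Lemma~\ref{lemma:unique_solution}) and the threshold form (Lemma~\ref{lemma:solution}), observe that the average of two $\{0,1\}$-valued local classifiers equals $\tfrac12$ wherever they disagree, and close with the algebraic incompatibility of the threshold identities with $q_0\neq q_1$. Your version merely unifies the paper's three cases into one algebraic step (comparing each $\lbdufl$ to $\lbdcfl$ rather than the two local $\lambda$'s to each other) and is somewhat more explicit about attainment of the maximum and the positive-mass requirement that both arguments implicitly need.
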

\begin{proof}
We assemble the dataset from two clients to have $\rvx \mid \rva = a \sim \cP_a = \frac{q_0}{q_0+q_1}\cP_a^{(0)} + \frac{q_1}{q_0+q_1}\cP_{a}^{(1)}$, $a=0,1$. 
We let $\fcfl$ be the solution to~\ref{prob:cfl}, with $q=\frac{q_0+q_1}{2}$:
\[\fcfl = \indicator{s(x,a) > 0},\] 
\[\text{where } s(x,0) = 2\eta(x) - 1 + \frac{\lbdcfl}{1-q}, \quad s(x,1) = 2\eta(x) - 1 - \frac{\lbdcfl}{q} .\]

Given an \ufl{} classifier $\fufl=(f_0^{\veps_0}+f_1^{\veps_1})/2$ such that $\disp(\fufl)\leq \veps$, the solution reads
\[f_i^{\veps_i} = \indicator{s_i(x,a) > 0},\] 
\[\text{where } s_i(x,0) = 2\eta(x) - 1 + \frac{\lbdufl}{1-q_i}, \quad s_i(x,1) = 2\eta(x) - 1 - \frac{\lbdufl}{q_i} .\]
We prove the lemma by contradiction argument. If $\text{Acc}(\fufl)=\text{CFL}(\veps)$, then $\fufl$ is a solution to~\ref{prob:cfl} with $q=\frac{q_0+q_1}{2}$. 
Since $\veps < |g(0)|$, by Lemma~\ref{lemma:different_sign} we have $\lbdcfl \neq 0$. Without any loss of generality, assume $\lbdcfl<0$. Below we discuss three cases.

\begin{case}
$\lbduflo=\lbdufll=0$: the \ufl{} classifier is ERM. 
\end{case}
In this case, $f_0^{\veps_0} = f_1^{\veps_1}$. 
We have $\fufl(x,0) = 1$ for $\eta(x)>\frac{1}{2}$, and $\fcfl = 0$ for $\eta(x)<(1-\lbdcfl/(1-q))/2$. 
By Lemma~\ref{lemma:unique_solution}, $\fufl$ is not a solution to~\ref{prob:cfl}.

\begin{case}
$\lbduflo \neq 0$ or $\lbdufll \neq 0$, and $\lbduflo\lbdufll=0$: the \ufl{} classifier is not ERM, but one of the local classifier is ERM.
\end{case}
Without any loss of generality, let $\lbduflo=0$, $\lbdufll < 0$. 
Then $f_0^{\veps_0}(x,0)=1$ for $\eta(x)>\frac{1}{2}$, while $f_1^{\veps_1}(x,0)=0$ for $\eta(x)< (1-\lbduflo(1-q_1))/2.$ 
Thus we get
\[\fufl(x,0)=\frac{1}{2} \text{ for }     \frac{1}{2} <\eta(x)< (1-\lbdufll(1-q_1))/2.\]
By Lemma~\ref{lemma:unique_solution}, $\fufl$ is not a solution to~\ref{prob:cfl}.

\begin{case}
$\lbduflo\lbdufll \neq 0$: The local classifiers are not ERM. 
\end{case}
When $\frac{\lbduflo}{1-q_0} \neq \frac{\lbdufll}{1-q_1}$, without loss of generality, let $\frac{\lbduflo}{1-q_0} > \frac{\lbdufll}{1-q_1}$. 
Then by the same argument in Case 2, we have
\[\fufl(x,0)=\frac{1}{2} \text{ for }    \frac{1-\frac{\lbduflo}{1-q_0}}{2} <\eta(x)< \frac{1-\frac{\lbdufll}{1-q_1}}{2}.\]
By Lemma~\ref{lemma:unique_solution}, $\fufl$ is not a solution to~\ref{prob:cfl}.

When $\frac{\lbduflo}{q_0} \neq \frac{\lbdufll}{q_1}$, similarly, $\fufl$ is not a solution to~\ref{prob:cfl}.

When $\frac{\lbduflo}{q_0} = \frac{\lbdufll}{q_1}$ and $\frac{\lbduflo}{1-q_0} = \frac{\lbdufll}{1-q_1}$, since $\lbduflo\lbdufll\neq 0$, we have
\[\frac{q_0}{\lbduflo} = \frac{q_1}{\lbdufll},\quad \frac{1-q_0}{\lbduflo} = \frac{1-q_1}{\lbdufll},\]
which leads to $\frac{1}{\lbduflo} = \frac{1}{\lbdufll}$ and thus $q_0= q_1$. This contradicts with the assumption that $q_0\neq q_1$.

Combining all three cases yields desired conclusion.

\end{proof}

\subsection{\fflfedavg{} analysis}\label{appendix:ffl}
In this section, we analyze \fflfedavg{} for the case of two clients. For purpose of illustration, with $I=2$, we denote $\fffl=\ffflm$ to be the solution to \ref{prob:ffl}. In Sec.~\ref{appendix:ffl_improve}, we present a formal version of Thm.~\ref{thm:uflvsffl} and show that compared to \ufl{}, \fflfedavg{} has strictly higher fairness. 
In Sec.~\ref{sec:fflbestclassifier} we derive the solution to~\ref{prob:ffl}, and show it is equivalent to \fflfedavg{}.
In Sec.~\ref{appendix:ffl_limit}, we analyze the limitation of \fflfedavg{} and present a formal version of Lemma~\ref{lemma:ffl_limit}.

\subsubsection{Improve fairness via federated learning}\label{appendix:ffl_improve}
Different to \ufl{}, \fflfedavg{} can reach any $\veps$ demographic disparity: 
\begin{theorem}[Formal version of Thm.~\ref{thm:uflvsffl} under two clients cases]\label{thm:fflfair}
Let $q\in(0,1)$. For all $\veps \in [0,1]$, there exists $\veps_0, \veps_1\in[0,1]$ such that $\text{DP Disp}(\fffl) \leq \veps$. Thus under the condition in Lemma~\ref{lemma:ufl_limit}, we have
\[\min_{\veps_0, \veps_1 \in [0,1]} \disp (\fufl) > \min_{\veps_0, \veps_1 \in [0,1]} \disp(\fffl) = 0 .\]
\end{theorem}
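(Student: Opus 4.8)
The plan is to reduce the entire statement to two ingredients: an exact decomposition of the \emph{global} mean difference into the two \emph{local} mean differences, which holds precisely because $q_0=q_1=q$, together with the positive lower bound for \ufl{} already proved in Lemma~\ref{lemma:ufl_limit}. First I would record the averaging identity. As noted in the setup of Sec.~\ref{appendix:ufl_setting}, equal client mass and equal $q$ give the pooled conditionals $\cP_a = \tfrac12(\cP_a^{(0)}+\cP_a^{(1)})$ for $a=0,1$, so for \emph{any} classifier $f\in\cF$ we have $\md(f)=\tfrac12\big(\md_0(f)+\md_1(f)\big)$. This is the structural advantage of \fflfedavg{} over \ufl{}: the two local fairness constraints are imposed on a single shared function, so unlike the ensemble of Sec.~\ref{appendix:ufl} there are no cross terms $\md_0(f_1),\md_1(f_0)$ left to cancel.

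Next, for the achievability claim, I would fix any target $\veps\in[0,1]$ and set $\veps_0=\veps_1=\veps$ in problem~\ref{prob:ffl}. The feasible set is nonempty — the constant map $f\equiv\tfrac12$ already gives $\md_i(f)=0$ for both $i$ — and the objective $\bP(\haty\neq\rvy)$ is affine in $f$ over a convex feasible set, so a minimizer $\fffl$ exists; alternatively one invokes the explicit solution constructed in Sec.~\ref{sec:fflbestclassifier}, which is feasible by construction. Being feasible, $\fffl$ satisfies $|\md_i(\fffl)|\leq\veps$ for $i=0,1$. Plugging these two bounds into the averaging identity and using the triangle inequality yields $\disp(\fffl)=|\md(\fffl)|\leq\tfrac12\big(|\md_0(\fffl)|+|\md_1(\fffl)|\big)\leq\veps$, which is the first assertion. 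Specializing to $\veps=0$ forces $\md_0(\fffl)=\md_1(\fffl)=0$, hence $\disp(\fffl)=0$, so $\min_{\veps_0,\veps_1\in[0,1]}\disp(\fffl)=0$.

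Finally I would assemble the strict gap. Under the hypotheses of Lemma~\ref{lemma:ufl_limit}, that lemma gives $\disp(\fufl)\geq\delta$ with $\delta>0$ uniformly over all $\veps_0,\veps_1\in[0,1]$, hence $\min_{\veps_0,\veps_1}\disp(\fufl)\geq\delta>0=\min_{\veps_0,\veps_1}\disp(\fffl)$, which is the claimed ordering. The only point that needs genuine care is the dependence of the averaging identity on $q_0=q_1$: for unequal $q_i$ the marginal reweighting by $q_i$ destroys the clean average $\md=\tfrac12(\md_0+\md_1)$, so a single classifier meeting both local constraints no longer forces zero global disparity — which is exactly why the companion Lemma~\ref{lemma:ffl_limit} must instead exhibit a strict \fflfedavg{}--CFL gap in the heterogeneous-$q$ regime rather than reuse this argument. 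Beyond that subtlety the proof is just the triangle inequality and a direct appeal to Lemma~\ref{lemma:ufl_limit}.
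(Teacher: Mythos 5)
Your proposal is correct and follows essentially the same route as the paper's proof: choosing $\veps_0=\veps_1=\veps$, using the identity $\md(f)=\tfrac12\big(\md_0(f)+\md_1(f)\big)$ (valid since equal client sizes and $q_0=q_1=q$ give $\cP_a=\tfrac12(\cP_a^{(0)}+\cP_a^{(1)})$), applying the triangle inequality to get $\disp(\fffl)\leq\veps$, and invoking Lemma~\ref{lemma:ufl_limit} for the strict gap. The only difference is that you additionally verify feasibility and existence of the minimizer, a well-posedness point the paper leaves implicit.
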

\begin{proof}
For any $\veps \in [0,1]$, let $\veps_0 = \veps_1 = \veps$. 
Then the global DP disparity becomes
\begin{equation}
    \begin{aligned}
    \text{DP Disp}(\fffl) &= |\bE_{\rvx \mid \rva = 0} f(\rvx,0) - \bE_{\rvx \mid \rva = 1} f(\rvx, 1)| \\
    & = |(\int_{\cX} f(x,0) \,d\cP_{0}^{(0)} + \int_{\cX} f(x,0) \,d\cP_{0}^{(1)})/ 2 \\
    & - (\int_{\cX} f(x,1) \,d\cP_{1}^{(0)} + \int_{\cX} f(x,1) \,d\cP_{1}^{(1)})/ 2| \\
    & =  |\text{MD}_0(\fffl) /2 + \text{MD}_1(\fffl) /2| \\
    & \leq \text{DP Disp}_0(\fffl)/2 + \text{DP Disp}_1(\fffl)/2\\
    & \leq (\veps_0 + \veps_1) /2 = \veps.
    \end{aligned}
\end{equation}
\end{proof}

\subsubsection{The best classifier of \fflfedavg{}}\label{sec:fflbestclassifier}
For \fflfedavg{}, we directly consider multi-client cases. 
To visualize the gap between \ufl{}, \fflfedavg{}, and CFL, in our numerical experiments, we draw finite samples from Gaussian distribution, and then we optimize the empirical risk with the fairness constraint to obtain the classifier trained by \fflfedavg{}. 
The following lemma provides the solution to~\ref{prob:ffl} when $\cX$ is finite. 

\begin{lemma}\label{lemma:FFL_soln}
For finite $\cX$, the solution to~\ref{prob:ffl} is given by
\begin{align}
   f(x,a)    &  =  \indicator{\sum_{i\in [I]} s_i(x,a)p^{(i)}_a(x) > 0} , \label{solution to ffl}
\end{align}
where $s_i(x,a) = 2\eta(x)-1+I\lambda_i \frac{\indicator{\rva = 0}}{1-q} - I\lambda_i \frac{\indicator{\rva = 1}}{q}$, for certain $\lambda_0,\dots \lambda_{I-1}\in\bR$.
\end{lemma}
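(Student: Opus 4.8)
The plan is to mirror the argument used for the centralized problem in Lemma~\ref{lemma:solution}, but now carrying one fairness constraint per client, and to exploit the finiteness of $\cX$ to reduce everything to a separable finite-dimensional linear program. First I would rewrite both the objective and each of the $I$ local constraints as linear functionals of $f$. The global error rate has already been linearized in~\eqref{eq:acc}: $\bP(\haty\neq\rvy) = \bE_{\rvx,\rva} f(\rvx,\rva)(1-2\eta(\rvx)) + \bP(\rvy=1)$. For the $i$th constraint, since $\rvx\mid\rva=a,\rvi=i\sim\cP_a^{(i)}$ with density $p_a^{(i)}$, the local mean difference is $\md_i(f) = \int_\cX f(x,0)p_0^{(i)}(x)\,\diffdchar x - \int_\cX f(x,1)p_1^{(i)}(x)\,\diffdchar x$, so each constraint reads $|\md_i(f)|\leq\veps_i$.

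Next I would pass to the Lagrangian. Writing each two-sided constraint $|\md_i(f)|\le\veps_i$ as a pair of linear inequalities and invoking strong duality for the resulting finite-dimensional linear program (feasible, since any constant classifier has zero disparity, and bounded, since $f$ ranges over a box), there exist multipliers $\lambda_i\in\bR$ — one signed multiplier per client, whose sign records which side of $|\md_i(f)|\le\veps_i$ is active — such that the optimal $f$ minimizes
\[
\bE_{\rvx,\rva} f(\rvx,\rva)(1-2\eta(\rvx)) - \sum_{i\in[I]}\lambda_i\,\md_i(f)
\]
over $\cF$. Because $\cX$ is finite, this objective decomposes into a sum of independent terms, one for each pair $(x,a)$, and the coefficient multiplying $f(x,a)$ is a fixed weighted combination of the density values $p_a^{(i)}(x)$. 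Minimizing each $f(x,a)\in[0,1]$ separately then forces $f(x,a)=1$ exactly when its coefficient is negative, which is the indicator form claimed.

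The final step is bookkeeping. Collecting the coefficient of $f(x,0)$ gives $(1-2\eta(x))\tfrac{1-q}{I}\sum_i p_0^{(i)}(x) - \sum_i\lambda_i p_0^{(i)}(x)$; multiplying through by the negative constant $-I/(1-q)$ turns the condition ``coefficient $<0$'' into $\sum_i s_i(x,0)p_0^{(i)}(x)>0$ with $s_i(x,0)=2\eta(x)-1+I\lambda_i/(1-q)$, after absorbing the scaling into a redefined $\lambda_i$. The $a=1$ case is symmetric: the constraint term enters with the opposite sign and the $1/q$ scaling, yielding $s_i(x,1)=2\eta(x)-1-I\lambda_i/q$ and $\sum_i s_i(x,1)p_1^{(i)}(x)>0$, as stated.

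I expect the one genuinely non-routine point to be the existence of the multipliers $\lambda_i$. In the single-constraint CFL proof this was obtained constructively by inverting the monotone scalar function $g$, but with $I$ simultaneous constraints there is no such one-dimensional inversion, so I would instead lean on LP strong duality (equivalently, the KKT conditions for this convex program) to guarantee a Lagrangian minimizer that also meets the primal constraints. A secondary care point is non-uniqueness on the tie set $\{x:\sum_i s_i(x,a)p_a^{(i)}(x)=0\}$, on which $f(x,a)$ may be chosen freely in $[0,1]$; over finite $\cX$ this is a finite set, and the stated indicator simply fixes one optimizer, matching the $\alpha$-indexed family appearing in Lemma~\ref{lemma:solution}.
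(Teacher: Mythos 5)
Your proposal is correct and follows essentially the same route as the paper's proof: linearize the objective and the $I$ per-client constraints, observe that over finite $\cX$ the problem is a linear program in the box $0\le f\le 1$ so strong duality holds, introduce one signed multiplier per client (the difference of the two one-sided multipliers), and minimize the separable Lagrangian pointwise in $(x,a)$ to obtain the indicator form. Your explicit handling of the tie set and the sign bookkeeping when rescaling by $-I/(1-q)$ are, if anything, slightly more careful than the paper's own algebra, but the argument is the same.
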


\begin{proof}
This proof is based on \citet{menon2018cost}. 
The key idea of this proof is to use the Lagrangian approach. 
Before we applying the Lagrangian approach, we will show that~\ref{prob:ffl} is expressible as a linear program, and thus the strong duality holds.

Since $\cX$ is finite, $f$ is a vector of finite dimension. Based on the proof of Lemma~\ref{lemma:solution}, the error rate can be written as 
$$
\bP(\haty \neq \rvy) = \sum_{x\in \cX, a\in\cA} f(x,a)(1-2\eta(x))\bP(\rvx = x, \rva = a) + \bP(\rvy = 1),
$$
and the fairness constraints can be written as 
\begin{equation}
\begin{split}
    & \bP(\haty = 1\mid \rva = 0, \rvi = i) - \bP(\haty = 1\mid \rva = 1, \rvi = i) \\
    &= \sum_{x\in\cX} \left[f(x,0)\frac{\bP(\rvx = x\mid \rva = 0, \rvi = i)}{\bP(\rva = 0)} - f(x,1)\frac{\bP(\rvx = x\mid \rva = 1, \rvi = i)}{\bP(\rva = 1)} \right],
\end{split}
\end{equation}
for $i \in [I]$. Let $u(x,a) = (1-2\eta(x))\bP(\rvx = x, \rva = a)$, $u^\prime = \bP(\rvy = 1)$, and $v_i(x,a) = (\indicator{a=0} - \indicator{a=1})\bP(\rvx = x \mid \rva = a, \rvi = i)/\bP(\rva = a)$, for $x\in\cX, a\in\cA, i \in [I]$. Note that $u, v_0, v_1$ are vectors of the same dimension of $f$. For ease of notation, we allow $\leq$ to be applied to pairs of vectors in an element-wise manner.
Therefore, the optimization is 
\begin{align}
    \min_{f}& ~ u^\top f + u^\prime \\
    \st.~& v_i^\top f \leq \veps_i \\
    & 0\leq f \leq 1,
\end{align}
which is a linear objective with linear constraint. Therefore, the strong duality holds for~\ref{prob:ffl}. Next, we apply Lagrangian approach to solve the~\ref{prob:ffl}.

Recall that
\begin{equation}
    \begin{aligned}
 \bP(\haty \neq \rvy) 
    =&\frac{1}{I}\bE_{\rva} \bE_{\rvx\sim \cP^{(i)}_{\rva}} f(\rvx, \rva) (1-2\eta(\rvx))  + \bP(\rvy = 1),
    \end{aligned}
\end{equation}
and 
\begin{equation}
    \begin{aligned}
    & \bP(\haty =1\mid  \rva = 0, \rvi=i) - \bP(\haty=1\mid \rva = 1,\rvi=i) \\
    =& \bE_{\rva} \bE_{\rvx \sim \cP^{(i)}_{\rva}} \left[f(\rvx,0)\frac{\indicator{\rva = 0}}{1-q} - f(\rvx,1)\frac{\indicator{\rva = 1}}{q}\right].
    \end{aligned}
\end{equation}

By strong duality, for $\lambda_0^\prime,\dots, \lambda_{2I-1}^\prime \geq 0$, the corresponding Lagrangian version of~\ref{prob:ffl} is 
\begin{align}\label{eq:lagrangianffl}
   &  \min_{f\in\cF} \bP(\haty \neq \rvy)   - \sum_{i\in [I]}\Big[ \lambda_{2i}^\prime [\bP(\haty = 1\mid \rva = 0, \rvi = i) - \bP(\haty = 1 \mid \rva = 1, \rvi = i)-\veps_i]  \\
    & \quad \quad \quad \quad \quad \quad +\lambda_{2i+1}^\prime [\veps_i-\bP(\haty = 1\mid \rva = 0, \rvi = i) - \bP(\haty = 1 \mid \rva = 1, \rvi = i)] \Big]
\end{align} 
Let $\lambda_i = \lambda_{2i}^\prime-\lambda_{2i+1}^\prime$, $i\in [I]$, then we get
\begin{align}
~\eqref{eq:lagrangianffl}=&  \min_{f\in\cF} \bE_{\rva} \bE_{\rvx\sim \cP^{(i)}_{\rva}}       \bigg[\frac{1}{I}f(\rvx, \rva) (1-2\eta(\rvx)) -\sum_{i\in [I]}\Big(\lambda_i  f(\rvx,0)\frac{\indicator{\rva = 0}}{1-q} - \lambda_i        f(\rvx,1)\frac{\indicator{\rva = 1}}{q} \Big) \bigg] \\
=& \min_{f\in\cF} \int_{\cX} \sum_{a\in \cA} -\frac{1}{I}f(x, a) [\sum_{i\in [I]}s_i(x,a) p^{(i)}_{a}(x) ]\diffdchar x . 
\end{align}
where $s_i$ is defined in Lemma~\ref{lemma:FFL_soln}. Thus the above equation reaches its minimum at 
\[f(x,a) = \indicator{\sum_{i\in [I]}s_i(x,a) p^{(i)}_{a}(x)> 0}.\]
\end{proof}
\begin{remark}
Based on the proof of Lemma~\ref{lemma:FFL_soln},~\ref{prob:ffl} is equivalent to solving 
\begin{equation}
    \min_{f\in\cF}\bP(\haty \neq \rvy) - \sum_{i=0}^{I-1}\lambda_i(\bP(\haty = 1\mid \rva = 0, \rvi = i) - \bP(\haty = 1 \mid \rva = 1, \rvi = i)).
\end{equation}
Under certain conditions (assumptions 1 to 4 in \citet{li2020on}), we have solving~\ref{prob:ffl} is equivalent to minimizing $$\bP(\haty \neq \rvy \mid \rvi = i) - \lambda_i(\bP(\haty = 1\mid \rva = 0, \rvi = i) - \bP(\haty = 1 \mid \rva = 1, \rvi = i))$$ locally and applying \fedavg{} (Theorem 1 in \citet{li2020on}).
\end{remark}

\subsubsection{Comparison of \fflfedavg{} and CFL}\label{appendix:ffl_limit}
\begin{lemma}[Formal version of Lemma~\ref{lemma:ffl_limit} under two clients cases]\label{lemma:formal_ffl_limit}
When $\rva \mid \rvi = 0\sim \Ber(0), \rva \mid \rvi=1 \sim \Ber(1)$ and $\disp(f_1^{\text{CFL}})> 0$, we have
\[\min_{\veps_0, \veps_1} \disp (\fffl) =  \disp(f_1^{\text{CFL}}) > \min_{\veps} \disp(\fcfl) = 0.\]
\end{lemma}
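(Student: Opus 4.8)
The plan is to exploit the degeneracy forced by $q_0=0$ and $q_1=1$. Since the two clients carry equal amounts of data, client $0$ then contains \emph{every} group-$0$ sample and client $1$ contains \emph{every} group-$1$ sample; consequently $\cP_0=\cP_0^{(0)}$, $\cP_1=\cP_1^{(1)}$, and $\bP(\rva=0)=\bP(\rva=1)=\tfrac12$, so the pooled problem~\ref{prob:cfl} is well posed with $q=\tfrac12$. The key structural observation I would establish first is that in this regime both local fairness constraints in~\ref{prob:ffl} carry no information: the constraint for client $0$ contains the term $\bP(\haty=1\mid\rva=1,\rvi=0)$, which conditions on an empty event (client $0$ has no group-$1$ sample), and symmetrically the client-$1$ constraint conditions on an empty event. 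Hence, for \emph{every} choice of $(\veps_0,\veps_1)\in[0,1]^2$, the feasible set of~\ref{prob:ffl} is all of $\cF$ and the program collapses to the unconstrained risk minimization $\min_{f\in\cF}\bP(\haty\neq\rvy)$.

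From here I would identify the \fflfedavg{} minimizer with the pooled ERM. Because $\rvx$ is continuous and $\eta$ is strictly increasing, $\{x:\eta(x)=\tfrac12\}$ has measure zero, so the unconstrained risk is minimized almost everywhere by $f(x,a)=\indicator{\eta(x)>\tfrac12}$, which does not depend on $a$. This is exactly the classifier returned by~\ref{prob:cfl} at $\veps=1$: since DP disparity always lies in $[0,1]$ the constraint $\disp\le 1$ is inactive, so Lemma~\ref{lemma:solution} with $\veps=1\ge|g(0)|$ gives $\lbdcfl=0$ and hence $f_1^{\text{CFL}}=\indicator{\eta(x)>\tfrac12}$ a.e.\ (uniqueness a.e.\ being guaranteed by Lemma~\ref{lemma:unique_solution}). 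Therefore $\fffl=f_1^{\text{CFL}}$ a.e.\ for every $(\veps_0,\veps_1)$, so $\disp(\fffl)$ is constant in $(\veps_0,\veps_1)$ and $\min_{\veps_0,\veps_1}\disp(\fffl)=\disp(f_1^{\text{CFL}})$.

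It then remains to show the CFL side attains $0$. The hypothesis $\disp(f_1^{\text{CFL}})>0$ forces $|g(0)|>0$, so $\veps=0<|g(0)|$; taking $\veps=0$ in~\ref{prob:cfl} and applying Lemma~\ref{lemma:different_sign} (first item) yields $\lbdcfl=g^{-1}(0)$ and $\md(f_0^{\text{CFL}})=g(\lbdcfl)=0$, whence $\disp(f_0^{\text{CFL}})=0$ and $\min_\veps\disp(\fcfl)=0$ (this is the expected existence of a perfectly fair centralized classifier, cf.~\citep{menon2018cost}). Combining the two computations with the assumed strict positivity gives $\min_{\veps_0,\veps_1}\disp(\fffl)=\disp(f_1^{\text{CFL}})>0=\min_\veps\disp(\fcfl)$, as claimed. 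The only genuinely delicate step is the first one: making precise that the local constraints are vacuous and verifying that this holds \emph{uniformly} over all $(\veps_0,\veps_1)$, so that no choice of local budgets can recover any fairness; the remaining steps are direct applications of the CFL solution formula.
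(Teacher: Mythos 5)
Your proposal is correct and takes essentially the same route as the paper's proof: both rest on the observation that with $q_0=0$ and $q_1=1$ the local constraints in~\ref{prob:ffl} are vacuous, so the \fflfedavg{} program collapses to unconstrained risk minimization, whose solution is $f_1^{\text{CFL}}$, while CFL at $\veps=0$ attains zero disparity. You merely flesh out details the paper leaves implicit (the identification of the pooled $q=\tfrac12$, the a.e.\ uniqueness of the unconstrained minimizer, and the verification via Lemma~\ref{lemma:different_sign} that $\disp(f_0^{\text{CFL}})=0$), which is a welcome but not conceptually different elaboration.
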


\begin{proof}
Since $\rva \mid \rvi = 0\sim \Ber(0), \rva \mid \rvi=1 \sim \Ber(1)$, the constraints in~\ref{prob:ffl} vanish. When $\veps=1$, the constraint in~\ref{prob:cfl} always holds and thus also vanishes. Thus in such scenario the solution to~\ref{prob:ffl} becomes $f_1^{\text{CFL}}$. Then from the assumption we have
\[\disp(\fffl) = \disp(f_1^{\text{CFL}}) > \disp(f_0^{\text{CFL}})=0. \]

\end{proof}

\subsection{Extension to multi-client cases}\label{appendix:multi_clients}
In this subsection, we perform the analysis of \ufl{} and \fflfedavg{} for the multi-client cases. 
We present a more general version of Lemma~\ref{lemma:ufl_limit}, Thm.~\ref{thm:fflfair} and Lemma~\ref{lemma:formal_ffl_limit}.



The following lemma shows the fundamental limitation of \ufl{}:
\begin{lemma}[Formal version of Lemma~\ref{lemma:informal_ufl_limit}]\label{lemma:ufl_limit_multi}
Let $q\in(0,1)$. Consider a partition which divides $I$ clients into two subsets. Denote the mixture distribution of each subset as $\rvx \mid \rva = a , \rvj = j \sim \Tilde{\cP}_a^{(j)}$, where $j$ is the index of the subset, and $\Tilde{\cP}_a^{(j)}$ is a distribution, for $a,j=0,1$. Similar to two clients case, define $\Tilde{g}_j (\lambda) = \int_{[\eta^{-1}(\frac{1}{2} - \frac{\lambda}{2(1-q)}), +\infty)} \, \diffdchar\Tilde{\cP}_0^{(j)} - \int_{[\eta^{-1}(\frac{1}{2} + \frac{\lambda}{2q}), +\infty)} \, \diffdchar\Tilde{\cP}_1^{(j)}$. Consider the case that $q_i = q$ for all $i \in [I]$ and $q \in (0,1)$. Denote the proportion of the two subset as $J_0$ and $J_1$, where $J_0, J_1 >0 $ and $J_0 + J_1 = 1$. Let $c = \min\set{|\Tilde{g}_0 (0)|, |\Tilde{g}_1(0)|}$. Define $\Tilde{\psi}: [0,c]\times[0,c] \rightarrow [-1,1]$ as
\begin{equation}\label{def:psi_mul}
\begin{split}
 \Tilde{\psi}(\Tilde{\veps}_0, \Tilde{\veps}_1) &= J_0 J_1\Tilde{g}_0(\Tilde{g}_1^{-1}(\sign(\Tilde{g}_1(0)) \Tilde{\veps}_1)) + J_0 J_1\Tilde{g}_1(\Tilde{g}_0^{-1}(\sign(\Tilde{g}_0(0))\Tilde{\veps}_0)) \\
    & + J_0^2\sign(\Tilde{g}_0(0)) \Tilde{\veps}_0 + J_1^2 \sign(\Tilde{g}_1(0))\Tilde{\veps}_1.
\end{split}
\end{equation}
 If there exists a partition such that $\Tilde{g}_0(0)\Tilde{g}_1(0) < 0$ and $\Tilde{\psi}(\Tilde{\veps}_0, \Tilde{\veps}_1)(\Tilde{g}_0(0) + \Tilde{g}_1(0)) > 0 $ for all $\Tilde{\veps}_0, \Tilde{\veps}_1 \in [0, c]$, then for all $\Tilde{\veps}_0, \Tilde{\veps}_1 \in [0,1]$, $\text{DP Disp}(\fuflm) \geq \Tilde{\delta}= \min\{|\Tilde{\psi}(\Tilde{\veps}_0, \Tilde{\veps}_1)|: \Tilde{\veps}_0, \Tilde{\veps}_1 \in [0, c]\} > 0$.
\end{lemma}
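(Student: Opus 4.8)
The plan is to reduce the $I$-client ensemble to an effective two-client problem determined by the given partition, and then to replay the case analysis of Lemma~\ref{lemma:ufl_limit} with the uniform weight $1/2$ replaced by the subset proportions $J_0, J_1$. First I would record the structural fact underlying the reduction. By Lemma~\ref{lemma:solution} each local solution $f_i^{\veps_i}$ is a threshold classifier with parameter $\lambda_i$, and on any client $k$'s distribution its mean difference equals $g_k(\lambda_i)$. Since $q_i = q$ and the clients are equally sized, the global law of $\rvx$ given $\rva = a$ is the uniform mixture of the $\cP_a^{(k)}$, so expanding $\md(\fuflm)$ yields $\md(\fuflm) = \frac{1}{I^2}\sum_{i,k} g_k(\lambda_i)$. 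Grouping both the classifier index $i$ and the distribution index $k$ according to the partition $S_0 \cup S_1$, and using $\tg_j = \frac{1}{|S_j|}\sum_{k \in S_j} g_k$ together with $|S_j| = I J_j$, this sum organizes into a $2\times 2$ block form in which, when all clients inside a subset share a common threshold, $\md(\fuflm)$ collapses exactly to the expression $\tpsi(\tveps_0, \tveps_1)$ of~\eqref{def:psi_mul} under the substitution $\lambda^{(j)} = \tg_j^{-1}(\sign(\tg_j(0))\tveps_j)$.

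The second step is to show that this coordinated, subset-symmetric configuration is the one that drives $|\md(\fuflm)|$ closest to zero, so that minimizing over all local budgets is controlled by $\min_{[0,c]^2}|\tpsi|$. Because $\md(\fuflm)$ is separable in the $\lambda_i$ and the $g_j$ are strictly monotone (Lemma~\ref{lemma:solution}), the set of achievable values of $\md(\fuflm)$ is an interval whose endpoints are attained at the extreme admissible thresholds; I would use the monotonicity together with the feasible ranges for $\lambda_i$ coming from~\eqref{lambda_i} (namely $\lambda_i$ lies between $0$ and $g_i^{-1}(0)$, on the side dictated by $\sign(g_i(0))$) to argue that relaxing any local constraint moves $\md(\fuflm)$ monotonically away from zero in the direction fixed by $\sign(\tg_0(0) + \tg_1(0))$. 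This is the multi-client analogue of the five-case split in Lemma~\ref{lemma:ufl_limit}: the cases are now indexed by whether each subset's effective constraint is slack or tight, and in each case the value is bounded below by $\tpsi$ evaluated at a boundary point of $[0,c]^2$.

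With that reduction in hand, the conclusion follows as in Lemma~\ref{lemma:ufl_limit}. The hypothesis $\tg_0(0)\tg_1(0) < 0$ forces the two subsets to favor opposite groups, so the effective ERM is unfair at the subset level and the coordinated value $\tpsi$ cannot be made to vanish; the hypothesis $\tpsi(\tveps_0,\tveps_1)(\tg_0(0)+\tg_1(0)) > 0$ on $[0,c]^2$ fixes the sign of $\tpsi$ throughout the tight regime, which is exactly what makes $\tdelta = \min\{|\tpsi(\tveps_0,\tveps_1)| : \tveps_0,\tveps_1 \in [0,c]\}$ strictly positive and a valid lower bound for every $\tveps_0, \tveps_1 \in [0,1]$. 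Splitting into the $\tg_0(0)+\tg_1(0) > 0$ and $<0$ subcases, handled symmetrically as in the two halves of the proof of Lemma~\ref{lemma:ufl_limit}, then yields $\disp(\fuflm) \ge \tdelta > 0$.

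I expect the main obstacle to be the second step: unlike the genuine two-client setting, clients inside a subset may individually pick heterogeneous budgets $\veps_i$, and their admissible threshold ranges need not coincide when their local $g_i(0)$ have mixed signs. Establishing that no such heterogeneous choice can beat the subset-symmetric configuration — equivalently, that the extremal value of the separable objective $\frac{1}{I}\sum_i g(\lambda_i)$, where $g$ is the global mean-difference function of Lemma~\ref{lemma:solution}, over the product of feasible intervals is realized by the coordinated thresholds underlying $\tpsi$ — is the delicate point, and is where the monotonicity of $g$ and the precise form of the feasible ranges must be used carefully; the accompanying sign bookkeeping with the non-uniform weights $J_0, J_1$ in the case analysis is routine but must be tracked.
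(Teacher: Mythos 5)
Your high-level strategy is the same as the paper's: the paper also reduces the $I$-client problem to a two-super-client problem weighted by $J_0,J_1$ and then reruns the five-case analysis of Lemma~\ref{lemma:ufl_limit} with the weight $1/2$ replaced by $J_0,J_1$; your step 1 expansion, $\md(\fuflm)=\frac{1}{I^2}\sum_{i,k}g_k(\lambda_i)=\frac{1}{I}\sum_i g(\lambda_i)$, and its collapse to $\tpsi$ under coordinated thresholds, is correct and in fact more explicit than anything written in the paper. The genuine gap is your step 2, and your own suspicion about it is well founded. By~\eqref{lambda_i}, increasing $\veps_i$ moves $\lambda_i$ from $g_i^{-1}(0)$ toward $0$, hence moves the contribution $g(\lambda_i)$ from $g(g_i^{-1}(0))$ toward $g(0)$; since $g$ is increasing, this movement has sign $\sign(g_i(0))$, a \emph{per-client} quantity. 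The hypotheses of the lemma constrain only the subset averages $\tg_0(0),\tg_1(0)$, so clients inside one subset may have $g_i(0)$ of opposite signs, and their relaxations push $\md(\fuflm)$ in opposite directions; there is no single ``direction fixed by $\sign(\tg_0(0)+\tg_1(0))$''. Consequently, the endpoint of the achievable interval of $\frac{1}{I}\sum_i g(\lambda_i)$ nearest zero involves the per-client values $g(g_i^{-1}(0))$ averaged within each subset, whereas $\tpsi$ involves the subset-level values $g(\tg_j^{-1}(0))$; comparing an average of roots with the root of the average admits no inequality in general, in either direction. So monotonicity plus the feasible ranges cannot by themselves show that the coordinated (subset-symmetric) configuration is the one closest to zero, and your proposal as written does not prove the lemma.

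You should know, however, that the paper's own proof does not close this step either: it disposes of the reduction with the single sentence that pooling the data within each subset ``can clearly achieve a wider range of fairness'' than per-client training, and then rigorously proves the weighted case analysis only for the pooled classifier $\fuflp$, i.e., exactly your coordinated configuration. So your proposal reproduces the portion of the paper's argument that is actually proved and stalls at precisely the point the paper asserts without proof. Closing it would require either an additional hypothesis tying the per-client quantities $g_i(0)$, $g_i^{-1}(0)$ to their subset aggregates (for instance, all clients within a subset favoring the same group), or a different argument that lower-bounds $\bigl|\frac{1}{I}\sum_i g(\lambda_i)\bigr|$ directly over the product of per-client feasible intervals rather than through the two-super-client surrogate.
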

\begin{proof}
By Lemma~\ref{lemma:ufl_limit}, we conclude the achievable fairness range of \ufl{} is strictly smaller than that of CFL. Therefore, pooling the datasets in one subset and perform fair learning can clearly achieve a wider range of fairness than perform fair learning on each client individually. Thus, we can consider the two subsets as two clients with  uneven amounts of data, which is almost the same case Lemma~\ref{lemma:ufl_limit} considers. Therefore, we follow the same proof idea as Lemma~\ref{lemma:ufl_limit} to prove our claim.

Denote the assembled classifier trained from two pooled datasets as $\fuflp$. 
Note that the mean difference can be expressed as
\begin{equation}\label{\ufl{}_dp_p}
   \md(\fuflp) = J_0^2\tg_0(\lbduflop) + J_0J_1\tg_0(\lbdufllp) + J_0J_1\tg_1(\lbduflop) + J_1^2\tg_1(\lbdufllp). 
\end{equation}
In the following proof, we will show, the mean difference cannot reach 0.

Without any loss of generality, assume $|\tg_0(0)| < |\tg_1(0)|$ and  $\tg_1(0)>0$. 
By $\tg_0(0)\tg_1(0)<0$ and $\tpsi(\tveps_0, \tveps_1)(\tg_0(0)+ \tg_1(0)) > 0$ for all $\tveps_0, \tveps_1 \in [0, c]$, we have $\tg_0(0)<0$ and $\tpsi(\tveps_0, \tveps_1) > 0$ for all $\tveps_0, \tveps_1 \in [0, c]$. Without any loss of generality, assume $J_0 \leq J_1$.

First, we will prove that \ufl{} achieves its lowest mean difference when $\tveps_0, \tveps_1 \in [0, c]$. 
In what follows, we consider five different cases to derive the desired result.

\begin{case}
$\tveps_0 > |\tg_0(0)|, \tveps_1 > |\tg_1(0)|$: ERM is fair on both clients.
\end{case} By~\eqref{lambda_i}, we have $\lbduflop = \lbdufllp = 0$. 
Recall $\tg_i(\cdot)$ is a monotone increasing function, combine $\tg_1(0)>0$ and Lemma~\ref{lemma:different_sign}, and thus $\tg_0(\tg_1^{-1}(0)) < \tg_0(0) < 0.$
Applying the above conclusion yields 
\[\eqref{\ufl{}_dp_p} = J_0\tg_0(0) +  J_1\tg_1(0) > \frac{1}{J_1} \left(J_0^2\tg_0(0) + J_1^2\tg_1(0) + J_0J_1\tg_0(\tg_1^{-1}(0))\right) = \frac{1}{J_1} \tpsi(\tg_0(0), 0)\geq \tdelta.\]

\begin{case}
$\tveps_0 \leq |\tg_0(0)|, \tveps_1 > |\tg_1(0)|$: ERM is unfair on client 0, but fair on client 1.
\end{case} 
Applying~\eqref{lambda_i} results in $\lbdufll=0$. 
By the fact that $\tg_i(\cdot)$ is a strictly monotone increasing function, we have $\lbduflop = \tg_0^{-1}(-\tveps_0) > \tg_0^{-1}(\tg_0(0)) = 0.$ 
Applying the above conclusion yields
\begin{equation}
    \begin{aligned}
       ~\eqref{\ufl{}_dp_p}  =& - J_0^2 \tveps_0 + J_1^2  \tg_1(0) + J_0J_1 \tg_0(0) + J_0J_1 \tg_1(\lbduflop) \\
        >& J_0\tg_0(0)+J_1\tg_1(0) > \frac{1}{J_1}\tpsi(\tg_0(0), 0) \geq \tdelta.
    \end{aligned}
\end{equation}
In the first equality we used $\lbduflop>0, \tg_1(\lbduflop) > \tg_1(0), \tg_0(0) < -\tveps_0$.
\begin{case}
$\tveps_0 \leq |\tg_0(0)|, \tveps_1 \leq |\tg_1(0)|$: ERM is unfair on both client 0 and client 1. 
\end{case} 
Applying~\eqref{lambda_i} we have $\lbduflop = \tg_0^{-1}(-\tveps_0), \lbdufllp=\tg_1^{-1}(\tveps_1)$.
Then we have
\begin{align}
  ~\eqref{\ufl{}_dp_p} & = -J_0^2\tveps_0 + J_1^2\tveps_1 + J_0J_1\tg_0(\tg_1^{-1}(\tveps_1)) + J_0J_1\tg_1(\tg_0^{-1}(-\tveps_0)) \\
   & \geq -J_0^2\tveps_0 + J_1^2\tveps_0 + J_0J_1\tg_0(\tg_1^{-1}(\tveps_0)) + J_0J_1\tg_1(\tg_0^{-1}(-\tveps_0)) =  \tpsi(\tveps_0, \tveps_0) \geq \tdelta. 
\end{align}

\begin{case}
$\tveps_0 > |\tg_0(0)|, \tveps_1 \leq |\tg_0(0)| $: ERM is fair on client 0 and very unfair on client 1.
\end{case} 
By~\eqref{lambda_i}, we have $\lbduflop = 0, \lbdufllp = \tg_1^{-1}(\tveps_1) > \tg_1^{-1}(0)$. 
Then we obtain
\begin{align}
~\eqref{\ufl{}_dp_p}   &  = J_0^2\tg_0(0) + J_0J_1\tg_0(\lbdufllp) + J_0J_1\tg_1(0) + J_1^2\tveps_1  \\
    & > J_0^2\tg_0(0) + J_0J_1\tg_0(\tg_1^{-1}(0)) + J_1^2\tg_1(0) = \tpsi(\tg_0(0),0) \geq \tdelta.
\end{align}

\begin{case}
$\tveps_0 > |\tg_0(0)|, |\tg_0(0)| \leq \tveps_1 < |\tg_1(0)|$: ERM is fair on client 0 and unfair on client 1.
\end{case} 
Applying~\eqref{lambda_i} implies $ \lbdufllp = \tg^{-1}_1(\tveps_1) > \tg_1^{-1}(0)$. 
Therefore,
\begin{align}
~\eqref{\ufl{}_dp_p}  =   & J_0^2\tg_0(0) + J_0J_1\tg_0(\tg_1^{-1}(\tveps_1)) + J_1^2\tveps_1 + J_0J_1\tg_1(0)  \\
    &> J_0^2 \tg_0(0) + J_0J_1\tg_1(0)+ J_0J_1\tg_0(\tg_1^{-1}(0)) + J_1^2\tveps_1  > \tpsi(\tg_0(0), 0) \geq \tdelta.
\end{align}
Combining all the cases above, we conclude that when $\tg_1(0)>0$, $\text{DP Disp}(\fuflp) \geq \tdelta = \min\{|\tpsi(\tveps_0, \tveps_1)|: \tveps_0, \tveps_1 \in [0, c]\} > 0$ for all $\tveps_0, \tveps_1 \in [0,1]$.

\end{proof}
\begin{remark}
Based on the proof above, we can conclude, for the cases with multiple clients, the fundamental limitation of \ufl{} still exists. 
\end{remark}

The following theorem shows that \fflfedavg{} can reach any $\veps$ DP disparity:
\begin{theorem}[Generalized version of Thm.~\ref{thm:fflfair}]\label{thm:fflfair_multi}
Let $q_i=q\in (0,1)$ for all $i\in [I]$. For all $\veps \in [0,1]$, let $\veps_i \leq \veps$ for all $i\in [I]$, then $\text{DP Disp}(\ffflm) \leq \veps$. Thus under the condition in Lemma~\ref{lemma:ufl_limit_multi}, we have 
\[\min_{\boldsymbol{\veps} \in [0,1]^I} \disp (\fuflm) > \min_{\boldsymbol{\veps} \in [0,1]^I} \disp(\ffflm) = 0 .\]
\end{theorem}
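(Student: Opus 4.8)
The plan is to lift the two-client argument of Thm.~\ref{thm:fflfair} essentially verbatim to $I$ clients, the only genuinely new ingredient being a short bookkeeping computation of the group-conditional client weights under the equal-$q$ hypothesis. First I would fix $\veps_i = \veps$ for every $i \in [I]$ (any choice satisfying $\veps_i \le \veps$ works equally well). Since $\ffflm$ is by definition feasible for \ref{prob:ffl}, it satisfies the per-client constraint $\disp_i(\ffflm) \le \veps_i = \veps$ for all $i$, where $\disp_i$ denotes the local DP disparity introduced in Sec.~\ref{appendix:ufl_setting}.

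The key structural step is to write the \emph{global} mean difference as a uniform average of the \emph{local} ones. Because all clients hold the same amount of data, $\bP(\rvi = i) = 1/I$; and because $q_i = q$ for every $i$, the group-conditional membership probability is
\[
\bP(\rvi = i \mid \rva = a) = \frac{\bP(\rva = a \mid \rvi = i)\,\bP(\rvi = i)}{\bP(\rva = a)} = \frac{1}{I},\qquad a \in \{0,1\},
\]
independent of both $i$ and $a$, since the common factor $q$ (for $a=1$) or $1-q$ (for $a=0$) cancels. Hence $\bP(\haty = 1 \mid \rva = a) = \tfrac{1}{I}\sum_{i\in[I]} \bP(\haty = 1 \mid \rva = a, \rvi = i)$ for each $a$, and subtracting the two groups gives $\md(\ffflm) = \tfrac{1}{I}\sum_{i\in[I]} \md_i(\ffflm)$.

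With this identity in hand the bound follows from the triangle inequality exactly as in the two-client case:
\[
\disp(\ffflm) = |\md(\ffflm)| \le \frac{1}{I}\sum_{i\in[I]} \disp_i(\ffflm) \le \frac{1}{I}\sum_{i\in[I]} \veps_i \le \veps .
\]
Specializing to $\veps = 0$ forces $\disp(\ffflm) = 0$, so $\min_{\boldsymbol{\veps} \in [0,1]^I}\disp(\ffflm) = 0$; this is the achievability half. The strict-ordering conclusion then follows immediately by combining this with Lemma~\ref{lemma:ufl_limit_multi}, which under its heterogeneity hypotheses guarantees $\disp(\fuflm) \ge \tdelta > 0$ for all $\boldsymbol{\veps}$, whence $\min_{\boldsymbol{\veps}}\disp(\fuflm) \ge \tdelta > 0 = \min_{\boldsymbol{\veps}}\disp(\ffflm)$.

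I do not expect a real obstacle here: the entire difficulty is concentrated in verifying that the equal-$q$ hypothesis collapses the group-conditional client weights to the uniform value $1/I$, which is precisely what lets the global disparity decompose as a clean average of the local disparities. The one point to handle with care is that this decomposition genuinely breaks when the $q_i$ differ — exactly the regime in which Lemma~\ref{lemma:formal_ffl_limit} shows \fflfedavg{} can itself be bounded away from perfect fairness — so the equal-$q$ assumption is essential to the argument and cannot be dropped.
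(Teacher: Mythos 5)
Your proposal is correct and follows essentially the same route as the paper's proof: the paper likewise decomposes the global mean difference as the uniform average $\md(\ffflm) = \tfrac{1}{I}\sum_{i\in[I]}\md_i(\ffflm)$ (the equal-$q$ hypothesis entering implicitly through the mixture $\cP_a = \tfrac{1}{I}\sum_i \cP_a^{(i)}$, which you make explicit via Bayes' rule) and then applies the triangle inequality against the per-client constraints $\veps_i \le \veps$ before invoking Lemma~\ref{lemma:ufl_limit_multi} for the strict gap. The only difference is cosmetic: you fix $\veps_i = \veps$ while the paper carries general $\veps_i \le \veps$, and your remark on why the argument fails for unequal $q_i$ is a correct observation the paper leaves to Lemma~\ref{lemma:ffl_limit_multi}.
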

\begin{proof}
When $\veps_i\leq \veps$ the global DP disparity becomes
\begin{equation}
   \begin{aligned}
   &  \text{DP Disp}(\ffflm) = |\bE_{\rvx \mid \rva = 0} f(\rvx,0) - \bE_{\rvx \mid \rva = 1} f(\rvx, 1)| \\
    & =  |(\sum_{i=0}^{I-1} \int_{\cX} f(x,0) \,d\cP_{0}^{(i)} )/ I  - (\sum_{i=0}^{I-1} \int_{\cX} f(x,1) \,d\cP_{1}^{(i)} )/ I| \\
    & =  |\sum_{i=0}^{I-1} \text{MD}_i(\ffflm) /I | \leq \sum_{i=0}^{I-1} \text{DP Disp}_i(\ffflm)/I  \leq \sum_{i=0}^{I-1} \veps_i/I = \veps.
    \end{aligned}
\end{equation}
\end{proof}

The following theorem shows the limitation of \fflfedavg{}:
\begin{lemma}[Generalized version of Lemma~\ref{lemma:formal_ffl_limit}]\label{lemma:ffl_limit_multi}
Let $\rva \mid \rvi = i\sim \Ber(0)$ or $\rva \mid \rvi = i\sim \Ber(1)$ for all $i\in [I]$. When $\disp(f_1^{\text{CFL}})> 0$, we have
\[\min_{\boldsymbol{\veps}\in [0,1]^{I}} \disp (\ffflm) =  \disp(f_1^{\text{CFL}}) > \min_{\veps} \disp(\fcfl) = 0.\]
\end{lemma}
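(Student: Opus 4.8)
The plan is to reduce the multi-client statement to exactly the same degeneracy that drives the two-client Lemma~\ref{lemma:formal_ffl_limit}: under the stated hypothesis every client holds samples from a single sensitive group, so each local fairness constraint is vacuous. First I would observe that if $\rva \mid \rvi = i \sim \Ber(0)$ then $\bP(\rva = 1, \rvi = i) = 0$, so the term $\bP(\haty = 1 \mid \rva = 1, \rvi = i)$ conditions on a null event; symmetrically, if $\rva \mid \rvi = i \sim \Ber(1)$ then $\bP(\haty = 1 \mid \rva = 0, \rvi = i)$ conditions on a null event. In either case the $i$th constraint in~\ref{prob:ffl} is automatically satisfied for every $\veps_i \geq 0$. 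Since this holds for all $i \in [I]$, the entire constraint set of~\ref{prob:ffl} vanishes, regardless of the choice of $\boldsymbol{\veps}$.

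Next I would conclude that~\ref{prob:ffl} collapses to the unconstrained problem $\min_{f \in \cF} \bP(\haty \neq \rvy)$, whose minimizer is the global ERM and is independent of $\boldsymbol{\veps}$. Hence $\ffflm$ is the same classifier for every $\boldsymbol{\veps} \in [0,1]^I$, and it coincides with $f_1^{\text{CFL}}$: by Lemma~\ref{lemma:solution} the CFL constraint in~\ref{prob:cfl} at $\veps = 1$ is likewise vacuous (the DP disparity never exceeds $1$), so~\ref{prob:cfl} at $\veps = 1$ is the same unconstrained ERM, with $\lambda = 0$ and threshold $\indicator{\eta(x) > 1/2}$. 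This immediately gives $\min_{\boldsymbol{\veps} \in [0,1]^I} \disp(\ffflm) = \disp(f_1^{\text{CFL}})$, which is strictly positive by assumption.

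For the CFL side I would invoke Lemma~\ref{lemma:solution} (equivalently, the existence of a perfectly fair classifier) to note that~\ref{prob:cfl} at $\veps = 0$ admits a solution $f_0^{\text{CFL}}$ with $\disp(f_0^{\text{CFL}}) = 0$, so $\min_\veps \disp(\fcfl) = 0$. Chaining the two computations yields the claimed strict gap $\disp(f_1^{\text{CFL}}) > 0 = \min_\veps \disp(\fcfl)$.

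I expect the only genuine subtlety to be the justification that a fairness constraint conditioning on a probability-zero event is read as vacuously satisfied; this is the modeling convention that makes the local constraints disappear, and everything else is a direct restatement of the two-client argument now applied to $I$ clients of either $\Ber(0)$ or $\Ber(1)$ type. No new estimate is required, since the degeneracy is entirely per-client and hence insensitive to how many clients of each type are present.
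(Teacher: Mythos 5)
Your proposal is correct and takes essentially the same route as the paper's own proof: you observe that the per-client constraints in~\ref{prob:ffl} are vacuous because each client contains only one sensitive group, so the problem collapses to the unconstrained ERM, which coincides with $f_1^{\text{CFL}}$ and has strictly positive disparity, while \ref{prob:cfl} at $\veps=0$ attains zero disparity. The only difference is that you make explicit the null-event convention and the existence of a perfectly fair classifier, both of which the paper leaves implicit.
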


\begin{proof}
Since $\rva \mid \rvi = i\sim \Ber(0)$ or $\rva \mid \rvi = i\sim \Ber(1)$, the constraints in~\eqref{prob:ffl} vanish. When $\veps=1$, the constraint in~\ref{prob:cfl} always holds and thus also vanishes. Thus in such scenario the solution to~\eqref{prob:ffl} becomes $f_1^{\text{CFL}}$. Then from the assumption we have
\[\disp(\ffflm) = \disp(f_1^{\text{CFL}}) > \disp(f_0^{\text{CFL}})=0. \]

\end{proof}

\section{Appendix - FedFB analysis and algorithm description}\label{appendix:fedfb}

In this section, we provide our bi-level optimization formulation for \fedfb{} for four fairness notions: demographic parity, equal opportunity, equalized odds and client parity, and design the corresponding update rule. This development can also be applied to centralized case. 
Then, we provides more details of how we incorporate \fb{} with federated learning. 

To explain how to optimize the weights of different groups, we introduce some necessary notations first.
Recall that in the setting of Sec.~\ref{sec:fedfb} we assume $A$ sensitive attributes, here we further assume $I$ clients.
Then the sample has attributes $(\mathrm{x},\mathrm{y},\mathrm{a},\mathrm{i})$, where $\mathrm{x} \in \mathcal{X}$ is the input feature, $\mathrm{y}\in \{0,1\}$ is the label, $\mathrm{a}\in \mathcal{A}= [A]$ is the sensitive attribute and $\mathrm{i}\in [I]$ is the index of the client that the sample belongs to.
We further denote $n_{y,a}^{(i)}: = |\{(\mathrm{x},\mathrm{y},\mathrm{a},\mathrm{i}):\mathrm{y} = y, \mathrm{a} = a, \mathrm{i}=i\}|$ be the number of samples belong to client $i$ of label $y$ and sensitive attribute $a$. 
And we further define the local version of $L_{y,a}$ as $L_{y,a}^{(i)}(\boldsymbol{w}) := \sum_{(\mathrm{x},\mathrm{y},\mathrm{a},\mathrm{i}):\mathrm{y} = y, \mathrm{a}=a, \mathrm{i} = i}\ell (\mathrm{y}, \hat{\mathrm{y}};\boldsymbol{w}) / n_{y,a}^{(i)}$.
\subsection{FedFB \wrt demographic parity}\label{appendix:fedfb_dp}
We first provide the proof for Proposition~\ref{prop:dp}. 
\begin{proof}[Proof of Proposition~\ref{prop:dp}]
We denote by $\mathbb{P}$ the empirical probability. The demographic parity is satisfied when $\mathbb{P}(\hat{\mathrm{y}}=1\mid \mathrm{a} = 0) = \mathbb{P}(\hat{\mathrm{y}} = 1\mid \mathrm{a} = a)$ holds for all $a\in [A]$. Thus,
\begin{equation}
\begin{aligned}
    &\mathbb{P}(\hat{\mathrm{y}}=1, \mathrm{y}=0 \mid \mathrm{a}=0)+\mathbb{P}(\hat{\mathrm{y}}=1, \mathrm{y}=1 \mid \mathrm{a}=0)\\
    =&\mathbb{P}(\hat{\mathrm{y}}=1, \mathrm{y}=0 \mid \mathrm{a}=a)+\mathbb{P}(\hat{\mathrm{y}}=1, \mathrm{y}=1 \mid \mathrm{a}=a)
\end{aligned}
\end{equation}
For 0-1 loss, we have $\ell(|1-y|, \cdot) = 1-\ell(y, \cdot)$, thus
\begin{equation}
    \begin{aligned} 
    & \frac{1}{n_{\star, 0}} \sum_{(\mathrm{x},\mathrm{y},\mathrm{a}): \mathrm{y}=0, \mathrm{a}=0}\left(1-\ell\left(\mathrm{y}, \hat{\mathrm{y}};\boldsymbol{w}\right)\right)+ \frac{1}{n_{\star, 0}} \sum_{(\mathrm{x},\mathrm{y},\mathrm{a}): \mathrm{y}=1, \mathrm{a}=0} \ell\left(\mathrm{y}, \hat{\mathrm{y}};\boldsymbol{w}\right) \\
    =& \frac{1}{n_{\star, a}}\sum_{(\mathrm{x},\mathrm{y},\mathrm{a}): \mathrm{y}=0, \mathrm{a}=a}\left(1-\ell\left(\mathrm{y}, \hat{\mathrm{y}};\boldsymbol{w}\right)\right)+\frac{1}{n_{\star, a}} \sum_{(\mathrm{x},\mathrm{y},\mathrm{a}): \mathrm{y}=1, \mathrm{a}=a} \ell\left(\mathrm{y}, \hat{\mathrm{y}};\boldsymbol{w}\right) .
    \end{aligned}
\end{equation}
By replacing $\sum_{(\mathrm{x},\mathrm{y},\mathrm{a}): \mathrm{y}=y, \mathrm{a}=a} \ell\left(\mathrm{y}, \hat{\mathrm{y}};\boldsymbol{w}\right) = n_{y,a}L_{y,a}(\boldsymbol{w})$, we have
\begin{equation}
\begin{aligned}
& \frac{n_{0,0}}{n_{\star,0}}(1-L_{0,0}(\boldsymbol{w})) + \frac{n_{1,0}}{n_{\star,0}}L_{1,0}(\boldsymbol{w}) \\ = &  \frac{n_{0,a}}{n_{\star,a}}(1-L_{0,a}(\boldsymbol{w})) + \frac{n_{1,a}}{n_{\star,a}}L_{1,a}(\boldsymbol{w}) .
\end{aligned}
\end{equation}
\end{proof}

We make the following assumption to our loss function for showing the partial convergence guarantee of \fb{}.
\begin{assumption}\label{ass:dp}
$L'_{y,a}(\cdot)$ is twice differentiable for all $y\in\{0,1\}$, $a\in [A]$, and
\begin{equation}\label{ass_on_dp}
\sum_{a=0}^{A-1} \left[ \lambda_a\nabla^2L_{0,a}^\prime(\boldsymbol{w}) + (2\frac{n_{\star,a}}{n}-\lambda_a)\nabla^2L_{1,a}^\prime(\boldsymbol{w}) \right] \succ 0 \text{ for all } \lambda \in \Lambda.    
\end{equation}
\end{assumption}
If $L^\prime_{y,a}(\boldsymbol{w}_{\boldsymbol{\lambda}})$ is convex for all $y\in\{0,1\},a\in [A]$, the condition~\eqref{ass_on_dp} holds unless for all $a$, $L_{0,a}^\prime(\cdot), L_{1,a}^\prime(\cdot)$ share their stationary points, which is very unlikely~(see Remark 1 in~\citet{roh2021fairbatch}).

Recall the bi-level optimization problem in \eqref{dp:w}, we denote the outer objective function to be $F_{\mathrm{dp}}(\boldsymbol{\lambda})=\sum_{a=1}^{A-1}\big(F_a(\boldsymbol{w}_{\boldsymbol{\lambda}})\big)^2$. The following lemma provides a decreasing direction of $F_{\mathrm{dp}}$, which inspired us to design the update rule (\ref{eq:update_dp}) of $\boldsymbol{\lambda}$.

\begin{lemma}[Decreasing direction of $F_{\mathrm{dp}}$]\label{lemma:dp}
If Assumption~\ref{ass:dp} holds, then on the direction $\boldsymbol{\mu}(\boldsymbol{\lambda}) = (\mu_0(\boldsymbol{\lambda}),\dots,\mu_{A-1}(\boldsymbol{\lambda}))$ where
\begin{equation}\label{dp:mu}
\begin{split}
 \mu_0(\boldsymbol{\lambda})    & = - \sum_{a=1}^{A-1} F_a(\boldsymbol{w}_{\boldsymbol{\lambda}}) \\
 \mu_a (\boldsymbol{\lambda})  &= F_a(\boldsymbol{w}_{\boldsymbol{\lambda}})
\end{split}    
\end{equation}
for all $a\in \{1,\dots,A-1\}$, we have $\boldsymbol{\mu}(\boldsymbol{\lambda})\cdot \nabla F_{\mathrm{dp}}(\boldsymbol{\lambda}) \leq 0$, and the equality holds if only if $\boldsymbol{\mu}(\boldsymbol{\lambda})=\boldsymbol{0}$. 
\end{lemma}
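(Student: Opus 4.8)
The plan is to compute the Euclidean gradient $\nabla F_{\mathrm{dp}}(\boldsymbol{\lambda})$ explicitly and then verify that $\boldsymbol{\mu}(\boldsymbol{\lambda})$ has non-positive inner product with it. Since $F_{\mathrm{dp}}(\boldsymbol{\lambda}) = \sum_{a=1}^{A-1} F_a(\boldsymbol{w}_{\boldsymbol{\lambda}})^2$ depends on $\boldsymbol{\lambda}$ only through the inner minimizer $\boldsymbol{w}_{\boldsymbol{\lambda}}$, the chain rule gives $\partial_{\lambda_b} F_{\mathrm{dp}} = 2\sum_{a=1}^{A-1} F_a(\boldsymbol{w}_{\boldsymbol{\lambda}})\, \nabla_{\boldsymbol{w}} F_a(\boldsymbol{w}_{\boldsymbol{\lambda}})^\top \partial_{\lambda_b}\boldsymbol{w}_{\boldsymbol{\lambda}}$, so the whole computation reduces to (i) the gradients $\nabla_{\boldsymbol{w}} F_a$ and (ii) the sensitivity $\partial_{\lambda_b}\boldsymbol{w}_{\boldsymbol{\lambda}}$ of the inner solution.

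For (i), the definition of $F_a$ yields $\nabla_{\boldsymbol{w}} F_a = \boldsymbol{v}_a - \boldsymbol{v}_0$, where I set $\boldsymbol{v}_a := \nabla L_{0,a}^\prime(\boldsymbol{w}_{\boldsymbol{\lambda}}) - \nabla L_{1,a}^\prime(\boldsymbol{w}_{\boldsymbol{\lambda}})$. For (ii), I would differentiate the first-order stationarity condition of the inner problem, $\sum_{a=0}^{A-1}\big[\lambda_a\nabla L_{0,a}^\prime(\boldsymbol{w}_{\boldsymbol{\lambda}}) + (2\tfrac{n_{\star,a}}{n}-\lambda_a)\nabla L_{1,a}^\prime(\boldsymbol{w}_{\boldsymbol{\lambda}})\big] = \boldsymbol{0}$, with respect to $\lambda_b$. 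Writing $H(\boldsymbol{\lambda})$ for the Hessian $\sum_{a}\big[\lambda_a\nabla^2 L_{0,a}^\prime + (2\tfrac{n_{\star,a}}{n}-\lambda_a)\nabla^2 L_{1,a}^\prime\big]$ appearing in Assumption~\ref{ass:dp}, the implicit function theorem gives $\partial_{\lambda_b}\boldsymbol{w}_{\boldsymbol{\lambda}} = -H(\boldsymbol{\lambda})^{-1}\boldsymbol{v}_b$; here Assumption~\ref{ass:dp} is exactly what guarantees $H(\boldsymbol{\lambda})\succ 0$, hence that $H(\boldsymbol{\lambda})^{-1}$ exists and is positive definite.

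Substituting, $\partial_{\lambda_b}F_{\mathrm{dp}} = -2\boldsymbol{u}^\top H(\boldsymbol{\lambda})^{-1}\boldsymbol{v}_b$ with $\boldsymbol{u} := \sum_{a=1}^{A-1}F_a(\boldsymbol{w}_{\boldsymbol{\lambda}})(\boldsymbol{v}_a - \boldsymbol{v}_0)$. The decisive algebraic observation is that the specific choice of $\boldsymbol{\mu}$ collapses the sum $\sum_{b=0}^{A-1}\mu_b\boldsymbol{v}_b$ back onto $\boldsymbol{u}$: using $\mu_0 = -\sum_{a\geq 1}F_a$ and $\mu_a = F_a$, one has $\sum_{b=0}^{A-1}\mu_b\boldsymbol{v}_b = \sum_{b=1}^{A-1}F_b(\boldsymbol{v}_b - \boldsymbol{v}_0) = \boldsymbol{u}$. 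Therefore $\boldsymbol{\mu}(\boldsymbol{\lambda})\cdot\nabla F_{\mathrm{dp}}(\boldsymbol{\lambda}) = -2\boldsymbol{u}^\top H(\boldsymbol{\lambda})^{-1}\boldsymbol{u}\leq 0$, with equality iff $\boldsymbol{u} = \boldsymbol{0}$, since $H(\boldsymbol{\lambda})^{-1}\succ 0$.

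It then remains to identify $\boldsymbol{u} = \boldsymbol{0}$ with $\boldsymbol{\mu} = \boldsymbol{0}$. The direction $\boldsymbol{\mu} = \boldsymbol{0}$ means precisely $F_a(\boldsymbol{w}_{\boldsymbol{\lambda}}) = 0$ for all $a\geq 1$, which makes $\boldsymbol{u} = \boldsymbol{0}$ immediately. For the converse I would argue that $\boldsymbol{u} = \sum_{a\geq 1}F_a(\boldsymbol{v}_a-\boldsymbol{v}_0) = \boldsymbol{0}$ forces each $F_a = 0$ once the vectors $\{\boldsymbol{v}_a - \boldsymbol{v}_0\}_{a=1}^{A-1}$ are linearly independent; in the binary-group regime $A=2$ relevant to the convergence theorem this is just $\boldsymbol{v}_1\neq\boldsymbol{v}_0$, i.e. the two per-group loss gradients differ, a mild non-degeneracy analogous to the one invoked after Assumption~\ref{ass:dp}. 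I expect this last equivalence, together with the rigorous justification of the implicit differentiation step and of $H(\boldsymbol{\lambda})\succ 0$ holding uniformly along the iterates, to be the main obstacle, whereas the quadratic-form computation itself is routine once the sensitivity formula $\partial_{\lambda_b}\boldsymbol{w}_{\boldsymbol{\lambda}} = -H(\boldsymbol{\lambda})^{-1}\boldsymbol{v}_b$ is in hand.
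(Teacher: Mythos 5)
Your proposal follows essentially the same route as the paper's proof: chain rule on $F_{\mathrm{dp}}$, implicit differentiation of the inner stationarity condition to obtain the sensitivity $\partial_{\lambda_j}\boldsymbol{w}_{\boldsymbol{\lambda}} = -H(\boldsymbol{\lambda})^{-1}\boldsymbol{v}_j$, and the collapse of $\sum_b \mu_b \boldsymbol{v}_b$ onto $\boldsymbol{u}$, giving $\boldsymbol{\mu}(\boldsymbol{\lambda})\cdot\nabla F_{\mathrm{dp}}(\boldsymbol{\lambda}) = -2\,\boldsymbol{u}^\top H(\boldsymbol{\lambda})^{-1}\boldsymbol{u}\leq 0$ by Assumption~\ref{ass:dp}. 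The one step you flag as a potential obstacle --- that equality (i.e., $\boldsymbol{u}=\boldsymbol{0}$) forces $\boldsymbol{\mu}=\boldsymbol{0}$ only when the vectors $\{\boldsymbol{v}_a-\boldsymbol{v}_0\}_{a=1}^{A-1}$ are linearly independent --- is in fact asserted without justification in the paper's proof, so your treatment is, if anything, the more careful one on that point.
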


\begin{proof}
We compute the derivative as
\begin{equation}
    \begin{split}
      \frac{\partial F_{\text{dp}}(\boldsymbol{\lambda})}{\partial \lambda_j}    & =2\sum_{a=1}^{A-1} \Big[\Big( -L_{0,0}^\prime(\boldsymbol{w}_{\boldsymbol{\lambda}})+L_{1,0}^\prime(\boldsymbol{w}_{\boldsymbol{\lambda}}) + L_{0,a}^\prime(\boldsymbol{w}_{\boldsymbol{\lambda}})-L_{1,a}^\prime(\boldsymbol{w}_{\boldsymbol{\lambda}})+ \frac{n_{0,0}}{n_{\star,0}} - \frac{n_{0,a}}{n_{\star,a}} \Big) \\
      & \Big(-\nabla L^\prime_{0,0}(\boldsymbol{w}_{\boldsymbol{\lambda}}) + \nabla L^\prime_{1,0}(\boldsymbol{w}_{\boldsymbol{\lambda}}) + \nabla L^\prime_{0,a}(\boldsymbol{w}_{\boldsymbol{\lambda}})-\nabla L^\prime_{1,a}(\boldsymbol{w}_{\boldsymbol{\lambda}}) \Big) \Big]  \frac{\partial \boldsymbol{w}_{\boldsymbol{\lambda}}}{\partial \lambda_j}     . \label{dp:pF_pl}
    \end{split}
\end{equation}
Note that $\boldsymbol{w}_{\boldsymbol{\lambda}}$ is the minimizer to$~\eqref{dp:w}_{\text{inner}}$, we have
\[\sum_{a=0}^{A-1} \Big[\lambda_a \nabla L_{0,a}^\prime(\boldsymbol{w}_{\boldsymbol{\lambda}}) + (2\frac{n_{\star,a}}{n}-\lambda_a) \nabla L_{1,a}^\prime(\boldsymbol{w}_{\boldsymbol{\lambda}}) \Big]=0.         \]
We take the $\lambda_j$ derivative to the above equation and have
\[ \nabla L_{0,j}^\prime(\boldsymbol{w}_{\boldsymbol{\lambda}}) - \nabla L_{1,j}^\prime(\boldsymbol{w}_{\boldsymbol{\lambda}}) +  \sum_{a=0}^{A-1} \Big[\lambda_a \nabla^2 L_{0,a}^\prime(\boldsymbol{w}_{\boldsymbol{\lambda}}) + (2\frac{n_{\star,a}}{n}-\lambda_a) \nabla^2 L^\prime_{1,a}(\boldsymbol{w}_{\boldsymbol{\lambda}}) \Big]\frac{\partial \boldsymbol{w}_{\boldsymbol{\lambda}}}{\partial \lambda_j} = 0  .\]
Thus we get
\begin{equation}\label{dp:pw_pl}
 \frac{\partial \boldsymbol{w}_{\boldsymbol{\lambda}}}{\partial \lambda_j} = \Big(\sum_{a=0}^{A-1} \Big[\lambda_a \nabla^2 L_{0,a}^\prime(\boldsymbol{w}_{\boldsymbol{\lambda}}) + (2\frac{n_{\star,a}}{n}-\lambda_a) \nabla^2 L^\prime_{1,a}(\boldsymbol{w}_{\boldsymbol{\lambda}}) \Big]\Big)^{-1}   [\nabla L_{1,j}^\prime(\boldsymbol{w}_{\boldsymbol{\lambda}}) - \nabla L_{0,j}^\prime(\boldsymbol{w}_{\boldsymbol{\lambda}})].        
\end{equation}

Then on the direction $\boldsymbol{\mu}(\boldsymbol{\lambda})$ given by~\eqref{dp:mu}, we combine~\eqref{dp:pF_pl} and~\eqref{dp:pw_pl} to have
\begin{align}
& \boldsymbol{\mu}(\boldsymbol{\lambda}) \cdot \nabla F_{\text{dp}}(\boldsymbol{\lambda})   \\
    & = 2 \bigg(\sum_{a=1}^{A-1} \Big[\Big( -L_{0,0}^\prime(\boldsymbol{w}_{\boldsymbol{\lambda}})+L_{1,0}^\prime(\boldsymbol{w}_{\boldsymbol{\lambda}}) + L_{0,a}^\prime(\boldsymbol{w}_{\boldsymbol{\lambda}})-L_{1,a}^\prime(\boldsymbol{w}_{\boldsymbol{\lambda}})+ \frac{n_{0,0}}{n_{\star,0}} - \frac{n_{0,a}}{n_{\star,a}} \Big) \\
 & \quad \Big(-\nabla L^\prime_{0,0}(\boldsymbol{w}_{\boldsymbol{\lambda}}) + \nabla L^\prime_{1,0}(\boldsymbol{w}_{\boldsymbol{\lambda}}) + \nabla L^\prime_{0,a}(\boldsymbol{w}_{\boldsymbol{\lambda}})-\nabla L^\prime_{1,a}(\boldsymbol{w}_{\boldsymbol{\lambda}}) \Big) \Big] \bigg)  \\
    &\quad \Big(\sum_{a=0}^{A-1} \Big[\lambda_a \nabla^2 L_{0,a}^\prime(\boldsymbol{w}_{\boldsymbol{\lambda}}) + (2\frac{n_{\star,a}}{n}-\lambda_a) \nabla^2 L^\prime_{1,a}(\boldsymbol{w}_{\boldsymbol{\lambda}}) \Big]\Big)^{-1}  \\
    &\quad \bigg(- \sum_{a=1}^{A-1} \Big[\Big( -L_{0,0}^\prime(\boldsymbol{w}_{\boldsymbol{\lambda}})+L_{1,0}^\prime(\boldsymbol{w}_{\boldsymbol{\lambda}}) + L_{0,a}^\prime(\boldsymbol{w}_{\boldsymbol{\lambda}})-L_{1,a}^\prime(\boldsymbol{w}_{\boldsymbol{\lambda}})+ \frac{n_{0,0}}{n_{\star,0}} - \frac{n_{0,a}}{n_{\star,a}} \Big) \\
    & \quad \Big(-\nabla L^\prime_{0,0}(\boldsymbol{w}_{\boldsymbol{\lambda}}) + \nabla L^\prime_{1,0}(\boldsymbol{w}_{\boldsymbol{\lambda}}) + \nabla L^\prime_{0,a}(\boldsymbol{w}_{\boldsymbol{\lambda}})-\nabla L^\prime_{1,a}(\boldsymbol{w}_{\boldsymbol{\lambda}}) \Big) \Big]  \bigg) \leq 0
\end{align}
where we have used Assumption~\ref{ass:dp}, and the equality holds only when $\boldsymbol{\mu}(\boldsymbol{\lambda}) = \boldsymbol{0}$.
\end{proof}


Now we introduce how clients collaborate to solve the bi-level optimization problem~\eqref{dp:w}. 

First, we focus on the outer objective function in~$\eqref{dp:w}$ and introduce how clients collaborate to update the weight $\boldsymbol{\lambda}$. Note that the central server can compute $L_{y,a}^\prime(\boldsymbol{w}_{\boldsymbol{\lambda}})$ by weight-averaging the local group loss $L_{y,a}^{(i)}(\boldsymbol{w}_{\boldsymbol{\lambda}})$ sent from clients at communication rounds as
\[L'_{y,a}(\boldsymbol{w}_{\boldsymbol{\lambda}}) = \sum_{i\in[I]} \frac{n^{(i)}_{y,a}}{n_{\star,a}}L^{(i)}_{y,a}(\boldsymbol{w}_{\boldsymbol{\lambda}}),\]
thereby obtain $\boldsymbol{\mu}(\boldsymbol{\lambda})$ and update $\boldsymbol{\lambda}$ by~\eqref{eq:update_dp}. 

Next, we focus on the inner objective function in~$\eqref{dp:w}$ and introduce how clients collaborate to update the model parameters $\boldsymbol{w}_{\boldsymbol{\lambda}}$ using \fedavg{}.
Note that we can decompose the objective function $L(\boldsymbol{w}, \boldsymbol{\lambda})$ into
$L(\boldsymbol{w},\boldsymbol{\lambda}) = \sum_{i\in [I]} L^{(i)}(\boldsymbol{w},\boldsymbol{\lambda})  $, where $
  L^{(i)}(\boldsymbol{w},\boldsymbol{\lambda})  := \sum_{a=0}^{A-1} [\lambda_a n_{0,a}^{(i)} L_{0,a}^{(i)}/n_{\star,a} + (2\frac{n_{\star,a}}{n}-\lambda_a) n_{1,a}^{(i)} L_{1,a}^{(i)}/n_{\star,a} ]$ is the client objective function of client $i$. 
The global objective can be seen as a weighted sum of the client objective function. 
Therefore, we can use \fedavg{} to solve the inner optimization problem. 

Denote $L_{y,a}^{\prime(i)}(\boldsymbol{w}) := \frac{n^{(i)}_{y,a}}{n_{\star,a}}L_{y,a}^{(i)}(\boldsymbol{w})$, we present the pseudocode of \fedfb{} \wrt{} demographic parity in Algorithm~\ref{alg:fedfb_dp}.

\begin{algorithm2e}
\SetAlgoNoLine
 \SetKwInOut{Input}{input}
 \SetKwBlock{Server}{Server executes:}{ }
 \SetKwBlock{Compute}{ClientUpdate$(i,\boldsymbol{w},\boldsymbol{\lambda})$:}{}
  \SetKwInOut{Output}{output}

 \Server{
 \Input{Learning rate $\{\alpha_t\}_{t\in\mathbb{N}}$\;}
Initialize $\lambda_a$ as $\frac{n_{\star,a}}{n}$ for all $a\in[A]\backslash \set{0}$\;
\For{each iteration $t=1,2,\dots$}{
Clients perform updates\;
$\boldsymbol{w}_{\boldsymbol{\lambda}} \leftarrow$ SecAgg$(\set{\boldsymbol{w}^{(i)}})$ for all $i$\;
$F_a \leftarrow$ SecAgg$(\set{F^{(i)}_{a}}) - (I-1)(\frac{n_{0,0}}{n_{\star, 0}} - \frac{n_{0,a}}{n_{\star, a}})$ for all $a, i$\;
$\mu_0 \leftarrow - \sum_{a=1}^{A-1}F_a$\;
$\mu_a \leftarrow F_a, \text{for all }a\in [A]\backslash \{0\}   $\;
\uIf{$t\% k = 0$}{
$\lambda_a    \leftarrow \lambda_a    + \frac{\alpha_t}{\Vert \boldsymbol{\mu}\Vert_2} \mu_a,  \text{ for all }a\in [A]$\;
Broadcast $\boldsymbol{\lambda}$ to clients
}
Broadcast $\boldsymbol{w}_{\boldsymbol{\lambda}}$ to clients;
}
\Output{$\boldsymbol{w}_{\boldsymbol{\lambda}}, \boldsymbol{\lambda}$}
 }{}
 
 \Compute{
 $\boldsymbol{w}^{(i)} \leftarrow $ Gradient descent \wrt objective function $ \sum_{a=0}^{A-1}\left[\lambda_a L_{0,a}^{\prime(i)}(\boldsymbol{w}) + (2\frac{n_{\star,a}}{n}-\lambda_a)L_{1,a}^{\prime(i)}(\boldsymbol{w})\right]$\;
 $F_a^{(i)} \leftarrow -L_{0,0}^{\prime(i)} + L_{1,0}^{\prime(i)} + L_{0,a}^{\prime(i)} - L_{1,a}^{\prime(i)}  +\frac{n_{0,0}}{n_{\star,0}} - \frac{n_{0,a}}{n_{\star,a}}$\;
 Send $\boldsymbol{w}^{(i)}, F_{0,a}^{(i)}(\boldsymbol{w})$ for all $a\in [A]$ to server via a SecAgg protocol;
 }{}

 \caption{\fedfb{}$(k,t)$ \wrt Demographic Parity}
 \label{alg:fedfb_dp}
\end{algorithm2e}

Next, we analyze the convergence performance of \fedfb{}. We need to make the following assumptions on the objective function $L^{(i)}(\boldsymbol{w},\boldsymbol{\lambda})$, $i\in[I]$. For simplicity, we drop the $\boldsymbol{\lambda}$ here and use the notations $L^{(i)}(\boldsymbol{w})$ and $L(\boldsymbol{w})$ instead. We use $\boldsymbol{w}_t^{(i)}$ to denote the model parameters at $t$-th iteration in $i$-th client. The assumptions below are proposed by work~\citet{li2020on}:

\begin{assumption}[Strong convexity, Assumption 1 in~\citet{li2020on}]\label{ass:convex}
$L^{(i)}(\boldsymbol{w})$ is $\mu$-strongly convex for $i\in[I]$, \ie, for all $\boldsymbol{v}$ and $\boldsymbol{w}$, $\boldsymbol{w}$, $L^{(i)}(\boldsymbol{v}) \geq L^{(i)}(\boldsymbol{w}) + (\boldsymbol{v} - \boldsymbol{w})^\top \nabla L^{(i)}(\boldsymbol{w}) + \frac{\mu}{2}\norm{\boldsymbol{v} - \boldsymbol{w}}_2^2 $.
\end{assumption}

\begin{assumption}[Smoothness, Assumption 2 in~\citet{li2020on}]\label{ass:smooth}
$L^{(i)}(\boldsymbol{w})$ is $L$-smooth for $i\in[I]$, \ie, for all $\boldsymbol{v}$ and $\boldsymbol{w}$, $L^{(i)}(\boldsymbol{v}) \leq L^{(i)}(\boldsymbol{w}) + (\boldsymbol{v} - \boldsymbol{w})^\top \nabla L^{(i)}(\boldsymbol{w}) + \frac{L}{2}\norm{\boldsymbol{v} - \boldsymbol{w}}_2^2 $.
\end{assumption}

\begin{assumption}[Bounded variance, Assumption 3 in~\citet{li2020on}]\label{ass:variance}
Let $\xi^{(i)}_t$ be sampled from $i$-th device's local data uniformly at random, where $t\in[T]$, and $T$ is the total number of every client's SGDs. 
The variance of stochastic gradients in each device is bounded: $\mathbb{E}\norm{\nabla L^{(i)}(\boldsymbol{w}_t^{(i)}, \xi_t^{(i)}) - \nabla L^{(i)}(\boldsymbol{w}_t^{(i)})}^2 <\infty$ for $i \in [I]$.
\end{assumption}

\begin{assumption}[Bounded gradients, Assumption 4 in~\citet{li2020on}]\label{ass:gradient}
The expected squared norm of stochastic gradients is uniformly bounded, \ie, $\mathbb{E}\norm{\nabla L^{(i)}(\boldsymbol{w}^{(i)}_t, \xi_t^{(i)})} <\infty$ for all $i\in[I], t\in [T]$.
\end{assumption}

In \fedavg{}, first, the central server broadcasts the lastest model to all clients, then, every client performs local updates for a number of iterations, last, the central server aggregates the local models to produce the new global model(see Algorithm Description in~\citet{li2020on} for more detailed explanation).

Note that we assume $\boldsymbol{\lambda}$ is not updated at each communication round. With the above assumptions, the following theorem shows the convergence of FedFB in the case of two clients.

\begin{theorem}\label{prop:formal_conv_dp}
Consider the case of $A=2$. 
Let Assumption~\ref{ass:convex},~\ref{ass:smooth},~\ref{ass:variance},~\ref{ass:gradient} on $\{L^{(i)}(\cdot,\boldsymbol{\lambda})\}_{i\in[I]}$ and Assumption~\ref{ass:dp} on $\{L^\prime_{y,a}(\cdot)\}_{y,a\in\{0,1\}}$ hold. Choose $\{\alpha_t\}_{t=1}^\infty$ such that $\lim_{t\rightarrow \infty}\alpha_t = 0, \sum_{t=1}^\infty \alpha_t = \infty$. 
Suppose we have sufficiently large number of communication rounds to solve the inner optimization problem between two $\boldsymbol{\lambda}$ update rounds, i.e, $k\to \infty$ in Algorithm~\ref{alg:fedfb_dp}. 
Denote the number of $\boldsymbol{\lambda}$ update as $T$,  $\boldsymbol{\lambda}^{(t)}=(\lambda_0^{(t)},\cdots,\lambda_{A-1}^{(t)})$ as the weight at $t$-th updated round. 
We can find sufficiently large $T$ such that with high probability, applying update rule~\eqref{eq:update_dp} leads to $ |\lambda_a^{(T)} - \lambda_a^\star| \leq \max\Big\{|\lambda_a^{(0)} - \lambda_a^\star| - \sum_{t=1}^T \alpha_t, \alpha_T \Big\} \to 0,$
where $\boldsymbol{\lambda}^\star$ is the global minimizer: $\boldsymbol{\lambda}^\star \in \arg\min_{\boldsymbol{\lambda}} \sum_a [F_a(\boldsymbol{w}_{\boldsymbol{\lambda}})]^2$. .

\end{theorem}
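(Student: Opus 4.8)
The plan is to split the argument along the two levels of the bi-level problem, using the order of limits $\lim_{t\to\infty}\lim_{k\to\infty}$ to decouple them. \textbf{Inner loop (fix $\boldsymbol{\lambda}$, send $k\to\infty$).} For a frozen coefficient vector $\boldsymbol{\lambda}$, the inner objective is the federated sum $L(\boldsymbol{w},\boldsymbol{\lambda})=\sum_{i\in[I]}L^{(i)}(\boldsymbol{w},\boldsymbol{\lambda})$, which under Assumptions~\ref{ass:convex}--\ref{ass:gradient} is precisely the setting of Theorem~1 of \citet{li2020on}. I would invoke that result to conclude that the \fedavg{} iterates converge to the inner minimizer $\boldsymbol{w}_{\boldsymbol{\lambda}}$ in expectation as $k\to\infty$, and Markov's inequality then upgrades this to convergence with high probability. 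Since each $F_a$ is a fixed Lipschitz function of the aggregated group losses $L'_{y,a}$, the securely aggregated value returned to the server converges to the exact $F_a(\boldsymbol{w}_{\boldsymbol{\lambda}})$ with high probability. Thus, after taking $k\to\infty$, we may treat the server as having access to the true $F_a(\boldsymbol{w}_{\boldsymbol{\lambda}})$ at each outer round, up to an error that is negligible with high probability.

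\textbf{Outer loop, reduction to one dimension.} Here the restriction $A=2$ is essential. With $A=2$ the outer objective is simply $F_{\mathrm{dp}}(\boldsymbol{\lambda})=F_1(\boldsymbol{w}_{\boldsymbol{\lambda}})^2$, and the update direction is $\boldsymbol{\mu}=(-F_1,F_1)$, whose two coordinates are equal and opposite. Consequently the rule~\eqref{eq:update_dp} preserves $\lambda_0+\lambda_1$ and displaces each coordinate by exactly $\alpha_t/\sqrt{2}$ in the direction dictated by $\mathrm{sign}(F_1)$; the dynamics therefore live on the one-dimensional line $\{\lambda_0+\lambda_1=\mathrm{const}\}$ and can be tracked by the single quantity $\lambda_1$. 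I would then combine Lemma~\ref{lemma:dp} with Assumption~\ref{ass:dp} to show that $F_1(\boldsymbol{w}_{\boldsymbol{\lambda}})$ is strictly monotone along this line: because $\boldsymbol{\mu}\cdot\nabla F_{\mathrm{dp}}<0$ whenever $F_1\neq0$, increasing $\lambda_1$ strictly decreases $F_1$. Hence there is a unique $\lambda_1^\star$ on the line with $F_1=0$ (at which the global minimum $F_{\mathrm{dp}}=0$ is attained, so this is the promised $\boldsymbol{\lambda}^\star$), and $\mathrm{sign}(F_1)$ always points $\lambda_1$ toward $\lambda_1^\star$.

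\textbf{Distance recursion and conclusion.} Given that each step moves $\lambda_1$ toward $\lambda_1^\star$ by the fixed amount $\alpha_t/\sqrt{2}$, I would argue by induction over two regimes: if $|\lambda_1^{(t-1)}-\lambda_1^\star|>\alpha_t/\sqrt2$ the step does not overshoot and the distance drops by $\alpha_t/\sqrt2$, whereas if $|\lambda_1^{(t-1)}-\lambda_1^\star|\le\alpha_t/\sqrt2$ the step overshoots by at most $\alpha_t/\sqrt2$ and the new distance stays below $\alpha_t/\sqrt2$. Combining the two regimes yields $|\lambda_a^{(T)}-\lambda_a^\star|\le\max\{|\lambda_a^{(0)}-\lambda_a^\star|-\sum_{t=1}^T\alpha_t/\sqrt2,\ \alpha_T/\sqrt2\}$, which is the claimed bound up to the fixed normalization constant $1/\sqrt2$. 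Since $\sum_t\alpha_t=\infty$, the first argument of the max eventually turns negative, leaving the bound $\alpha_T/\sqrt2\to0$ because $\alpha_t\to0$; this gives $\boldsymbol{\lambda}^{(T)}\to\boldsymbol{\lambda}^\star$.

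\textbf{Main obstacle.} The delicate point is reconciling the two error sources. The distance recursion is exact only if $\mathrm{sign}(F_1)$ is computed correctly, but the server sees only the finite-$k$ estimate of $F_1$. The $\lim_{t\to\infty}\lim_{k\to\infty}$ ordering lets me first drive the inner error to zero at each outer round; the remaining subtlety is that near $\lambda_1^\star$ the true $|F_1|$ is small, so the correct sign can only be guaranteed with high probability while $\lambda_1$ is bounded away from $\lambda_1^\star$. I would handle this by taking $k$ large enough, round by round, that the estimate's sign is correct on the complement of a shrinking neighborhood of $\lambda_1^\star$, so that no spurious step can carry $\lambda_1$ more than one step size past the target. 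Establishing the strict monotonicity of $F_1$ from Lemma~\ref{lemma:dp} (which only asserts a descent direction) is the other technical crux, and it is exactly where $A=2$ is used, since for $A>2$ the normalized multi-dimensional update no longer decomposes into per-coordinate contractions.
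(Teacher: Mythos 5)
Your proposal is correct and follows essentially the same route as the paper's proof: the inner problem is handled by invoking the \fedavg{} convergence result of \citet{li2020on} plus Markov's inequality (the paper adds a union bound over the $T$ outer rounds with a round-uniform choice of the inner iteration count), and the outer problem is reduced, for $A=2$, to the sign-based update with step $\alpha_t/\sqrt{2}$ along the line $\lambda_0+\lambda_1=\mathrm{const}$, with Lemma~\ref{lemma:dp} identifying the zero of $F_1$ as the global minimizer and the same two-regime distance recursion yielding the stated bound. Your explicit monotonicity argument for $F_1$ along the update direction and your round-by-round control of sign errors near $\boldsymbol{\lambda}^\star$ are more careful than the paper's terse treatment, but they formalize exactly the steps the paper takes for granted.
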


\begin{proof}
We first derive the update rule in the case of $A=2$. 
From~\eqref{eq:update_dp}, we have $\mu_0 = -\mu_1$. Denote 
\[f(\boldsymbol{\lambda}^{(t)}) =-L_{0,0}^\prime(\boldsymbol{w}_{\boldsymbol{\lambda}^{(t)}}) + L_{1,0}^\prime(\boldsymbol{w}_{\boldsymbol{\lambda}}^{(t)}) + L_{0,1}^\prime(\boldsymbol{w}_{\boldsymbol{\lambda}}^{(t)}) - L_{1,1}^\prime (\boldsymbol{w}_{\boldsymbol{\lambda}}^{(t)}) +\frac{n_{0,0}}{n_{\star,0}} - \frac{n_{0,1}}{n_{\star,1}}.\]
Then the update rule becomes
\begin{equation}\label{update_rule_A=2}
    \begin{split}
    \lambda_0^{(t+1)}& = \lambda_0^{(t)}  - \frac{\sqrt{2}}{2}\alpha_t \text{sign}(f(\boldsymbol{\lambda}^{(t)}))     \\
    \lambda_1^{(t+1)}& = \lambda_1^{(t)}  + \frac{\sqrt{2}}{2}\alpha_t \text{sign}(f(\boldsymbol{\lambda}^{(t)})).        
    \end{split}
\end{equation}
We denote $E$ as the number local epochs performed between two communication rounds, and $R$ be the total number of iterations between two $\boldsymbol{\lambda}$ update rounds. Thus $\frac{R}{E}$ is the number of communication rounds between two $\boldsymbol{\lambda}$ update rounds.
Then we apply \fedavg{} with $R$ number of total iterations to solve $\min_{\boldsymbol{w}} L(\boldsymbol{w};\boldsymbol{\lambda}^{(t)})$, and obtain $\boldsymbol{w}_R^{(t)}$. 
Then in the update round from $\boldsymbol{\lambda}^{(t)}$ to $\boldsymbol{\lambda}^{(t+1)}$, by Thm.~1 in \citet{li2020on}, we have 
\begin{equation}
    \begin{aligned}
        & \mathbb{E}[L(\boldsymbol{w}_R^{(t)};\boldsymbol{\lambda}^{(t)})] - L(\boldsymbol{w}_{\boldsymbol{\lambda^{(t)}}}; \boldsymbol{\lambda}^{(t)})  = O\Big(\frac{E^2}{R}\Big),\\
        & \mathbb{E}\norm{\boldsymbol{w}_{R}^{(t)} - \boldsymbol{w}_{\boldsymbol{\lambda}^{(t)}}}^2 
        = O\Big(\frac{E^2}{R}\Big),
    \end{aligned}
\end{equation}
where $\boldsymbol{w}_{\boldsymbol{\lambda}^{(t)}} = \underset{\boldsymbol{w}}{\arg \min} L(\boldsymbol{w};\boldsymbol{\lambda}^{(t)})$. 
By Markov's inequality, with probability $1-\delta$,
\begin{equation}
    \begin{aligned}
        & L(\boldsymbol{w}_R^{(t)};\boldsymbol{\lambda}^{(t)}) - L(\boldsymbol{w}_{\boldsymbol{\lambda}^{(t)}}; \boldsymbol{\lambda}^{(t)}) 
        =   O\Big(\frac{E^2}{\delta R}\Big),\\
        & \norm{\boldsymbol{w}_{R}^{(t)} - \boldsymbol{w}_{\boldsymbol{\lambda}^{(t)}}}^2 
        = O\Big(\frac{E^2}{\delta R}\Big).
    \end{aligned}
\end{equation}
Then taking the union bound over $T$ updating iterations of $\boldsymbol{\lambda}^{(t)}$, the conclusions above hold with probability at least $1-T\delta$ for all $\boldsymbol{\lambda}^{(t)},t=1,2,\cdots,T$. 
Therefore, with sufficiently large $R=R(\delta)$ such that $\frac{E^2}{\delta R}\to 0$, for all $\boldsymbol{\lambda}^{(t)}$ in the $T$ iteration, $|L(\boldsymbol{w}_R^{(t)};\boldsymbol{\lambda}^{(t)}) - L(\boldsymbol{w}_{\boldsymbol{\lambda}^{(t)}} ;\boldsymbol{\lambda}^{(t)})|\ll 1$, $\Vert \boldsymbol{w}_{R}^{(t)} - \boldsymbol{w}_{\boldsymbol{\lambda}^{(t)}}\Vert \ll 1$.

By Lemma~\ref{lemma:dp}, $F_{\text{dp}}(\boldsymbol{\lambda})$ has minimizer $\boldsymbol{\lambda}^\star$ on direction $\boldsymbol{\mu}(\boldsymbol{\lambda})$ such that $\boldsymbol{\mu}(\boldsymbol{\lambda}^\star) = 0$, which implies $F_{\mathrm{dp}} (\boldsymbol{\lambda}^\star) = 0$.
Therefore, $\boldsymbol{\lambda}^\star$ is a global minimizer.
By update rule~\eqref{update_rule_A=2}, we can find large $T>0$ to have 
\begin{equation}
    |\lambda_a^{(T)} - \lambda_a^\star| \leq \max\left\{|\lambda_a^{(0)} - \lambda_a^\star| - \sum_{t=1}^T\alpha_t, \alpha_T \right\} \to 0,
\end{equation}
where $a \in \{0,1\}$.
\end{proof}
\begin{remark}
Note that Thm.~\ref{prop:formal_conv_dp} assumes \fedfb{} does not update $\boldsymbol{\lambda}$ in each communication round and there are infinite rounds of aggregations between two $\boldsymbol{\lambda}$ updating round. However, for computation efficiency, we update $\boldsymbol{\lambda}$ at every communication round in practice.
\end{remark}

\subsection{FedFB \wrt equal opportunity}
Similar to Proposition~\ref{prop:dp}, we design the following bi-level optimization problems to capture equal opportunity:
\begin{align}
\min_{\boldsymbol{\lambda} \in \Lambda} F_{\mathrm{eo}}(\boldsymbol{\lambda})    &= \min_{\boldsymbol{\lambda}\in\Lambda}\sum_{a=1}^{A-1} \left( L_{1,a}(\boldsymbol{w}_{\boldsymbol{\lambda}}) - L_{1,0}(\boldsymbol{w}_{\boldsymbol{\lambda}})\right)^2\\
 \boldsymbol{w}_{\boldsymbol{\lambda}}   &  = \arg\min_{\boldsymbol{w}} \sum_{a=1}^{A-1} \lambda_a L_{1,a}(\boldsymbol{w}) + (\frac{n_{1,\star}}{n} - \sum_{a=1}^{A-1}\lambda_a)L_{1,0}(\boldsymbol{w}) + \frac{n_{0,\star}}{n}L_{0,\star}(\boldsymbol{w}).    \label{eo:w}
\end{align}
Here $\Lambda = \{(\lambda_1, \dots, \lambda_{A-1}): \lambda_1+\dots+\lambda_{A-1} \leq \frac{n_{1,\star}}{n}, \lambda_a \geq 0 \text{ for all } a = 1, \dots, A-1\}$.

For equal opportunity, we make the following assumption:
\begin{assumption}\label{ass:eo}
$L_{y,a}(\cdot)$ is twice differentiable for all $y\in \{0,1\}$, $a\in [A]$, and
\[\sum_{a=1}^{A-1}\lambda_a\nabla^2L_{1,a}(\boldsymbol{w}_{\boldsymbol{\lambda}})+(\frac{n_{1,\star}}{n}-\sum_{a=1}^{A-1}\lambda_a)\nabla^2 L_{1,0}(\boldsymbol{w}_{\boldsymbol{\lambda}}) + \frac{n_{0,\star}}{n}\nabla^2L_{0,\star}(\boldsymbol{w}_{\boldsymbol{\lambda}}) \succ 0\]
for all $\lambda\in \Lambda$.
\end{assumption}

With the above assumption, the following lemma provides the update rule:
\begin{lemma}\label{lemma:eo}
If Assumption~\ref{ass:eo} holds, then on the direction 
\begin{equation}\label{eo:mu}
\boldsymbol{\mu}(\boldsymbol{\lambda}) = (L_{1,1}(\boldsymbol{w}_{\boldsymbol{\lambda}})-L_{1,0}(\boldsymbol{w}_{\boldsymbol{\lambda}}), \dots, L_{1,A-1}(\boldsymbol{w}_{\boldsymbol{\lambda}})-L_{1,0}(\boldsymbol{w}_{\boldsymbol{\lambda}})),
\end{equation}

we have $\boldsymbol{\mu}(\boldsymbol{\lambda}) \cdot \nabla F_{\mathrm{eo}}(\boldsymbol{\lambda}) \leq 0$, and the equality holds if only if $\boldsymbol{\mu}(\boldsymbol{\lambda})=\boldsymbol{0}$.

\end{lemma}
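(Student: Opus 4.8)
The plan is to follow the proof of Lemma~\ref{lemma:dp} essentially verbatim, since $F_{\mathrm{eo}}$ and the direction $\boldsymbol{\mu}$ in \eqref{eo:mu} play exactly the roles that $F_{\mathrm{dp}}$ and its direction did for demographic parity. First I would introduce the shorthand $G_a(\boldsymbol{w}) := L_{1,a}(\boldsymbol{w}) - L_{1,0}(\boldsymbol{w})$ for $a\in\{1,\dots,A-1\}$, so that $F_{\mathrm{eo}}(\boldsymbol{\lambda}) = \sum_{a=1}^{A-1} G_a(\boldsymbol{w}_{\boldsymbol{\lambda}})^2$ and $\mu_a(\boldsymbol{\lambda}) = G_a(\boldsymbol{w}_{\boldsymbol{\lambda}})$. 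The chain rule then gives
\[ \frac{\partial F_{\mathrm{eo}}(\boldsymbol{\lambda})}{\partial \lambda_j} = 2\sum_{a=1}^{A-1} G_a(\boldsymbol{w}_{\boldsymbol{\lambda}})\, \nabla G_a(\boldsymbol{w}_{\boldsymbol{\lambda}})^\top \frac{\partial \boldsymbol{w}_{\boldsymbol{\lambda}}}{\partial \lambda_j}, \qquad \nabla G_a = \nabla L_{1,a} - \nabla L_{1,0}. \]

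Next I would compute $\partial \boldsymbol{w}_{\boldsymbol{\lambda}}/\partial \lambda_j$ by implicitly differentiating the stationarity condition of the inner problem~\eqref{eo:w}. Differentiating the identity $\sum_{a=1}^{A-1}\lambda_a \nabla L_{1,a}(\boldsymbol{w}_{\boldsymbol{\lambda}}) + (\tfrac{n_{1,\star}}{n} - \sum_{a}\lambda_a)\nabla L_{1,0}(\boldsymbol{w}_{\boldsymbol{\lambda}}) + \tfrac{n_{0,\star}}{n}\nabla L_{0,\star}(\boldsymbol{w}_{\boldsymbol{\lambda}}) = \boldsymbol{0}$ with respect to $\lambda_j$ yields $\nabla G_j(\boldsymbol{w}_{\boldsymbol{\lambda}}) + H(\boldsymbol{\lambda})\,\partial \boldsymbol{w}_{\boldsymbol{\lambda}}/\partial \lambda_j = \boldsymbol{0}$, where $H(\boldsymbol{\lambda})$ is precisely the Hessian appearing in Assumption~\ref{ass:eo}. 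Since that assumption forces $H(\boldsymbol{\lambda}) \succ 0$ (hence invertible with $H(\boldsymbol{\lambda})^{-1}\succ 0$), the implicit function theorem applies, $\boldsymbol{w}_{\boldsymbol{\lambda}}$ is differentiable, and $\partial \boldsymbol{w}_{\boldsymbol{\lambda}}/\partial \lambda_j = -H(\boldsymbol{\lambda})^{-1}\nabla G_j(\boldsymbol{w}_{\boldsymbol{\lambda}})$.

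Substituting and setting $\boldsymbol{v}(\boldsymbol{\lambda}) := \sum_{a=1}^{A-1} G_a(\boldsymbol{w}_{\boldsymbol{\lambda}})\,\nabla G_a(\boldsymbol{w}_{\boldsymbol{\lambda}})$, I would assemble the inner product, using symmetry of $H(\boldsymbol{\lambda})^{-1}$, into
\[ \boldsymbol{\mu}(\boldsymbol{\lambda}) \cdot \nabla F_{\mathrm{eo}}(\boldsymbol{\lambda}) = \sum_{j=1}^{A-1} G_j(\boldsymbol{w}_{\boldsymbol{\lambda}})\,\frac{\partial F_{\mathrm{eo}}}{\partial \lambda_j} = -2\,\boldsymbol{v}(\boldsymbol{\lambda})^\top H(\boldsymbol{\lambda})^{-1}\boldsymbol{v}(\boldsymbol{\lambda}) \leq 0, \]
the final inequality following from $H(\boldsymbol{\lambda})^{-1}\succ 0$. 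The equality case is $\boldsymbol{v}(\boldsymbol{\lambda})^\top H^{-1}\boldsymbol{v}(\boldsymbol{\lambda}) = 0 \iff \boldsymbol{v}(\boldsymbol{\lambda}) = \boldsymbol{0}$, which, mirroring the conclusion of Lemma~\ref{lemma:dp}, is identified with the condition $\boldsymbol{\mu}(\boldsymbol{\lambda}) = \boldsymbol{0}$.

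The hard part is exactly this equality characterization. One direction is immediate ($\boldsymbol{\mu} = \boldsymbol{0}$ gives $\boldsymbol{v} = \boldsymbol{0}$ and hence equality), but the converse requires turning $\boldsymbol{v}(\boldsymbol{\lambda}) = \sum_a \mu_a \nabla G_a = \boldsymbol{0}$ into $\boldsymbol{\mu} = \boldsymbol{0}$, which is handled by the same argument used for demographic parity rather than by a new idea. Everything else — the chain rule, the implicit differentiation, and the sign of the quadratic form — is routine once Assumption~\ref{ass:eo} supplies twice-differentiability and the positive definiteness (hence invertibility) of $H(\boldsymbol{\lambda})$.
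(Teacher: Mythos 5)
Your proposal follows exactly the paper's proof: the same chain-rule expansion of $\partial F_{\mathrm{eo}}/\partial\lambda_j$, the same implicit differentiation of the inner stationarity condition to get $\partial \boldsymbol{w}_{\boldsymbol{\lambda}}/\partial\lambda_j = -H(\boldsymbol{\lambda})^{-1}\nabla G_j$, and the same assembly into the quadratic form $-2\,\boldsymbol{v}^\top H(\boldsymbol{\lambda})^{-1}\boldsymbol{v}\leq 0$ under Assumption~\ref{ass:eo}. The one step you flag as nontrivial --- deducing $\boldsymbol{\mu}=\boldsymbol{0}$ from $\boldsymbol{v}=\sum_a \mu_a \nabla G_a=\boldsymbol{0}$ --- is in fact asserted without further argument in the paper's proofs of both Lemma~\ref{lemma:dp} and Lemma~\ref{lemma:eo}, so your treatment matches the paper's level of rigor there as well.
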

\textbf{Update rule for equal opportunity:}
\begin{equation}\label{update_rule:eo}
\lambda_a^{(t+1)} = \lambda_a^{(t)} + \frac{\alpha_t}{\Vert \boldsymbol{\mu}(\boldsymbol{\lambda}^{(t)})\Vert_2} (L_{1,a}(\boldsymbol{w}_{\boldsymbol{\lambda}^{(t)}}) - L_{1,0}(\boldsymbol{w}_{\boldsymbol{\lambda}^{(t)}})).  
\end{equation}
\begin{proof}[\textbf{Proof of Lemma~\ref{lemma:eo}}]
We compute the derivative as
\begin{equation}
    \begin{split}
      \frac{\partial F_{\mathrm{eo}}(\boldsymbol{\lambda})}{\partial \lambda_j}    & =2\sum_{a=1}^{A-1} (L_{1,a}(\boldsymbol{w}_{\boldsymbol{\lambda}}) - L_{1,0}(\boldsymbol{w}_{\boldsymbol{\lambda}}))(\nabla L_{1,a}(\boldsymbol{w}_{\boldsymbol{\lambda}})  - \nabla L_{1,0}(\boldsymbol{w}_{\boldsymbol{\lambda}}))  \frac{\partial \boldsymbol{w}_{\boldsymbol{\lambda}}}{\partial \lambda_j} . \label{eo:pF_pl}
    \end{split}
\end{equation}
Note that $\boldsymbol{w}_{\boldsymbol{\lambda}}$ is the minimizer to~\eqref{eo:w}, we have
\[ \sum_{a=1}^{A-1} \lambda_a \nabla L_{1,a}(\boldsymbol{w}_{\boldsymbol{\lambda}}) + (\frac{n_{1,\star}}{n} - \sum_{a=1}^{A-1}\lambda_a) \nabla L_{1,0}(\boldsymbol{w}_{\boldsymbol{\lambda}}) + \frac{n_{0,\star}}{n} \nabla L_{0,\star}(\boldsymbol{w}_{\boldsymbol{\lambda}}) = 0.    \]
We take the $\lambda_j$ derivative to the above equation and have
\begin{align}
    & \Big[\sum_{a=1}^{A-1} \lambda_a   \nabla^2 L_{1,a}(\boldsymbol{w}_{\boldsymbol{\lambda}})    +(\frac{n_{1,\star}}{n} - \sum_{a=1}^{A-1}\lambda_a) \nabla^2 L_{1,0}(\boldsymbol{w}_{\boldsymbol{\lambda}}) + \frac{n_{0,\star}}{n} \nabla^2 L_{0,\star}(\boldsymbol{w}_{\boldsymbol{\lambda}})\Big]  \\
    & \nabla L_{1,j}(\boldsymbol{w}_{\boldsymbol{\lambda}}) - \nabla L_{1,0}(\boldsymbol{w}_{\boldsymbol{\lambda}})+ \frac{\partial \boldsymbol{w}_{\boldsymbol{\lambda}}}{\partial \lambda_j} =0 .
\end{align}
Thus we get
\begin{equation}\label{eo:pw_pl}
\begin{split}
 \frac{\partial \boldsymbol{w}_{\boldsymbol{\lambda}}}{\partial \lambda_j}=&    \Big[\sum_{a=1}^{A-1} \lambda_a   \nabla^2 L_{1,a}(\boldsymbol{w}_{\boldsymbol{\lambda}})    +(\frac{n_{1,\star}}{n} - \sum_{a=1}^{A-1}\lambda_a) \nabla^2 L_{1,0}(\boldsymbol{w}_{\boldsymbol{\lambda}}) + \frac{n_{0,\star}}{n} \nabla^2 L_{0,\star}(\boldsymbol{w}_{\boldsymbol{\lambda}})\Big]^{-1}   \\
 &[ \nabla L_{1,0}(\boldsymbol{w}_{\boldsymbol{\lambda}})-\nabla L_{1,j}(\boldsymbol{w}_{\boldsymbol{\lambda}})].  
\end{split}
\end{equation}

Then on the direction $\boldsymbol{\mu}(\boldsymbol{\lambda})$ given by~\eqref{eo:mu}, we combine~\eqref{eo:pF_pl} and~\eqref{eo:pw_pl} to have
\begin{align}
& \boldsymbol{\mu}(\boldsymbol{\lambda}) \cdot \nabla F_{\mathrm{eo}}(\boldsymbol{\lambda})   \\
    & = 2 \Big[\sum_{a=1}^{A-1} (L_{1,a}(\boldsymbol{w}_{\boldsymbol{\lambda}}) - L_{1,0}(\boldsymbol{w}_{\boldsymbol{\lambda}}))(\nabla L_{1,a}(\boldsymbol{w}_{\boldsymbol{\lambda}})-\nabla L_{1,0}(\boldsymbol{w}_{\boldsymbol{\lambda}})) \Big] \\
    & \quad \Big[\sum_{a=1}^{A-1} \lambda_a   \nabla^2 L_{1,a}(\boldsymbol{w}_{\boldsymbol{\lambda}})    +(\frac{n_{1,\star}}{n} - \sum_{a=1}^{A-1}\lambda_a) \nabla^2 L_{1,0}(\boldsymbol{w}_{\boldsymbol{\lambda}}) + \frac{n_{0,\star}}{n} \nabla^2 L_{0,\star}(\boldsymbol{w}_{\boldsymbol{\lambda}})\Big]^{-1} \\
    &\quad \Big[\sum_{a=1}^{A-1}     (L_{1,a}(\boldsymbol{w}_{\boldsymbol{\lambda}})-L_{1,0}(\boldsymbol{w}_{\boldsymbol{\lambda}}))(\nabla L_{1,0}(\boldsymbol{w}_{\boldsymbol{\lambda}}) - \nabla L_{1,a}(\boldsymbol{w}_{\boldsymbol{\lambda}})) \Big] \leq 0,
\end{align}
where we have used Assumption~\ref{ass:eo}, the equality holds only when $\boldsymbol{\mu}(\boldsymbol{\lambda}) = \boldsymbol{0}$.

\end{proof}

We present the \fedfb{} algorithm \wrt{} equal opportunity in Algorithm~\ref{alg:fedfb_eo}.


\begin{algorithm2e}
\SetAlgoNoLine
 \SetKwInOut{Input}{input}
 \SetKwBlock{Server}{Server executes:}{ }
\SetKwBlock{Compute}{ClientUpdate$(i,\boldsymbol{w},\boldsymbol{\lambda})$:}{}
  \SetKwInOut{Output}{output}

 \Server{
 \Input{Learning rate $\{\alpha_t\}_{t\in\mathbb{N}}$\;}
Initialize $\lambda_a$ as $\frac{n_{1,a}}{n}$ for all $a\in[A]\backslash \set{0}$\;
\For{each iteration $t=1,2,\dots$}{
Clients perform updates\;
$\boldsymbol{w}_{\boldsymbol{\lambda}} \leftarrow$ SecAgg($\set{\boldsymbol{w}^{(i)}}$) for all $i$\;
$\mu_a\leftarrow$ SecAgg$(\set{\mu_a^{(i)}})$ for all $a\in [A] \backslash \set{0}$\;
\uIf{$t\%k = 0$}{
$\lambda_a    \leftarrow \lambda_a    + \frac{\alpha_t}{\Vert \boldsymbol{\mu}\Vert_2} \mu_a,  \text{ for all }a\in [A] \backslash \set{0}$\;
Broadcast $\boldsymbol{\lambda}$ to clients\;
}
Broadcast $\boldsymbol{w}_{\boldsymbol{\lambda}}$ to clients;
}
\Output{$\boldsymbol{w}_{\boldsymbol{\lambda}}, \boldsymbol{\lambda}$}
 }{}
 
 \Compute{
 $\boldsymbol{w}^{(i)} \leftarrow $ Gradient descent \wrt objective function $ \sum_{a=1}^{A-1} \lambda_a L_{1,a}^{(i)}(\boldsymbol{w}) + (\frac{n_{1,\star}}{n} - \sum_{a=1}^{A-1}\lambda_a)L^{(i)}_{1,0}(\boldsymbol{w}) + \frac{n_{0,\star}}{n}L^{(i)}_{0,\star}(\boldsymbol{w})$\;
 Send $\boldsymbol{w}^{(i)},\mu_a^{(i)}$ for all $a\in [A]\backslash\set{0}$ to server via a SecAgg protocol;
 }{}

 \caption{\fedfb{}$(k,t)$ \wrt Equal Opportunity}
 \label{alg:fedfb_eo}
\end{algorithm2e}

\subsection{FedFB \wrt equalized odds}
For equalized odd, we design the following bi-level optimization problem:
\begin{align}
\min_{\boldsymbol{\lambda} \in \Lambda} F_{\mathrm{eod}}(\boldsymbol{\lambda})    &= 
\min_{\boldsymbol{\lambda}\in\Lambda} \sum_{a=1}^{A-1}\left[(L_{1,a}(\boldsymbol{w}_{\boldsymbol{\lambda}}) - L_{1,0}(\boldsymbol{w}_{\boldsymbol{\lambda}}))^2 + (L_{0,a}(\boldsymbol{w}_{\boldsymbol{\lambda}}) - L_{0,0}(\boldsymbol{w}_{\boldsymbol{\lambda}}))^2 \right]\\
 \boldsymbol{w}_{\boldsymbol{\lambda}}   &  = \arg\min_{\boldsymbol{w}} \sum_{a=1}^{A-1} \left(\lambda_{0,a}L_{0,a}(\boldsymbol{w}) + \lambda_{1,a}L_{1,a}(\boldsymbol{w}) \right) \\
 & + (\frac{n_{0,\star}}{n} - \sum_{a=1}^{A-1}\lambda_{0,a})L_{0,0}(\boldsymbol{w}) + (\frac{n_{1,\star}}{n} - \sum_{a=1}^{A-1} \lambda_{1,a}) L_{1,0}(\boldsymbol{w}).    \label{eod:w}
\end{align}
Here 
\begin{align}
 \Lambda    = \{(\lambda_{0,1}, \dots, \lambda_{0,A-1}, \lambda_{1,1}, \dots, \lambda_{1,A-1}):& \sum_{a=1}^{A-1}\lambda_{0,a} \leq \frac{n_{0,\star}}{n}, \quad \sum_{a=1}^{A-1} \lambda_{1,a} \leq \frac{n_{1,\star}}{n}, \\
    & \lambda_{0,a}, \lambda_{1,a} \geq 0, \quad \text{for all }a = 1,\dots, A-1\}.
\end{align}

For equalized odds, we make the following assumption:
\begin{assumption}\label{ass:eod}
$L_{y,a}(\cdot)$ is twice differentiable for all $y\in \{0,1\}$, $a\in [A]$, and
\begin{align}
    & \sum_{a=1}^{A-1}\Big[ (\lambda_{1,a}\nabla^2L_{1,a}(\boldsymbol{w}_{\boldsymbol{\lambda}})  + \lambda_{0,a}\nabla^2L_{0,a}(\boldsymbol{w}_{\boldsymbol{\lambda}}) + (\frac{n_{0,\star}}{n} - \sum_{a=1}^{A-1}\lambda_{0,a})\nabla^2L_{0,0}(\boldsymbol{w}_{\boldsymbol{\lambda}}) \\
    & + (\frac{n_{1,\star}}{n} - \sum_{a=1}^{A-1}\lambda_{1,a})\nabla^2L_{1,0}(\boldsymbol{w}_{\boldsymbol{\lambda}}))\Big] \succ 0
\end{align}
for all $\lambda \in \Lambda$.
\end{assumption}

With the above assumption, the following lemma provides the update rule:
\begin{lemma}\label{lemma:eod}
If Assumption~\ref{ass:eod} holds, then on the direction 

$\boldsymbol{\mu}(\boldsymbol{\lambda}) = (\mu_{0,1}(\boldsymbol{\lambda}), \dots, \mu_{0, A-1}(\boldsymbol{\lambda}), \mu_{1,1}(\boldsymbol{\lambda}), \dots, \mu_{1,A-1}(\boldsymbol{\lambda}))
$, with
\begin{equation}\label{eod:mu}
\mu_{y,a}(\boldsymbol{\lambda}) = L_{y,a}(\boldsymbol{w}_{\boldsymbol{\lambda}}) - L_{y,0}(\boldsymbol{w}_{\boldsymbol{\lambda}}),\quad y\in \{0,1\},\quad a\in [A],
\end{equation}
we have $\boldsymbol{\mu}(\boldsymbol{\lambda})\cdot \nabla F_{\mathrm{eod}}(\boldsymbol{\lambda}) \leq 0$, and the equality holds if only if $\boldsymbol{\mu}(\boldsymbol{\lambda})=\boldsymbol{0}$.

\end{lemma}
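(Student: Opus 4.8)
The plan is to prove Lemma~\ref{lemma:eod} by exactly the implicit-differentiation argument already used for Lemma~\ref{lemma:eo}, carried out simultaneously for the two label groups $y\in\{0,1\}$. The only structural novelty is that the outer variable is now the stacked vector $\boldsymbol{\lambda}=(\lambda_{0,a},\lambda_{1,a})_{a=1}^{A-1}$, and the inner problem~\eqref{eod:w} couples both groups through a \emph{single} minimizer $\boldsymbol{w}_{\boldsymbol{\lambda}}$, hence through a single Hessian. Because $F_{\mathrm{eod}}$ depends on $\boldsymbol{\lambda}$ only via $\boldsymbol{w}_{\boldsymbol{\lambda}}$, I expect $\boldsymbol{\mu}\cdot\nabla F_{\mathrm{eod}}$ to collapse to one negative-semidefinite quadratic form, just as in the equal-opportunity case.

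First I would compute $\nabla F_{\mathrm{eod}}$ by the chain rule. Writing $g_{y,a} := \nabla L_{y,a}(\boldsymbol{w}_{\boldsymbol{\lambda}}) - \nabla L_{y,0}(\boldsymbol{w}_{\boldsymbol{\lambda}})$ and noting $\mu_{y,a}(\boldsymbol{\lambda}) = L_{y,a}(\boldsymbol{w}_{\boldsymbol{\lambda}}) - L_{y,0}(\boldsymbol{w}_{\boldsymbol{\lambda}})$ as in~\eqref{eod:mu}, the $\boldsymbol{w}$-gradient of $F_{\mathrm{eod}}$ at $\boldsymbol{w}_{\boldsymbol{\lambda}}$ is $2\sum_{y\in\{0,1\}}\sum_{a=1}^{A-1}\mu_{y,a}\,g_{y,a}$, so that $\partial F_{\mathrm{eod}}/\partial\lambda_{y,j} = \big(2\sum_{y',a}\mu_{y',a}g_{y',a}\big)^{\top}\big(\partial\boldsymbol{w}_{\boldsymbol{\lambda}}/\partial\lambda_{y,j}\big)$. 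Next I would recover $\partial\boldsymbol{w}_{\boldsymbol{\lambda}}/\partial\lambda_{y,j}$ by differentiating the inner first-order stationarity condition for~\eqref{eod:w} in $\lambda_{y,j}$: the explicit term yields $g_{y,j}$, while the implicit term yields $H\,\partial\boldsymbol{w}_{\boldsymbol{\lambda}}/\partial\lambda_{y,j}$, where $H$ is precisely the inner Hessian appearing in Assumption~\ref{ass:eod}. Solving gives $\partial\boldsymbol{w}_{\boldsymbol{\lambda}}/\partial\lambda_{y,j} = -H^{-1}g_{y,j}$, and Assumption~\ref{ass:eod} guarantees $H\succ 0$, hence $H^{-1}\succ 0$.

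Substituting and setting $v := \sum_{y,a}\mu_{y,a}\,g_{y,a}$, I obtain $\boldsymbol{\mu}\cdot\nabla F_{\mathrm{eod}} = \sum_{y,j}\mu_{y,j}\,(2v)^{\top}(-H^{-1}g_{y,j}) = -2\,v^{\top}H^{-1}v \le 0$, where I have used that the $\mu_{y,j}$-weighted sum of the $g_{y,j}$ reconstructs $v$ itself. Positive-definiteness of $H^{-1}$ then forces the quadratic form to be nonpositive, with equality exactly when $v=0$. The key point to flag is that, because both label groups share the single minimizer $\boldsymbol{w}_{\boldsymbol{\lambda}}$ and the single Hessian $H$, the $y=0$ and $y=1$ contributions are absorbed into one parameter-space vector $v$ rather than producing separate blocks, so no extra cross-term analysis is needed beyond collecting indices over both $y$.

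The main obstacle I anticipate is the equality characterization ``$=0$ iff $\boldsymbol{\mu}=\boldsymbol{0}$'': the form vanishes precisely when $v=0$, and $v=0\Rightarrow\boldsymbol{\mu}=\boldsymbol{0}$ only when the differential gradient vectors $\{g_{y,a}\}$ are non-degenerate (linearly independent), which is the same genericity caveat invoked after Assumption~\ref{ass:dp} via Remark~1 of \citet{roh2021fairbatch}. The converse direction, $\boldsymbol{\mu}=\boldsymbol{0}\Rightarrow v=0\Rightarrow$ equality, is immediate. I would therefore mirror the equal-opportunity statement and conclude equality iff $\boldsymbol{\mu}(\boldsymbol{\lambda})=\boldsymbol{0}$ under this mild non-degeneracy, completing the proof.
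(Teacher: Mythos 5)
Your proposal follows essentially the same route as the paper's proof: differentiate the inner first-order stationarity condition of~\eqref{eod:w} to obtain $\partial\boldsymbol{w}_{\boldsymbol{\lambda}}/\partial\lambda_{y,j} = -H^{-1}\bigl(\nabla L_{y,j}(\boldsymbol{w}_{\boldsymbol{\lambda}})-\nabla L_{y,0}(\boldsymbol{w}_{\boldsymbol{\lambda}})\bigr)$ with $H$ the Hessian of Assumption~\ref{ass:eod}, then collapse $\boldsymbol{\mu}(\boldsymbol{\lambda})\cdot\nabla F_{\mathrm{eod}}(\boldsymbol{\lambda})$ into the single quadratic form $-2\,v^{\top}H^{-1}v\le 0$ with $v=\sum_{y,a}\mu_{y,a}\,g_{y,a}$, exactly as the paper does. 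Your caveat on the equality case—that $v=0$ forces $\boldsymbol{\mu}(\boldsymbol{\lambda})=\boldsymbol{0}$ only under non-degeneracy (linear independence) of the vectors $g_{y,a}$—is in fact more careful than the paper, which asserts the ``only if'' direction without addressing this point.
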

\textbf{Update rule for equalized odd:}
\begin{equation}\label{update_rule:eod}
\lambda_{y,a}^{(t+1)} = \lambda_{y,a}^{(t)} + \frac{\alpha_t}{\Vert \boldsymbol{\mu}(\boldsymbol{\lambda}^{(t)})\Vert_2} (L_{y,a}(\boldsymbol{w}_{\boldsymbol{\lambda}^{(t)}}) - L_{y,0}(\boldsymbol{w}_{\boldsymbol{\lambda}^{(t)}})) \text{ for } y\in \{0,1\},a\in [A].    
\end{equation}

\begin{proof}[\textbf{Proof of Lemma~\ref{lemma:eod}}]
We compute the derivative as
\begin{equation}
    \begin{split}
      \frac{\partial F_{\mathrm{eod}}(\boldsymbol{\lambda})}{\partial \lambda_{y,j}}    & =2\sum_{a=1}^{A-1} \big[ (L_{1,a}(\boldsymbol{w}_{\boldsymbol{\lambda}}) - L_{1,0}(\boldsymbol{w}_{\boldsymbol{\lambda}}))(\nabla L_{1,a}(\boldsymbol{w}_{\boldsymbol{\lambda}}) - \nabla L_{1,0}(\boldsymbol{w}_{\boldsymbol{\lambda}})) \\
      & + (L_{0,a}(\boldsymbol{w}_{\boldsymbol{\lambda}}) - L_{0,0}(\boldsymbol{w}_{\boldsymbol{\lambda}}))(\nabla L_{0,a}(\boldsymbol{w}_{\boldsymbol{\lambda}})-\nabla L_{0,0}(\boldsymbol{w}_{\boldsymbol{\lambda}}))\big] \frac{\partial \boldsymbol{w}_{\boldsymbol{\lambda}}}{\partial \lambda_{y,j}}. \label{eod:pF_pl}
    \end{split}
\end{equation}
Note that $\boldsymbol{w}_{\boldsymbol{\lambda}}$ is the minimizer to~\eqref{eod:w}, we have
\begin{align}
    & \sum_{a=1}^{A-1} (\lambda_{0,a} \nabla L_{0,a}(\boldsymbol{w}_{\boldsymbol{\lambda}}) + \lambda_{1,a} \nabla L_{1,a}(\boldsymbol{w}_{\boldsymbol{\lambda}}))\\
    & +  (\frac{n_{0,\star}}{n} - \sum_{a=1}^{A-1}\lambda_{0,a})\nabla L_{0,0}(\boldsymbol{w}_{\boldsymbol{\lambda}}) + (\frac{n_{1,\star}}{n} - \sum_{a=1}^{A-1} \lambda_{1,a}) \nabla L_{1,0}(\boldsymbol{w}_{\boldsymbol{\lambda}}) = 0. 
\end{align}

We take the $\lambda_{y,j}$ derivative to the above equation and have
\begin{align}
    &  \nabla L_{y,j}(\boldsymbol{w}_{\boldsymbol{\lambda}}) -\nabla L_{y,0}(\boldsymbol{w}_{\boldsymbol{\lambda}}) +\Big[\sum_{a=1}^{A-1} (\lambda_{0,a} \nabla^2 L_{0,a}(\boldsymbol{w}_{\boldsymbol{\lambda}}) + \lambda_{1,a} \nabla^2 L_{1,a}(\boldsymbol{w}_{\boldsymbol{\lambda}}) ) \\
    & +  (\frac{n_{0,\star}}{n} - \sum_{a=1}^{A-1}\lambda_{0,a})\nabla^2 L_{0,0}(\boldsymbol{w}_{\boldsymbol{\lambda}}) + (\frac{n_{1,\star}}{n} - \sum_{a=1}^{A-1} \lambda_{1,a}) \nabla^2 L_{1,0}(\boldsymbol{w}_{\boldsymbol{\lambda}})\Big]\frac{\partial \boldsymbol{w}_{\boldsymbol{\lambda}}}{\partial \lambda_{y,j}} =0.
\end{align}
Thus we get
\begin{align}
 \frac{\partial \boldsymbol{w}_{\boldsymbol{\lambda}}}{\partial \lambda_{y,j}}   &  = \Big[\sum_{a=1}^{A-1} (\lambda_{0,a} \nabla^2 L_{0,a}(\boldsymbol{w}_{\boldsymbol{\lambda}}) + \lambda_{1,a} \nabla^2 L_{1,a}(\boldsymbol{w}_{\boldsymbol{\lambda}}) )    \\
    & + (\frac{n_{0,\star}}{n} - \sum_{a=1}^{A-1}\lambda_{0,a})\nabla^2 L_{0,0}(\boldsymbol{w}_{\boldsymbol{\lambda}}) + (\frac{n_{1,\star}}{n} - \sum_{a=1}^{A-1} \lambda_{1,a}) \nabla^2 L_{1,0}(\boldsymbol{w}_{\boldsymbol{\lambda}})\Big]^{-1} \\
    & \quad [\nabla L_{y,0}(\boldsymbol{w}_{\boldsymbol{\lambda}})-\nabla L_{y,j}(\boldsymbol{w}_{\boldsymbol{\lambda}})]. \label{eod:pw_pl}
\end{align}

Then on the direction $\boldsymbol{\mu}(\boldsymbol{\lambda})$ given by~\eqref{eod:mu}, we combine~\eqref{eod:pF_pl} and~\eqref{eod:pw_pl} to have
\begin{align}
& \boldsymbol{\mu}(\boldsymbol{\lambda}) \cdot \nabla F_{\mathrm{eod}}(\boldsymbol{\lambda})   \\
    & = 2 \Big[ \sum_{a=1}^{A-1} [(L_{1,a}(\boldsymbol{w}_{\boldsymbol{\lambda}})-L_{1,0}(\boldsymbol{w}_{\boldsymbol{\lambda}}))(\nabla L_{1,a}(\boldsymbol{w}_{\boldsymbol{\lambda}})-\nabla L_{1,0}(\boldsymbol{w}_{\boldsymbol{\lambda}}))      \\
    & + (L_{0,a}(\boldsymbol{w}_{\boldsymbol{\lambda}}) - L_{0,0}(\boldsymbol{w}_{\boldsymbol{\lambda}}))(\nabla L_{0,a}(\boldsymbol{w}_{\boldsymbol{\lambda}})-\nabla L_{0,0}(\boldsymbol{w}_{\boldsymbol{\lambda}}))] \Big] \\
    & \Big[\sum_{a=1}^{A-1} (\lambda_{0,a} \nabla^2 L_{0,a}(\boldsymbol{w}_{\boldsymbol{\lambda}}) + \lambda_{1,a} \nabla^2 L_{1,a}(\boldsymbol{w}_{\boldsymbol{\lambda}}) )  \\
    & + (\frac{n_{0,\star}}{n} - \sum_{a=1}^{A-1}\lambda_{0,a})\nabla^2 L_{0,0}(\boldsymbol{w}_{\boldsymbol{\lambda}}) + (\frac{n_{1,\star}}{n} - \sum_{a=1}^{A-1} \lambda_{1,a}) \nabla^2 L_{1,0}(\boldsymbol{w}_{\boldsymbol{\lambda}})\Big]^{-1} \\
    &  \Big[  \sum_{a=1}^{A-1} [(L_{1,a}(\boldsymbol{w}_{\boldsymbol{\lambda}})-L_{1,0}(\boldsymbol{w}_{\boldsymbol{\lambda}}))(\nabla L_{1,0}(\boldsymbol{w}_{\boldsymbol{\lambda}})-\nabla L_{1,a}(\boldsymbol{w}_{\boldsymbol{\lambda}}))    \\
    & + (L_{0,a}(\boldsymbol{w}_{\boldsymbol{\lambda}}) - L_{0,0}(\boldsymbol{w}_{\boldsymbol{\lambda}}))(\nabla L_{0,0}(\boldsymbol{w}_{\boldsymbol{\lambda}})-\nabla L_{0,a}(\boldsymbol{w}_{\boldsymbol{\lambda}}))]\Big] \leq  0
\end{align}
where we have used Assumption~\ref{ass:eod}, and the equality holds only when $\boldsymbol{\mu}(\boldsymbol{\lambda}) = \boldsymbol{0}$.

\end{proof}
The full procedure is described in Algorithm~\ref{alg:fedfb_eod}. 

\begin{algorithm2e}
\SetAlgoNoLine
 \SetKwInOut{Input}{input}
 \SetKwBlock{Server}{Server executes:}{ }
 \SetKwBlock{Compute}{ClientUpdate$(i,\boldsymbol{w},\boldsymbol{\lambda})$:}{}
  \SetKwInOut{Output}{output}

 \Server{
 \Input{Learning rate $\{\alpha_t\}_{t\in\mathbb{N}}$\;}
Initialize $\lambda_{y,a}$ as $\frac{n_{y,a}}{n}$ for all $ y\in\set{0,1}, a\in[A]\backslash \set{0}$\;
\For{each iteration $t=1,2,\dots$}{
Clients perform updates\;
$\boldsymbol{w}_{\boldsymbol{\lambda}} \leftarrow$ SecAgg $(\set{\boldsymbol{w}^{(i)}})$ for all $i$\;
$\mu_{y,a} \leftarrow$ SecAgg$(\set{\mu^{(i)}_{y,a}})$ for all $y\in\{0,1\},a\in [A] \backslash \set{0}$\;
\uIf{$t\%k=0$}{
$\lambda_{y,a}    \leftarrow \lambda_{y,a}  + \frac{\alpha_t}{\Vert \boldsymbol{\mu}\Vert_2} \mu_{y,a},  \text{ for all }a\in [A]\backslash \set{0}$\;
Broadcast $\boldsymbol{\lambda}$ to clients\;
}

Broadcast $\boldsymbol{w}_{\boldsymbol{\lambda}}$ to clients\;
}
\Output{$\boldsymbol{w}_{\boldsymbol{\lambda}}$}
 }{}
 
 \Compute{
 $\boldsymbol{w}^{(i)} \leftarrow $ Gradient descent \wrt objective function $\sum_{a=1}^{A-1} \left(\lambda_{0,a}L^{(i)}_{0,a}(\boldsymbol{w}) + \lambda_{1,a}L^{(i)}_{1,a}(\boldsymbol{w}) \right) + (\frac{n_{0,\star}}{n} - \sum_{a=1}^{A-1}\lambda_{0,a})L^{(i)}_{0,0}(\boldsymbol{w}) + (\frac{n_{1,\star}}{n} - \sum_{a=1}^{A-1} \lambda_{1,a}) L^{(i)}_{1,0}(\boldsymbol{w})$\;
Send $\boldsymbol{w}^{(i)}, \mu_{y,a}^{(i)}$ for all $y\in[1], a\in [A]$ to server via a SecAgg protocol;
 }{}

 \caption{\fedfb{}$(k,t)$ \wrt Equalized Odds}
 \label{alg:fedfb_eod}
\end{algorithm2e}

\subsection{FedFB \wrt client parity}\label{appendix:cp}
For client parity, we slightly abuse the notation and define the loss over client $i$ as $L^{(i)}(\boldsymbol{w}) := \sum_{(\mathrm{x},\mathrm{y},\mathrm{a},\mathrm{i}): \mathrm{i}=i}\ell (\mathrm{y}, \hat{\mathrm{y}};\boldsymbol{w}) / n^{(i)}$, with $n^{(i)}: = |\{(\mathrm{x},\mathrm{y},\mathrm{a},\mathrm{i}): \mathrm{i}=i\}|$. 
Note that the $L^{(i)}(\boldsymbol{w})$ here is different from the one in Sec.~\ref{appendix:fedfb_dp}. We design the following bi-level optimization problem:
\begin{align}
& \min _{\boldsymbol{\lambda}\in \Lambda} F_{\mathrm{cp}}(\boldsymbol{\lambda}) = \min _{\boldsymbol{\lambda}\in \Lambda} \sum_{i=1}^{I-1} \left(L^{(i)}\left(\boldsymbol{w}_{\boldsymbol{\lambda}}\right)-L^{(0)}\left(\boldsymbol{w}_{\boldsymbol{\lambda}}\right)\right)^2,     \\ 
& \boldsymbol{w}_{\boldsymbol{\lambda}}=\underset{\boldsymbol{w}}{\arg \min } \sum_{i=1}^{I-1} \lambda^{(i)} L^{(i)}(\boldsymbol{w}) + (1-\sum_{i=1}^{I-1} \lambda^{(i)} )L^{(0)}(\boldsymbol{w}). \label{cp:L}
\end{align}
Here
\[\Lambda = \{(\lambda^{(1)},\dots,\lambda^{(I-1)}): 0\leq \lambda^{(i)} \leq 1,\quad \sum_{i=1}^{I-1} \lambda^{(i)} \leq 1 \}.\]

For client parity, we make the following assumption:
\begin{assumption}\label{ass:cp}
$L^{(i)}(\cdot)$ is twice differentiable for all $i=1,\dots,I-1$, and
\[\sum_{i=1}^{I-1} \lambda^{(i)} \nabla^2 L^{(i)}(\boldsymbol{w}_{\boldsymbol{\lambda}})+(1-\sum_{i=1}^{I-1} \lambda^{(i)})\nabla^2 L^{(0)}(\boldsymbol{w}_{\boldsymbol{\lambda}}) \succ 0.\]
\end{assumption}

With the above assumption, the following lemma provides the update rule:
\begin{lemma}\label{lemma:cp}
If Assumption~\ref{ass:cp} holds, then on the direction
\begin{equation}\label{cp:mu}
\boldsymbol{\mu}(\boldsymbol{\lambda}) = (L^{(1)}(\boldsymbol{w}_{\boldsymbol{\lambda}})-L^{(0)}(\boldsymbol{w}_{\boldsymbol{\lambda}}),\dots,L^{(I-1)}(\boldsymbol{w}_{\boldsymbol{\lambda}})-L^{(0)}(\boldsymbol{w}_{\boldsymbol{\lambda}})),    
\end{equation}
we have $\boldsymbol{\mu}(\boldsymbol{\lambda})\cdot \nabla F_{\mathrm{cp}}(\boldsymbol{\lambda})\leq 0$, and the equality holds if and only if $\boldsymbol{\mu}(\boldsymbol{\lambda})=\boldsymbol{0}$.

\end{lemma}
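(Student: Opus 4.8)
The plan is to follow the same implicit-differentiation strategy that produced the descent directions in Lemma~\ref{lemma:dp}, Lemma~\ref{lemma:eo} and Lemma~\ref{lemma:eod}, since $F_{\mathrm{cp}}$ has exactly the same ``sum of squared loss gaps'' structure while $\boldsymbol{w}_{\boldsymbol{\lambda}}$ is again defined through an inner minimization of a $\boldsymbol{\lambda}$-reweighted loss. Throughout I write $d^{(i)} := L^{(i)}(\boldsymbol{w}_{\boldsymbol{\lambda}}) - L^{(0)}(\boldsymbol{w}_{\boldsymbol{\lambda}})$, so that $F_{\mathrm{cp}}(\boldsymbol{\lambda}) = \sum_{i=1}^{I-1}(d^{(i)})^2$ and $\mu^{(i)}(\boldsymbol{\lambda}) = d^{(i)}$ by \eqref{cp:mu}. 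First I would differentiate by the chain rule:
\begin{equation}
\frac{\partial F_{\mathrm{cp}}(\boldsymbol{\lambda})}{\partial\lambda^{(j)}} = 2\sum_{i=1}^{I-1} d^{(i)} \big(\nabla L^{(i)}(\boldsymbol{w}_{\boldsymbol{\lambda}}) - \nabla L^{(0)}(\boldsymbol{w}_{\boldsymbol{\lambda}})\big)^\top \frac{\partial\boldsymbol{w}_{\boldsymbol{\lambda}}}{\partial\lambda^{(j)}}.
\end{equation}

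Second, I would obtain $\partial\boldsymbol{w}_{\boldsymbol{\lambda}}/\partial\lambda^{(j)}$ by implicitly differentiating the first-order stationarity condition of the inner problem~\eqref{cp:L}, namely $\sum_{i=1}^{I-1}\lambda^{(i)}\nabla L^{(i)}(\boldsymbol{w}_{\boldsymbol{\lambda}}) + (1 - \sum_{i=1}^{I-1}\lambda^{(i)})\nabla L^{(0)}(\boldsymbol{w}_{\boldsymbol{\lambda}}) = \boldsymbol{0}$. Differentiating with respect to $\lambda^{(j)}$ yields
\begin{equation}
\nabla L^{(j)}(\boldsymbol{w}_{\boldsymbol{\lambda}}) - \nabla L^{(0)}(\boldsymbol{w}_{\boldsymbol{\lambda}}) + H(\boldsymbol{\lambda})\frac{\partial\boldsymbol{w}_{\boldsymbol{\lambda}}}{\partial\lambda^{(j)}} = \boldsymbol{0},
\end{equation}
where $H(\boldsymbol{\lambda}) := \sum_{i=1}^{I-1}\lambda^{(i)}\nabla^2 L^{(i)}(\boldsymbol{w}_{\boldsymbol{\lambda}}) + (1-\sum_{i=1}^{I-1}\lambda^{(i)})\nabla^2 L^{(0)}(\boldsymbol{w}_{\boldsymbol{\lambda}})$ is the Hessian of the inner objective, which is positive definite (hence invertible) by Assumption~\ref{ass:cp}. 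Thus $\partial\boldsymbol{w}_{\boldsymbol{\lambda}}/\partial\lambda^{(j)} = -H(\boldsymbol{\lambda})^{-1}\big(\nabla L^{(j)}(\boldsymbol{w}_{\boldsymbol{\lambda}}) - \nabla L^{(0)}(\boldsymbol{w}_{\boldsymbol{\lambda}})\big)$.

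Third, I substitute back and contract with $\boldsymbol{\mu}(\boldsymbol{\lambda})$. Writing $g^{(i)} := \nabla L^{(i)}(\boldsymbol{w}_{\boldsymbol{\lambda}}) - \nabla L^{(0)}(\boldsymbol{w}_{\boldsymbol{\lambda}})$ and $\boldsymbol{s} := \sum_{i=1}^{I-1} d^{(i)} g^{(i)}$, I get
\begin{equation}
\boldsymbol{\mu}(\boldsymbol{\lambda})\cdot\nabla F_{\mathrm{cp}}(\boldsymbol{\lambda}) = -2\sum_{i,j=1}^{I-1} d^{(i)} d^{(j)} (g^{(i)})^\top H(\boldsymbol{\lambda})^{-1} g^{(j)} = -2\,\boldsymbol{s}^\top H(\boldsymbol{\lambda})^{-1}\boldsymbol{s} \le 0,
\end{equation}
where the inequality uses $H(\boldsymbol{\lambda})^{-1}\succ 0$. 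This reproduces the same quadratic-form structure appearing at the end of the earlier proofs, so the descent property is established.

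Finally, for the equality clause, the ``if'' direction is immediate, since $\boldsymbol{\mu}(\boldsymbol{\lambda})=\boldsymbol{0}$ gives $\boldsymbol{\mu}\cdot\nabla F_{\mathrm{cp}} = 0$. For the converse, positive definiteness of $H(\boldsymbol{\lambda})^{-1}$ forces $\boldsymbol{s} = \boldsymbol{0}$, i.e. $\sum_{i=1}^{I-1}\mu^{(i)}(\boldsymbol{\lambda})\,g^{(i)} = \boldsymbol{0}$. I expect this last step — passing from $\boldsymbol{s}=\boldsymbol{0}$ back to $\boldsymbol{\mu}=\boldsymbol{0}$ — to be the only genuine obstacle: it requires that the loss-gap gradients $\{g^{(i)}\}_{i=1}^{I-1}$ be linearly independent, which is the client-parity analogue of the non-degeneracy encoded by Assumption~\ref{ass:cp} (and of the generic condition discussed after Assumption~\ref{ass:dp}). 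I would argue that the strict convexity guaranteed by Assumption~\ref{ass:cp} rules out a coincidence of stationary directions across clients, so that $\boldsymbol{s}=\boldsymbol{0}$ indeed entails $\boldsymbol{\mu}=\boldsymbol{0}$; alternatively, one simply states the equality clause under this mild genericity hypothesis, exactly as is done implicitly for the other three fairness notions.
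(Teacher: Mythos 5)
Your proposal is correct and follows essentially the same route as the paper's own proof: chain rule on $F_{\mathrm{cp}}$, implicit differentiation of the inner stationarity condition to get $\partial\boldsymbol{w}_{\boldsymbol{\lambda}}/\partial\lambda^{(j)} = -H(\boldsymbol{\lambda})^{-1}\bigl(\nabla L^{(j)}(\boldsymbol{w}_{\boldsymbol{\lambda}}) - \nabla L^{(0)}(\boldsymbol{w}_{\boldsymbol{\lambda}})\bigr)$, and the resulting quadratic form $-2\,\boldsymbol{s}^\top H(\boldsymbol{\lambda})^{-1}\boldsymbol{s}\le 0$ via Assumption~\ref{ass:cp}. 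In fact you are more careful than the paper on the equality clause: the paper simply asserts that equality forces $\boldsymbol{\mu}(\boldsymbol{\lambda})=\boldsymbol{0}$, whereas you correctly note that passing from $\boldsymbol{s}=\boldsymbol{0}$ to $\boldsymbol{\mu}(\boldsymbol{\lambda})=\boldsymbol{0}$ needs linear independence (or a genericity condition) on the gradient gaps $\{\nabla L^{(i)}-\nabla L^{(0)}\}_{i=1}^{I-1}$, the same implicit non-degeneracy the paper only discusses informally after Assumption~\ref{ass:dp}.
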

Then we update $\boldsymbol{\lambda}^{(t)} = (\lambda^{(1)^{(t)}},\dots, \lambda^{(I-1)^{(t)}})$ as follows:

\textbf{Update rule for client parity:}
\begin{equation}\label{update_rule:cp}
\lambda^{(i)^{(t+1)}} = \lambda^{(i)^{(t)}} + \frac{\alpha_t}{\Vert \boldsymbol{\mu}(\boldsymbol{\lambda}^{(t)})\Vert_2} (L^{(i)}(\boldsymbol{w}_{\boldsymbol{\lambda}^{(t)}})-L^{(0)}(\boldsymbol{w}_{\boldsymbol{\lambda}^{(t)}})) \text{ for } i=1,\dots,I-1.   
\end{equation}

\begin{proof}[\textbf{Proof of Lemma~\ref{lemma:cp}}]
We compute the derivative as
\begin{equation}
    \begin{split}
      \frac{\partial F_{\mathrm{cp}}(\boldsymbol{\lambda})}{\partial \lambda^{(j)}}    & = \Big(2\sum_{i=1}^{I-1} [L^{(i)}(\boldsymbol{w}_{\boldsymbol{\lambda}})-L^{(0)}(\boldsymbol{w}_{\boldsymbol{\lambda}})][\nabla L^{(i)}(\boldsymbol{w}_{\boldsymbol{\lambda}}) - \nabla L^{(0)}(\boldsymbol{w}_{\boldsymbol{\lambda}})] \Big) \frac{\partial \boldsymbol{w}_{\boldsymbol{\lambda}}}{\partial \lambda^{(j)}} . \label{pF_pl}
    \end{split}
\end{equation}
Note that $\boldsymbol{w}_{\boldsymbol{\lambda}}$ is the minimizer to~\eqref{cp:L}, we have
\[\sum_{i=1}^{I-1} \lambda^{(i)} \nabla L^{(i)}(\boldsymbol{w}_{\boldsymbol{\lambda}})   + (1-\sum_{i=1}^{I-1} \lambda^{(i)})\nabla L^{(0)}(\boldsymbol{w}_{\boldsymbol{\lambda}})=0.\]

We take the $\lambda^{(j)}$ derivative to the above equation and have
\[\nabla L^{(j)}(\boldsymbol{w}_{\boldsymbol{\lambda}}) + \sum_{i=1}^{I-1} \lambda^{(i)}  \nabla^2 L^{(i)}(\boldsymbol{w}_{\boldsymbol{\lambda}}) \frac{\partial \boldsymbol{w}_{\boldsymbol{\lambda}}}{\partial \lambda_j}   - \nabla L^{(0)}(\boldsymbol{w}_{\boldsymbol{\lambda}}) + (1-\sum_{i=1}^{I-1} \lambda^{(i)}) \nabla^2 L^{(0)}(\boldsymbol{w}_{\boldsymbol{\lambda}}) \frac{\partial \boldsymbol{w}_{\boldsymbol{\lambda}}}{\partial \lambda_j}      =0    .\]
Thus we get
\begin{equation}\label{pw_pl}
 \frac{\partial \boldsymbol{w}_{\boldsymbol{\lambda}}}{\partial \lambda^{(j)}} = \Big[\sum_{i=1}^{I-1} \lambda^{(i)} \nabla^2 L^{(i)}(\boldsymbol{w}_{\boldsymbol{\lambda}})+(1-\sum_{i=1}^{I-1} \lambda^{(i)})\nabla^2 L^{(0)}(\boldsymbol{w}_{\boldsymbol{\lambda}})\Big]^{-1}   [\nabla L^{(0)}(\boldsymbol{w}_{\boldsymbol{\lambda}}) - \nabla L^{(j)}(\boldsymbol{w}_{\boldsymbol{\lambda}})].        
\end{equation}

Then on the direction given by~\eqref{cp:mu}, we combine~\eqref{pF_pl} and~\eqref{pw_pl} to have
\begin{align}
 &  \boldsymbol{\mu}(\boldsymbol{\lambda}) \cdot \nabla F_{\mathrm{cp}}(\boldsymbol{\lambda})   \\
    & = 2 \Big( \sum_{i=1}^{I-1}[ L^{(i)}(\boldsymbol{w}_{\boldsymbol{\lambda}})-L^{(0)}(\boldsymbol{w}_{\boldsymbol{\lambda}})][\nabla L^{(i)}(\boldsymbol{w}_{\boldsymbol{\lambda}})-\nabla L^{(0)}(\boldsymbol{w}_{\boldsymbol{\lambda}})] \Big) \\
    &\quad \Big[\sum_{i=1}^{I-1} \lambda^{(i)}\nabla^2 L^{(i)}(\boldsymbol{w}_{\boldsymbol{\lambda}})+(1-\sum_{i=1}^{I-1} \lambda^{(i)})\nabla^2 L^{(0)}(\boldsymbol{w}_{\boldsymbol{\lambda}})\Big]^{-1} \\
    &\quad \sum_{i=1}^{I-1} \Big([L^{(i)}(\boldsymbol{w}_{\boldsymbol{\lambda}})-L^{(0)}(\boldsymbol{w}_{\boldsymbol{\lambda}})][\nabla L^{(0)}(\boldsymbol{w}_{\boldsymbol{\lambda}}) - \nabla L^{(i)}(\boldsymbol{w}_{\boldsymbol{\lambda}})] \Big) \leq 0,
\end{align}
where we have used Assumption~\ref{ass:cp}, and the equality holds only when $\boldsymbol{\mu}(\boldsymbol{\lambda}) = \boldsymbol{0}$.

\end{proof}

The Algorithm~\ref{alg:fedfb_cp} gives the full description.
\begin{algorithm2e}
\SetAlgoNoLine
 \SetKwInOut{Input}{input}
 \SetKwBlock{Server}{Server executes:}{ }
 \SetKwBlock{Compute}{ClientUpdate$(i,\boldsymbol{w},\boldsymbol{\lambda})$:}{}
  \SetKwInOut{Output}{output}

 \Server{
 \Input{Learning rate $\{\alpha_t\}_{t\in\mathbb{N}}$\;}
Initialize $\lambda^{(i)}$ as $\frac{n^{(i)}}{n}$ for all $i \in [I]\backslash \set{0}$\;
\For{each iteration $t=1,2,\dots$}{
Clients perform updates\;
$\boldsymbol{w}_{\boldsymbol{\lambda}} \leftarrow$ SecAgg $\set{\boldsymbol{w}^{(i)}}$ for all $i$\;
$\mu^{(i)} \leftarrow L^{(i)} - L^{(0)}, i\in [I]\backslash \{0\}   $\;
\uIf{$t\% k = 0$}{
$\lambda^{(i)}    \leftarrow \lambda^{(i)}    + \frac{\alpha_t}{\Vert \boldsymbol{\mu}\Vert_2} \mu^{(i)},  \text{ for all }i\in [I] \backslash \set{0}$\;
Broadcast $\boldsymbol{\lambda}$ to clients\;
}

Broadcast $\boldsymbol{w}_{\boldsymbol{\lambda}}$ to clients\;
}
\Output{$\boldsymbol{w}_{\boldsymbol{\lambda}}$}
 }{}
 
 \Compute{
 $\boldsymbol{w}^{(i)} \leftarrow $ Gradient descent \wrt objective function $  \indicator{i \neq 0}\lambda^{(i)}L^{(i)}(\boldsymbol{w}) + \indicator{i = 0}(1-\sum_{j=1}^{I-1} \lambda^{(i)} )L^{(0)}(\boldsymbol{w})$\;
Send $\boldsymbol{w}^{(i)}$  to server via a SecAgg protocol;
Send $L^{(i)}(\boldsymbol{w})$ to server\;
 }{}

 \caption{\fedfb{} \wrt Client Parity}
 \label{alg:fedfb_cp}
\end{algorithm2e}

\section{Appendix - Experiments}\label{appendix:experiment}
We continue from Sec.~\ref{sec:experiments} and provide more details on experimental settings, and other supplementary experiment results. 
\subsection{Experiment setting}
The reported statistics are computed on the test set, and we set 10 communication rounds and 30 local epochs for all federated learning algorithms except \agnosticfair{}. We run 300 epochs for other methods. 
For all tasks, we randomly split data into a training set and a testing set at a ratio of 7:3. 
The batch size is set to be 128. 
For all methods, we choose learning rate from $\set{0.001, 0.005, 0.01}$. 
We solve \fedfb{} with sample weight learning rates $\alpha \in \set{0.001, 0.05, 0.08, 0.1, 0.2, 0.5, 1, 2}$ in parallel and select the $\alpha$ which achieves the highest fairness. 
All benchmark models are tuned according to the hyperparameter configuration suggested in their original works. 
We perform cross-validation on the training sets to find the best hyperparameter for all algorithms. 

\subsection{Supplementary experiment results}
Fig.~\ref{fig:figure2_3clients} shows the accuracy versus fairness violation for the 3 clients' cases. We see a clear advantage of \fflfedavg{} over the \ufl{} in both 2 clients' cases and 3 clients' cases. The achievable fairness range of \fflfedavg{} is much wider than that of \ufl{}, though the accuracy of \fedavg{} is not guaranteed when the data heterogeneity increases. 

\begin{figure}
    \centering
    \includegraphics[width=\textwidth]{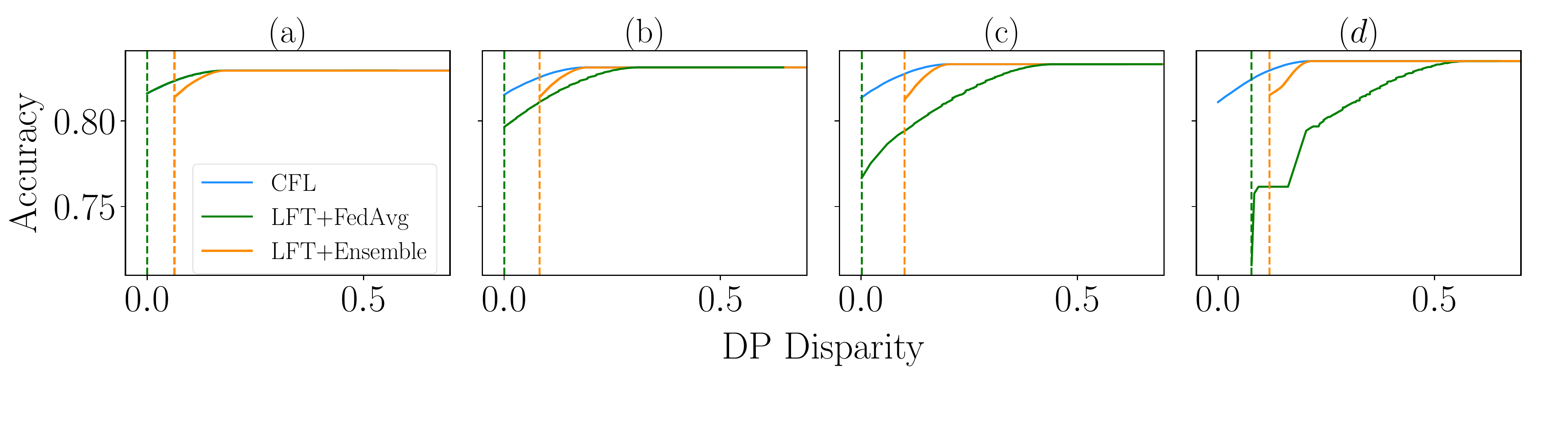}
    \caption{\textbf{Accuracy-Fairness tradeoff curves of CFL, \fflfedavg{}, and \ufl{} for three clients cases.} 
    The data heterogeneity is increasing from left to right. 
    The green dotted vertical line describes the lower bound of unfairness \fflfedavg{} can achieve, and and the orange dotted vertical line describes the lower bound of unfairness \ufl{} can achieve. Here the distribution setting is $\rvx \mid \rva = 0, \rvi = 0 \sim \cN(3, 1), \rvx \mid \rva = 1, \rvi = 0 \sim \cN(5, 1), \rvx \mid \rva = 0, \rvi = 1 \sim \cN(1, 1), \rvx \mid \rva = 1, \rvi = 1 \sim \cN(-1, 1), \rvx\mid \rva = 0, \rvi = 2 \sim \cN(1,1), \rvx \mid \rva=1, \rvi = 2 \sim \cN(2,1) , \rva \mid \rvi = i\sim \Ber(q_i)$ for $i=0,1,2$. 
    The data heterogeneity here is captured by $|q_2-q_0|$. (a) $q_0 = q_1 = q_2 = 0.5$. (b) $q_0 = 0.4, q_1 = 0.5, q_2 = 0.6$. (c): $q_0 = 0.3, q_1 = 0.5, q_2 = 0.7$. (d): $q_0 = 0.2, q_1 = 0.5, q_2 = 0.8$.}
    \label{fig:figure2_3clients}
\end{figure}

We also compare \fedavg{}, \ufl{}, \fflfedavg{}, \fedfb{}, \agnosticfair{} and CFL on logistic regression. Table~\ref{tab:dp_lg} shows that our method outperforms \ufl{} and \fflfedavg{}, while achieving similar performance as \agnosticfair{} and CFL but ensuring higher privacy.


\begin{table}[t]
\caption{\textbf{Comparison of accuracy and fairness in the synthetic, Adult, COMPAS, and Bank datasets \wrt demographic parity (DP) on logistic regression.} 
The implementation of \ufl{}, \fflfedavg{}, and CFL are all based on FB.}
\label{tab:dp_lg}
\begin{center}
\begin{small}
\begin{sc}\tiny{
\begin{tabularx}{\textwidth}{ccc@{\extracolsep{\fill}}c@{\extracolsep{\fill}}c@{\extracolsep{\fill}}c@{\extracolsep{\fill}}c@{\extracolsep{\fill}}c@{\extracolsep{\fill}}c@{\extracolsep{\fill}}c@{\extracolsep{\fill}}r}
\toprule
\multicolumn{2}{c}{Property}&& \multicolumn{2}{c}{Synthetic} & \multicolumn{2}{c}{Adult} & \multicolumn{2}{c}{COMPAS} & \multicolumn{2}{c}{Bank}\\
Private&Fair&Method & Acc.($\uparrow$) & DP Disp.($\downarrow$)& Acc.($\uparrow$) & DP Disp.($\downarrow$)& Acc.($\uparrow$) & DP Disp.($\downarrow$)& Acc.($\uparrow$) & DP Disp.($\downarrow$) \\
\midrule

\cmark&\xmark&\fedavg{}    &.884$\pm$.001 & .419$\pm$.006 & .837$\pm$.007& .144$\pm$.015& .658$\pm$.006& .149$\pm$.022&   .900$\pm$.000& .026$\pm$.001 \\ \midrule
\cmark&\cmark&\ufl{}  &.712$\pm$.198 & .266$\pm$.175 & .819$\pm$006.& .032$\pm$.031& .606$\pm$.018& .089$\pm$.058&   .888$\pm$.005 & .008$\pm$.007 \\
\cmark&\cmark&\fflfedavg{} &.789$\pm$.138 & .301$\pm$.192 & .828$\pm$.002& .098$\pm$.008& .577$\pm$.003& \textbf{.053$\pm$.003}&   .892$\pm$.000& .013$\pm$.000 \\ 
\cmark&\cmark&FedFB (Ours)  &.756$\pm$.001 & \textbf{.085$\pm$.001} & .820$\pm$.000& \textbf{.002$\pm$.001} & .600$\pm$.003& .059$\pm$.003&   .890$\pm$.001& \textbf{.011$\pm$.002} \\
\midrule
\xmark&\cmark&CFL &.709$\pm$.001 & .011$\pm$.001 & .800$\pm$.003& .016$\pm$.003& .587$\pm$.003&.032$\pm$.002&   .883$\pm$.000& .000$\pm$.000 \\
\bottomrule
\end{tabularx} }
\end{sc}
\end{small}
\end{center}
\end{table}

\subsection{Synthetic dataset generation} \label{appendix:synthetic}
We generate a synthetic dataset of 5,000 examples with two non-sensitive attributes $(\rvx_1, \rvx_2)$, a binary sensitive $\rva$, and a binary label $\rvy$. A tuple $(\rvx_1, \rvx_2, \rvy)$ is randomly generated based on the two Gaussian distributions: $(\rvx_1, \rvx_2) \mid \rvy \sim \cN([-2;-2], [10,1;1,3])$ and $(\rvx_1, \rvx_2) \mid \rvy = 1 \sim \cN([2;2], [5,1])$, where $\rvy \sim \Ber(0.6)$. For the sensitive attribute $\rva$, we generate biased data using an unfair scenario $p_{\rvx_1, \rvx_2}((\rvx_1^\prime, \rvx_2^\prime) \mid \rvy = 1)/ [p_{\rvx_1, \rvx_2}((\rvx_1^\prime, \rvx_2^\prime) \mid \rvy = 0) + p_{\rvx_1, \rvx_2}((\rvx_1^\prime, \rvx_2^\prime) \mid \rvy = 1)]$, where $p_{\rvx_1, \rvx_2}$ is the pdf of $(\rvx_1, \rvx_2)$. 

We split the data into three clients in a non-iid way. 
We randomly assign $50\%, 30\%, 20\%$ of the samples from group 0 and $20\%, 40\%, 40\%$ of the samples from group 1 to 1st, 2nd, and 3rd client, respectively. 
To study the empirical relationship between the performance of \fedfb{} and data heterogeneity, we split the dataset into other ratios to obtain the desired level of data heterogeneity. 
To get the dataset with low data heterogeneity, we draw samples from each group into three clients in a ratio of $33\%, 33\%, 34\%$ and $33\%, 33\%, 34\%$. 
For dataset with high data heterogeneity, the ratio we choose is $70\%, 10\%, 20\%$ and $10\%, 80\%, 10\%$.


\subsection{Empirical relationship between accuracy, fairness, and the number of clients}
We investigate the empirical relationship between accuracy, fairness, and the number of clients. 
We generate three synthetic datasets of 3,333, 5,000, and 6,667 samples, and split them into two, three, and four clients, respectively. Table~\ref{tab:dp_num_clients} shows that our method outperforms under the three cases. 

\begin{table}[h]
\caption{\textbf{Comparison of accuracy and fairness in the synthetic datasets with different numbers of clients \wrt demographic parity (DP) on multilayer perceptron. }
The implementation of \ufl{}, \fflfedavg{}, and CFL are all based on FB.}
\label{tab:dp_num_clients}
\begin{center}
\begin{small}
\begin{sc}
\begin{tabularx}{\textwidth}{l@{\extracolsep{\fill}}c@{\extracolsep{\fill}}c@{\extracolsep{\fill}}c@{\extracolsep{\fill}}c@{\extracolsep{\fill}}c@{\extracolsep{\fill}}r}
\toprule
Number of clients & \multicolumn{2}{c}{two clients} & \multicolumn{2}{c}{three clients}& \multicolumn{2}{c}{four clients}\\
Method & Acc.($\uparrow$) & DP Disp.($\downarrow$)& Acc.($\uparrow$) & DP Disp.($\downarrow$)& Acc.($\uparrow$) & DP Disp.($\downarrow$) \\
\midrule

\fedavg{}   & .879$\pm$.004 & .360$\pm$.013& .886$\pm$.003& .406$\pm$.009 &.879$\pm$.003 & .382$\pm$.005 \\ \midrule
\ufl{} &.780$\pm$.090 & .161$\pm$.157 & .727$\pm$.194& .248$\pm$.194 &.720$\pm$.192 & .246$\pm$.180 \\
\fflfedavg{}&.866$\pm$.002 & .429$\pm$.017 &.823$\pm$.102 & .305$\pm$.131 & .608$\pm$.367& .491$\pm$.102\\
FedFB &.679$\pm$.007 & \textbf{.047$\pm$.012} &.725$\pm$.012 & \textbf{.051$\pm$.018} & .705$\pm$.004& \textbf{.005$\pm$.003}\\
\midrule
CFL&.668$\pm$.028 & .063$\pm$.035 & .726$\pm$.009 & .028$\pm$.016 & .670$\pm$.035 & .064$\pm$.020\\
\bottomrule
\end{tabularx}
\end{sc}
\end{small}
\end{center}
\vskip -0.2in
\end{table}

\subsection{Comparison with \fairfed{}}
\begin{table}[!h]
\caption{\textbf{Comparison of accuracy and fairness under the same setting as \citet{ezzeldin2021fairfed}.}
The statistics of \fairfed{} are from \citet{ezzeldin2021fairfed}.}
\label{tab:fairfed2}
\begin{center}
\begin{small}
\begin{sc}
\begin{tabularx}{\textwidth}{l@{\extracolsep{\fill}}l@{\extracolsep{\fill}}c@{\extracolsep{\fill}}c@{\extracolsep{\fill}}c@{\extracolsep{\fill}}c@{\extracolsep{\fill}}c@{\extracolsep{\fill}}c@{\extracolsep{\fill}}c@{\extracolsep{\fill}}c@{\extracolsep{\fill}}c@{\extracolsep{\fill}}c}
\toprule
&\multirow{3}{*}{Method} & \multicolumn{5}{c}{Adult} & \multicolumn{5}{c}{COMPAS}\\ 
& & \multicolumn{5}{c}{Heterogeneity Level $\alpha$} & \multicolumn{5}{c}{Heterogeneity Level $\alpha$}  \\
& & 0.1 & 0.2 & 0.5 & 10 & 5000 & 0.1 & 0.2 & 0.5 & 10 & 5000  \\ \midrule
\multirow{2}{*}{Acc.($\uparrow$)}& FairFed & .775 & .794 & .819& .824 & .824 & .594 & .586 & .608 & .636 & .640  \\
& \textbf{FedFB (Ours)}& .799 & .769 & .772 & .822 & .769 & .668 & .655 & .665 & .672 &  .666 \\ \midrule
\multirow{2}{*}{$|$SPD$|$($\downarrow$)} & FairFed & .\textbf{021} & .037 & .061 & .065 & .065 & .048 & .040 & .072 & .108 & .115  \\
& \textbf{FedFB (Ours)} & .042 & \textbf{.034} & \textbf{.038} & \textbf{.063} & \textbf{.060} & \textbf{.006} & \textbf{.011} & \textbf{.026} & \textbf{.032} & \textbf{.004}\\
\bottomrule
\end{tabularx}
\end{sc} 
\end{small}
\end{center}
\vskip -0.2in
\end{table}
\begin{table}[!h]
\caption{Comparison of accuracy and fairness in the synthetic, Adult, COMPAS, and Bank datasets \wrt{} demographic parity (DP) on multilayer perceptron.}
\label{tab:agnostic}
\begin{center}
\begin{small}
\begin{sc}
\begin{tabularx}{\textwidth}{l@{\extracolsep{\fill}}c@{\extracolsep{\fill}}c@{\extracolsep{\fill}}c@{\extracolsep{\fill}}c@{\extracolsep{\fill}}c@{\extracolsep{\fill}}c@{\extracolsep{\fill}}c@{\extracolsep{\fill}}r}
\toprule
& \multicolumn{2}{c}{Synthetic} & \multicolumn{2}{c}{Adult} & \multicolumn{2}{c}{COMPAS} & \multicolumn{2}{c}{Bank}\\
Method & Acc.($\uparrow$) & DP Disp.($\downarrow$)& Acc.($\uparrow$) & DP Disp.($\downarrow$)& Acc.($\uparrow$) & DP Disp.($\downarrow$)& Acc.($\uparrow$) & DP Disp.($\downarrow$) \\
\midrule
\textbf{FedFB (Ours)} &.725$\pm$.012 & .051$\pm$.018 & .804$\pm$.001& \textbf{.028$\pm$.001}& .616$\pm$.033& \textbf{.036$\pm$.028}&  .883$\pm$.000& \textbf{.000$\pm$.000} \\
\agnosticfair{} & .655$\pm$.028 & \textbf{.028$\pm$.033} & .790$\pm$.012 & .032$\pm$.016 & .612$\pm$.037 & .096$\pm$.046 & .883$\pm$.000 & \textbf{.000$\pm$.000} \\
\bottomrule
\end{tabularx}
\end{sc}
\end{small}
\end{center}
\vskip -0.15in
\end{table}

To make fairer comparison, we follow the exact same setting as \citet{ezzeldin2021fairfed} to re-split Adult and COMPAS dataset, employ $|SPD| = |\bP(\haty = 1 \mid \ra = 0) - \bP(\haty = 1 \mid \ra = 1)|$ as unfairness metric and present the results in Table~\ref{tab:fairfed2}. 
We observe that \fedfb{} achieves higher fairness than \fairfed{} while being robust to data heterogeneity.

\subsection{Comparison with \agnosticfair{}}
Lastly, we compare our method with \agnosticfair{}~\citep{du2020fairfl}, which exchanges the model parameters and the other information after every gradient update to mimic the performance of CFL with \textsc{FairnessConstraint} implementation~\citep{Zafar2017FC}. Table~\ref{tab:agnostic} shows that our \fedfb{} achieves similar performance as \agnosticfair{}, at a much lower cost of privacy.

\end{document}